\newtheorem{example}{Example} % [section]
\newcommand{\sem}{\mbox{\sc sem}}
\newcommand{\stb}{\mbox{\sc stable}}
\newcommand{\sppm}{\mbox{\sc suppmin}}
\newcommand{\spp}{\mbox{\sc supp}}
\newcommand{\caHB}{\mathcal{HB}}
\newcommand{\caC}{\mathcal{C}}
\newcommand{\ac}{{A^c}}
\newcommand{\bc}{{B^c}}
\newcommand{\Yy}{{Y\setminus\{y\}}}
\newcommand{\Xy}{{X\setminus\{y\}}}
\newcommand{\At}{\mathit{At}}
\newcommand{\at}{\mathit{At}}
\newcommand{\Mod}{\mathit{Mod}}
\newcommand{\SE}{\mathit{SE}}
\newcommand{\hd}{\mathit{H}}
\newcommand{\sh}{\mathit{sh}}
\newcommand{\bd}{\mathit{B}}
\newcommand{\nt}{\mathit{not\,}}
\newcommand{\vph}{\varphi}
\newcommand{\lar}{\leftarrow}
\newcommand{\lpor}{\vee}
\newcommand{\st}{\;|\,}
\newcommand{\Pol}{{\mathit{Pol}}}
\newcommand{\PiP}[1]{\Pi^P_{#1}}
\begin{document}
\label{firstpage}

\bibliographystyle{acmtrans}

\long\def\comment#1{}

\title[Hyperequivalence for Modular Logic Programming]
{Relativized Hyperequivalence of Logic Programs for Modular Programming}

\author[M. Truszczy\'nski and S. Woltran]
{MIROS\L AW TRUSZCZY\'NSKI\\
Department of Computer Science, University of Kentucky, Lexington, KY 40506-0046, USA\\
\email{mirek@cs.uky.edu}
\and
STEFAN WOLTRAN\\
Institute for Information Systems 184/2, Technische Universit{\"a}t Wien,\\
Favoritenstrasse 9-11, 1040 Vienna, Austria\\
\email{woltran@dbai.tuwien.ac.at}
}

\pagerange{\pageref{firstpage}--\pageref{lastpage}}
\volume{\textbf{YY} (Z):}
\jdate{Month 20XX}
\setcounter{page}{1}
\pubyear{20XX}

\submitted {21 January 2009}
 \revised {24 June 2009}
 \accepted{21 July 2009}

\maketitle

\begin{abstract}
A recent framework of relativized hyperequivalence of programs offers a
unifying generalization of strong and uniform equivalence. It seems to
be especially well suited for applications in program optimization and
modular programming due to its flexibility that allows us to restrict,
independently of each other, the head and body alphabets in context
programs. We study relativized hyperequivalence for the three
semantics of logic programs given by stable, supported
and supported minimal models. For each semantics, we identify four types
of contexts, depending on whether the head and body alphabets are given
directly or as the \emph{complement} of a given set. Hyperequivalence
relative to contexts where the head and body alphabets are specified
directly has been studied before. In this paper, we establish the
complexity of deciding relativized hyperequivalence with respect to
the three other types of context programs.
\end{abstract}

\begin{keywords}
answer-set programming, strong equivalence, uniform equivalence, relativized
equivalence, stable models, supported models, minimal models, complexity
\end{keywords}

\section{Introduction}

We study variants of relativized hyperequivalence that
are relevant for the development and analysis of disjunctive logic programs
with modular structure. Our main results concern the complexity of 
deciding relativized hyperequivalence for the three major semantics of 
logic programs given by stable, supported and supported minimal 
models.

\emph{Logic programming} with the semantics of stable models, nowadays often 
referred to as \emph{answer-set programming}, is a computational 
paradigm for knowledge representation, as well as modeling and 
solving constraint problems \cite{mt99,nie99,GelLeo02,baral03}. In
recent years, it has been steadily attracting more attention. One 
reason is that answer-set programming is truly declarative. Unlike
in, say, Prolog, the order of rules in programs and the order of
literals in rules have no effect on the meaning of the program. 
Secondly, the efficiency
of the latest tools for processing programs, especially solvers, reached
the level that makes it feasible to use them for problems of practical
importance \cite{%dlv,GebserKNS07a,
asp-c07}. 

It is broadly recognized in software engineering that modular programs
are easier to design, analyze and implement. Hence, essentially all 
programming languages and environments support the development of 
modular programs. Accordingly, there has been much work recently to 
establish foundations of \emph{modular} answer-set programming. One 
line of investigations has focused on the notion of an answer-set 
program \emph{module} %, and an operation of \emph{join} on modules 
\cite{Gelfond02-CL,Janhunen06,Oikarinen06,jotw07}. This work 
builds on ideas for 
compositional semantics of logic programs proposed by %Gaifman and Shapiro~
\citeNS{GaifmanS89} and
encompasses earlier results on stratification %\cite{abw87} 
and 
\emph{program splitting} \cite{litu94}. 

The other main line of research, to which our paper belongs, has
centered on program equivalence and, especially, on the concept of
equivalence for substitution. 
%Informally, 
Programs $P$ and 
$Q$ are \emph{equivalent for substitution} with respect to a class 
$\caC$ of programs called \emph{contexts}, if for every context $R\in
\caC$, $P\cup R$ and $Q\cup R$ have the same stable models. Thus, if a
logic program is the union of programs $P$ and $R$, where $R\in \caC$,
%consists of $P$ and another program $R\in\caC$,
then $P$
can be replaced with $Q$, with the guarantee that the semantics is
preserved no matter what $R$ is (as long as it is in $\caC$)
precisely when $P$ and $Q$ are equivalent for substitution with respect to
$\caC$. If $\caC$ contains the empty program (which is typically the case
%\textbf
{and, in particular, is the case for the families of programs we 
consider in the paper}),
the equivalence for substitution with respect to $\caC$ implies the standard 
equivalence under the 
stable-model semantics.\footnote{Two programs are equivalent under the 
stable-model semantics if they have the same stable models.} \emph{The
converse is not true}. We refer to these 
%strictly
stronger
forms of equivalence collectively as \emph{hyperequivalence}. 

Hyperequivalence with respect to the class of \emph{all}
programs, known more commonly as \emph{strong equivalence}, was
proposed and studied by %Lifschitz, Pearce and Valverde 
\citeNS{lpv01}. That work prompted extensive 
investigations of the concept that resulted in new characterizations 
\cite{lin02,tu03} and connections to certain non-standard logics 
\cite{dejo-hend-03}. %,fl05}, and extensions to the problem of updates 
%\cite{inou-saka-04}. For recent results on the subject we refer to 
%\cite{Ferraris05,LinC07,Wong07}. 
Hyperequivalence with respect to contexts consisting of facts was
studied by \citeNS{ef03}. This version of 
hyperequivalence, known
as \emph{uniform equivalence}, appeared first in the database area in
the setting of DATALOG and query equivalence~\cite{sagi-88}. 
%Some related notions of equivalence were studied in \cite{mahe-88}. 
%The research on 
%answer-set program modules designates some atoms as
%input and output ones. That suggests a variant of the concept of
Hyperequivalence with respect to contexts restricted to a given 
alphabet, or \emph{relativized} hyperequivalence, was
proposed %by Eiter et al. \citeyear{efw04} 
by \citeNS{Woltran04} and \citeNS{inou-saka-04}.
Both uniform equivalence and relativized hyperequivalence were analyzed 
in depth
by \citeNS{efw04}, and 
later generalized by Woltran~\citeyear{Woltran07} to allow contexts that 
use (possibly) different
alphabets for the heads and bodies of rules. That approach offers 
a unifying
framework for strong and uniform equivalence.
Hyperequivalence, in which one compares projections of answer sets
on some designated sets of atoms rather than entire answer sets
has also received some attention \cite{EiterTW05,Oetsch07}.

All those results concern the stable-model semantics of programs. There
has been little work on other semantics, with the work by 
\citeNS{cab06} long being a
notable single exception. Recently however, %Truszczy\'nski and Woltran 
\citeNS{tw08a} introduced and 
investigated relativized hyperequivalence of programs under the
semantics of supported models \cite{cl78} and supported minimal models,
two other major semantics of logic programs. %Truszczy\'nski and Woltran
\citeNS{tw08a} characterized these variants of hyperequivalence and established
the complexity of some associated decision problems.

In this paper, we continue research of relativized hyperequivalence 
under all three major semantics of logic programs. As in earlier works
\cite{Woltran07,tw08a}, we focus on contexts
of the form $\caHB(A,B)$, where $\caHB(A,B)$ stands for the set of all
programs that use atoms from $A$ in the heads and 
atoms from $B$ in the bodies of rules. Our main goal is to establish the
complexity of deciding whether two programs are hyperequivalent 
(relative to a specified semantics) with respect to $\caHB(A,B)$. We
consider the cases when $A$ and $B$ are either specified directly or in
terms of their complement. As we point out in the following section, 
such contexts arise naturally when we design modular logic programs.

\section{Motivation}

We postpone technical preliminaries to the following section. For the
sake of the present section it is enough to say that we focus our study
on finite propositional programs over a fixed countable 
%\textbf
{infinite}
set $\at$ of atoms.
It is also necessary to introduce one piece of notation: $X^c=\at\setminus X$.

To argue that contexts specified in terms of the complement of a finite
set are of interest, let us consider the following scenario. A logic
program is \emph{$A$-defining} if it specifies the definitions
of atoms in $A$. The definitions may be recursive, they may involve 
\emph{interface} atoms, that is, atoms defined in other modules 
%\textbf
{(such 
atoms facilitate importing information form other modules, hence the term
``interface'')}, as well as atoms used locally to represent some needed
auxiliary concepts. 
%\textbf
{Let $P$ be a particular $A$-defining program with 
$L$ as the set of its local atoms.}
For $P$ to behave properly when combined with other programs, these 
``context'' programs must not have any 
occurrences of atoms from $L$ and must have no atoms from $A$ in the 
heads of their rules. In our terminology, these are precisely programs 
in $\caHB((A\cup L)^c,L^c)$.\footnote{$A$-defining programs were introduced
by %Erdogan and Lifschitz 
\citeNS{ErdoganL03a}. However, that work considered more restricted
classes of programs with which $A$-defining programs could be combined.}

The definitions of atoms in $A$ can in general be captured by several
different $A$-defining programs. A key question concerning
such programs is whether they are equivalent. Clearly, two $A$-defining
programs $P$ and $Q$, both using atoms from $L$ to represent local 
auxiliary concepts, should be regarded as equivalent if they behave in 
the same way in the context of any program from $\caHB((A\cup L)^c,L^c)$.
In other words, the notion of equivalence appropriate in our
setting is that of hyperequivalence with respect to $\caHB((A\cup L)^c,L^c)$ under a selected 
semantics (stable, supported or supported-minimal).

\begin{example}
Let us assume that $A=\{a,b\}$ and that $c$ and $d$ are interface atoms
(atoms defined elsewhere). We need a module that works as follows:
%\vspace*{-0.050in}
\begin{enumerate}
\item If $c$ and $d$ are both true, exactly one of $a$ and $b$ must be
true
\item If $c$ is true and $d$ is false, only $a$ must be true
\item If $d$ is true and $c$ is false, only $b$ must be true
\item If $c$ and $d$ are both false, $a$ and $b$ must be false.
\end{enumerate}
%\vspace*{-0.050in}
We point out that $c$ and $d$ may depend on $a$ and $b$ and so, 
in some cases the overall program may have no models of a particular 
type (to be concrete, for a time being we fix attention to stable 
models). 

One way to express 
%these conditions on $a$ and $b$ 
the conditions (1) - (4)
is by means of the 
following $\{a,b\}$-defining program $P$
(in this example we assume that $\{a,b\}$-defining programs do not
use local atoms, that is, $L=\emptyset$):
%\vspace*{-0.05in}
%\begin{quote}

\smallskip
\noindent
\hspace*{0.2in}$a \lar c, \nt b;$\\%\quad
\hspace*{0.2in}$b \lar d, \nt a$.
%\end{quote} 
%\vspace*{-0.05in}

\smallskip
\noindent
Combining $P$ with programs that %simply 
specify facts: $\{c,d\}$, $\{c\}$,
$\{d\}$ and $\emptyset$, it is easy to see that $P$ behaves as required. 
For instance, $P\cup\{c\}$ has exactly one stable model $\{a,c\}$. 

However, $P$ may also be combined with more complex programs. For 
instance, let us consider the 
program $R=\{c \lar \nt d;\; d \lar a, \nt c\}$.
%following program $R$:
%\begin{quote}
%$c \lar \nt d;$\quad
%$d \lar a, \nt c$
%\end{quote}
Here, $d$ can only be true if $a$ is true and $c$ is false, which is
impossible given the way $a$ is defined. 
%But then $b$ must be true, which can only be the case if $c$ and $d$ 
%are both true, a contradiction.
Thus, $d$ must be false and $c$ must be true. 
According to the specifications, there should be exactly one stable model for
$P\cup R$ in this case: $\{a,c,d\}$. It is easy to verify that it is 
indeed the case. 

The specifications for $a$ and $b$ can also be expressed by other 
$\{a,b\}$-defining programs, in particular, by the following program $Q$:
%\vspace*{-0.05in}
%\begin{quote}

\smallskip
\noindent
\hspace*{0.2in}$a \lar c,d, \nt b;$\\%\quad
\hspace*{0.2in}$b \lar c,d, \nt a;$\\%\quad
\hspace*{0.2in}$a \lar c, \nt d;$\\%\quad
\hspace*{0.2in}$b \lar d, \nt c$.
%\end{quote} 
%\vspace*{-0.05in}

\smallskip
\noindent
The question arises whether $Q$ behaves in the same way as $P$ relative 
to programs from $\caHB(\{a,b\}^c,\emptyset^c)=\caHB(\{a,b\}^c,\at)$.
For all contexts considered earlier, it is the case. However, in general, 
it is not so. For instance, 
if $R=\{c\lar\;;\;  d \lar a\}$
%let $R$ be the program:
%\begin{quote}
%$c\lar\;; $\quad
%$d \lar a$
%\end{quote}
then, $\{a,c,d\}$ is a stable model of $P\cup R$, while $Q\cup R$ has no 
stable models. Thus, $P$ and $Q$ cannot be viewed as equivalent 
$\{a,b\}$-defining programs. \hspace*{1em}\proofbox
\end{example}

%We will now briefly mention 
%two more similar scenarios. First, 
A similar scenario
gives rise to a different class of contexts.
%is as follows:
We call a 
program $P$ \emph{$A$-completing} if it completes partial and non-recursive 
definitions of atoms in $A$ given by other modules 
%\textbf
{which, for 
instance, might
specify the base conditions for a recursive definition of atoms in $A$. 
Any program with all atoms in the heads of rules in $A$ can be regarded
as an $A$-completing program. Assuming that $P$ is an $A$-completing 
program (again with $L$ as a set of local atoms), $P$ can be combined with 
any program $R$ that has no occurrences of atoms from $L$ and no occurrences 
of atoms from $A$ in the bodies of its rules. However, atoms from $A$ may 
occur in the heads of rules from $R$, which constitute a partial, 
non-recursive part of the definition of $A$, ``completed'' by $P$. Such 
programs $R$ form precisely the class $\caHB(L^c,(A\cup L)^c)$.} 

%In the last example, we 
Finally, let us consider a situation where we are to express 
partial problem specifications as a logic program. 
In that program, we need to use
%The uncertainty concerns
concepts represented by atoms from some set $A$ that
are defined elsewhere in terms of concepts described by atoms
from some set $B$. Here two programs $P$ and $Q$ expressing these partial 
specifications can serve as each other's substitute precisely when they 
are hyperequivalent with respect to the class of programs $\caHB(A,B)$.
 
These examples demonstrate that
hyperequivalence with respect to context classes $\caHB(A,B)$, where
$A$ and $B$ are either specified directly or in terms of their
complement is of interest.
Our goal is to study the complexity of deciding whether two
programs are hyperequivalent relative to such classes of contexts.

\section{Technical Preliminaries}

\textbf{Basic logic programming  notation and definitions.}
We recall that we consider a fixed countable infinite set of propositional
atoms $\At$. 
\emph{Disjunctive logic programs} (programs, for short) are finite sets of (program) \emph{rules} ---
expressions of the form
\begin{equation}
\label{eqA}
a_1 \lpor\ldots\lpor a_k\lar b_1,\ldots,b_m,\nt c_1 ,\ldots,\nt c_n,
\end{equation}
where $a_i$, $b_i$ and $c_i$ are atoms in $\At$, `$\lpor$' stands for the
disjunction, `,' stands for the conjunction, and $\nt$ is the 
\emph{default} negation. If $k=0$, the rule is a \emph{constraint}. If
$k\leq 1$, the rule is \emph{normal}. %Finally, if $n=0$, the rule is
%\emph{positive}. 
Programs consisting of normal rules %(positive rules, respectively) 
are called \emph{normal}. %(\emph{positive}, respectively).

We often write the rule (\ref{eqA}) as $H\lar B^{+},\nt B^{-}$, where $H=
\{a_1,\ldots,a_k\}$, $B^{+}=\{b_1,\ldots,b_m\}$ and $B^{-}=\{c_1,\ldots,
c_n\}$. We call $H$ the \emph{head} of the rule, and the conjunction
$B^{+},\nt B^{-}$, the \emph{body} of the rule. The sets $B^{+}$ and $B^{-}$
form the positive and negative body of the rule. Given a rule $r$, we 
write $H(r)$, $B(r)$, $B^{+}(r)$ and $B^{-}(r)$ to denote the head, the 
body, the positive body and the negative body of $r$, respectively.
%We also define %$B^\pm(r)=B^{+}(r)\cup B^{-}(r)$ and, 
For a program $P$, we
set $\hd(P)=\bigcup_{r\in P}H(r)$, $B^\pm(P)= \bigcup_{r\in P}
(B^{+}(r)\cup B^{-}(r))$, and $\At(P)=\hd(P)\cup B^\pm(P)$.

For an interpretation $M\subseteq \At$ and a rule $r$, we define 
entailments $M\models B(r)$, $M\models H(r)$ and $M\models r$ in the 
standard way. That is, $M\models B(r)$, if jointly 
$B^{+}(r)\subseteq M$ and $B^{-}(r)\cap M=\emptyset$; 
$M\models H(r)$, if $H(r)\cap M\neq \emptyset$;
and $M\models r$, if $M\models B(r)$  implies $M\models H(r)$.
An interpretation $M\subseteq \At$ is a \emph{model}
of a program $P$ ($M\models P$), if $M\models r$ for every $r\in P$.

The \emph{reduct} of a disjunctive logic program $P$ with respect to
a set $M$ of atoms, denoted by $P^M$, is the program 
$\{ \hd(r)\lar \bd^{+}(r) \mid r\in P,\ M\cap B^{-}(r)=\emptyset\}$.
%consisting of
%positive rules $\hd(r)\lar \bd^{+}(r)$, where $r\in P$ and
%$M\cap B^{-}(r)=\emptyset$. 
A set $M$ of atoms is a \emph{stable model}
of $P$ if $M$ is a minimal model (with respect to inclusion) of $P^M$.

If a set $M$ of atoms is a minimal hitting set of $\{\hd(r)\st r\in 
P, \ %\ \mbox{and}\ \ 
M\models\bd(r)\}$, then $M$ is 
%\textbf
{called} a \emph{supported
model} of $P$ \cite{bradix96jlp1,Inoue98-JLP}.\footnote{{A set $X$ is
a \emph{hitting} set for a family $\mathcal{F}$ of sets if for every
$F\in\mathcal{F}$, $X\cap F\not=\emptyset$.}} In addition,
$M$ is 
%\textbf
{called} a \emph{supported minimal model} of $P$ if it is 
a supported 
model of $P$ and a minimal model of $P$.
%\textbf
{One can check that supported models of $P$ are indeed models of $P$.}

A stable model of a program is a supported model of the program and a 
minimal model of the program. Thus, a stable model of a program is 
a supported minimal model of the program. However, the converse does not 
hold in general.
Supported models of a \emph{normal} logic program $P$ have a useful
characterization in terms of the (partial) one-step provability
operator $T_P$, %\cite{vEK76}, 
defined as follows. For $M \subseteq \At$,
if there is a constraint $r\in P$ such that $M\models \bd(r)$ (that is, 
$M\not\models r$), then $T_P(M)$ is undefined. Otherwise,
\(
T_P(M) = \{\hd(r)\st r\in P,\ %\ \mbox{and}\ \ 
M\models\bd(r)\}.
\)
Whenever we use $T_P(M)$ in a relation such as (proper) inclusion,
equality or inequality, we always implicitly assume that $T_P(M)$ is
defined.

It is well known that $M$ is a model of $P$ if and only if $T_P(M)
\subseteq M$ (which, according to our convention, is an abbreviation
for: $T_P$ is defined for $M$ and $T_P(M) \subseteq M$). 
Similarly, $M$ is a \emph{supported} model of $P$ if $T_P(M)=M$ \cite{ap90} 
(that is, if $T_P$ is defined for $M$ and $T_P(M)= M$).

For a rule $r=a_1\lpor\ldots\lpor a_k\lar \bd$,
%\[
%a_1|\ldots|a_k\lar \bd,
%\]
where $k\geq 2$, a \emph{shift} of $r$ is a normal program
rule of the form
\[
a_i \lar \bd, \nt a_1,\ldots,\nt a_{i-1},\nt a_{i+1},\ldots,\nt a_k,
\]
where $i=1,\ldots,k$. If $r$ is normal, the only \emph{shift} of
$r$ is $r$ itself. A program consisting of all shifts of rules in a
program $P$ is the \emph{shift} of $P$. We denote it by $\sh(P)$. It is
evident that a set $M$ of atoms is a (minimal) model of $P$ if and only
if $M$ is a (minimal) model of $\sh(P)$. It is easy to check that $M$
is a supported (minimal) model of $P$ if and only if it is a supported (minimal) model of
$\sh(P)$.
Moreover, $M$ is a supported model of $P$ if and only if
$T_{\sh(P)}(M)=M$.

\smallskip
\noindent
\textbf{Characterizations of hyperequivalence of programs.}
Let $\caC$ be a class of (disjunctive) logic programs. 
%(we recall that $\emptyset\in\caC$). 
Programs $P$ and
$Q$ are 
\emph{supp-equivalent} (\emph{suppmin-equivalent},
\emph{stable-equivalent}, 
respectively) relative to $\caC$ if for every
program $R\in\caC$, $P\cup R$ and $Q\cup R$ have the same 
supported  (supported minimal, stable, respectively) models. 

In this paper, we are interested in equivalence of all three types 
relative to classes of programs defined by the \emph{head} and 
\emph{body alphabets}. Let $A,B\subseteq\At$. By $\caHB(A,B)$ we 
denote the class of all programs $P$ such that $\hd(P)
\subseteq A$ and $\bd^\pm(P) \subseteq B$. 
Clearly, $\emptyset\in\caHB(A,B)$ holds, for arbitrary $A,B\subseteq \At$.
%\textbf
{Thus, as we noted in the introduction, for each of the semantics 
and every sets $A$ and $B$, the corresponding hyperequivalence implies 
the standard equivalence with respect to that semantics.}
%We denote by $\caHB^n(A,B)$
%the class of all \emph{normal} programs in $\caHB(A,B)$.

When studying supp- and suppmin-equivalence we will restrict ourselves
to the case of normal programs. Indeed, disjunctive programs $P$ and 
$Q$ are supp-equivalent (suppmin-equivalent, respectively) with
respect to $\caHB(A,B)$ if and only if normal programs $\sh(P)$ and
$\sh(Q)$ are supp-equivalent (suppmin-equivalent, respectively) with
respect to $\caHB(A,B)$ \cite{tw08a}. Thus, from now on whenever we 
consider supp- and suppmin-equivalence, we implicitly assume that 
programs under comparison are normal. In particular, we use that convention
in the definition below and the subsequent theorem.

%The 
%following results come from \cite{tw08a}. 

For supp-equivalence and suppmin-equivalence, we need the set $\Mod_A(P)$,
%following concept 
defined by %Truszczy\'nski and Woltran~
\citeNS{tw08a}.
Given a program $P$, and a set $A\subseteq\At$, 
\[
\Mod_A(P)=\{Y\subseteq\At\st Y\models P\ \mbox{and} \ Y\setminus T_P(Y)
\subseteq A\}.
\]
%\textbf
{\citeNS{tw08a} explain that elements of $\Mod_A(P)$ can be viewed as
\emph{candidates} for becoming supported models of an extension
of $P$ by some program $R\in \caHB(A,B)$. Indeed, each such
candidate interpretation $Y$ has to be a classical model of $P$,
as otherwise it cannot be a supported model, no matter how $P$ is
extended. Moreover, the elements from $Y\setminus T_P(Y)$ have to be
contained in $A$, as otherwise programs from $\caHB(A,B)$ cannot close
this gap.} The set $\Mod_A(P)$ is the key to the characterization of 
supp-equivalence. %~\cite{tw08a}.

\begin{theorem}
\label{nlp1}
Let $P$ and $Q$ be programs, $A\subseteq
\At$, and $\cal C$ a class of programs such that 
%$\caHB^n(A,\emptyset) \subseteq\caC\subseteq \caHB^n(A,\at)$. Then,
$\caHB(A,\emptyset) \subseteq\caC\subseteq \caHB(A,\at)$. Then,
$P$ and $Q$ are supp-equivalent relative to $\caC$ 
if and only if
$\Mod_A(P)=\Mod_A(Q)$ and for every $Y\in\Mod_A(P)$, $T_P(Y)=T_Q(Y)$.
\end{theorem}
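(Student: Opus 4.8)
The plan is to prove both directions of the biconditional, using the characterization of supported models via the one-step provability operator ($M$ is a supported model of $P$ iff $T_P(M)=M$) and the fact that $\emptyset\in\caC$ so that $P$ and $Q$ themselves must already agree on supported models. The key observation driving the whole argument is that adding a context $R\in\caHB(A,B)$ with $\hd(R)\subseteq A$ changes $T_{P\cup R}$ only by contributions from $R$, and these contributions land inside $A$; this is exactly why the condition $Y\setminus T_P(Y)\subseteq A$ in the definition of $\Mod_A(P)$ captures the ``candidates'' that a context can complete into a supported model.

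First I would prove the ``if'' direction (soundness). Assume $\Mod_A(P)=\Mod_A(Q)$ and $T_P(Y)=T_Q(Y)$ for every $Y\in\Mod_A(P)$. Fix an arbitrary $R\in\caC$ and a set $M$ of atoms, and I would show $M$ is a supported model of $P\cup R$ iff it is a supported model of $Q\cup R$. By the characterization, $M$ is supported for $P\cup R$ iff $T_{P\cup R}(M)=M$. Since $\hd(R)\subseteq A$ and the rules of $P$ and $R$ contribute disjointly to one-step provability, I would establish the decomposition $T_{P\cup R}(M)=T_P(M)\cup T_R(M)$ (taking care of the constraint/undefinedness convention: $T_{P\cup R}(M)$ is defined iff both $T_P(M)$ and $T_R(M)$ are). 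The crucial step is to show that if $T_{P\cup R}(M)=M$ then $M\in\Mod_A(P)$: indeed $M\models P$ (as $M$ is a model of the larger program $P\cup R$), and $M\setminus T_P(M)\subseteq T_R(M)\subseteq\hd(R)\subseteq A$, which is precisely the membership condition. Given $M\in\Mod_A(P)=\Mod_A(Q)$ and $T_P(M)=T_Q(M)$, the identity $T_{P\cup R}(M)=T_P(M)\cup T_R(M)=T_Q(M)\cup T_R(M)=T_{Q\cup R}(M)$ yields that $M$ is supported for $P\cup R$ iff it is supported for $Q\cup R$.

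Next I would prove the ``only if'' direction (completeness) by contraposition, using the assumed lower bound $\caHB(A,\emptyset)\subseteq\caC$ to have enough contexts at our disposal. Suppose the right-hand side fails. There are two cases. If there is some $Y\in\Mod_A(P)\setminus\Mod_A(Q)$ (or symmetrically in $Q$), I would construct a witnessing context $R\in\caHB(A,\emptyset)$ — a program of facts drawn from $A$ — designed so that $Y$ becomes a supported model of $P\cup R$ but not of $Q\cup R$. The natural choice is $R=\{a\lar\ \st a\in Y\setminus T_P(Y)\}$, which fills exactly the gap for $P$ at $Y$; one checks $T_{P\cup R}(Y)=Y$ while the discrepancy in $\Mod_A$ or in the $T$-values forces $T_{Q\cup R}(Y)\neq Y$. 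If instead $\Mod_A(P)=\Mod_A(Q)$ but $T_P(Y)\neq T_Q(Y)$ for some $Y\in\Mod_A(P)$, the same style of fact-context pinpoints a supported model of one program that is not a supported model of the other.

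I expect the main obstacle to be the completeness direction, specifically verifying that the fact-based witness $R$ does what we want without accidentally repairing the very discrepancy we are trying to expose. One must argue carefully that, for the constructed $R$, $Y$ is a supported model on the $P$-side (checking both $Y\models P\cup R$ and $T_{P\cup R}(Y)=Y$, which relies on $Y\setminus T_P(Y)\subseteq A$ guaranteeing $R\in\caHB(A,\emptyset)$) while any attempt to obtain $T_{Q\cup R}(Y)=Y$ fails — handling separately the subcase $Y\notin\Mod_A(Q)$ (where either $Y\not\models Q$ or $Y\setminus T_Q(Y)\not\subseteq A$, so no context of facts from $A$ can close the gap) and the subcase of unequal $T$-values. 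The decomposition $T_{P\cup R}=T_P\cup T_R$ and its interaction with the undefinedness convention for constraints is the technical glue throughout, and it is where I would be most careful.
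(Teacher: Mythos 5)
The paper does not actually prove Theorem~\ref{nlp1}; it imports it from \citeNS{tw08a} as a known characterization, so there is no in-paper proof to compare against. Your reconstruction is, in essence, the standard argument behind that result, and it is sound: the decomposition $T_{P\cup R}(M)=T_P(M)\cup T_R(M)$ together with $\hd(R)\subseteq A$ gives the soundness direction exactly as you describe, and fact-contexts from $\caHB(A,\emptyset)$ give completeness. One point you flag but should make explicit: in the case $\Mod_A(P)=\Mod_A(Q)$ with $T_P(Y)\neq T_Q(Y)$, the context $R=\{a\lar{}\mid a\in Y\setminus T_P(Y)\}$ fails to separate the programs when $T_P(Y)\subset T_Q(Y)$, since then $T_{Q\cup R}(Y)=T_Q(Y)\cup(Y\setminus T_P(Y))=Y$ still holds; you must then use the symmetric context built from $Y\setminus T_Q(Y)$, one of the two always working because the two values of $T$ on $Y$ are distinct subsets of $Y$. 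A second, cosmetic caveat: contexts in $\caC$ may be disjunctive while $T_R$ is defined for normal programs, so one should note that replacing $R$ by $\sh(R)$ preserves supported models and head atoms, which legitimizes the use of $T_R$ throughout.
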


To characterize suppmin-equivalence, 
%for %normal
%programs $P$ and $Q$, 
we use the set $\Mod_A^B(P)$ \cite{tw08a}, which consists of all pairs
$(X,Y)$ such that 
\begin{enumerate}
\item $Y\in\Mod_A(P)$
\item $X\subseteq Y|_{A\cup B}$
\item for each $Z \subset Y$
such that
$Z|_{A\cup B} = Y|_{A\cup B}$, $Z\not\models P$
\item for each $Z \subset Y$ such that $Z|_B = X|_B$
and $Z|_A  \supseteq X|_A$, $Z\not\models P$
\item if $X|_B=Y|_B$, then $Y\setminus T_P(Y)\subseteq X$.
\end{enumerate}

\begin{theorem}\label{thm:general}
Let $A,B\subseteq \At$ and let $P,Q$ be programs. 
Then, $P$ and $Q$ are suppmin-equivalent relative to $\caHB(A,B)$
if and only if
$\Mod^B_A(P)=\Mod^B_A(Q)$ and for every $(X,Y)\in\Mod^B_A(P)$,
$T_P(Y)|_{B}=T_Q(Y)|_{B}$.
\end{theorem}

Relativized stable-equivalence of programs was characterized by %Woltran 
\citeNS{Woltran07}. We define $\SE_A^B(P)$ to consist of all pairs
$(X,Y)$, where $X,Y\subseteq\at$, such that:\footnote{We use a slightly different presentation
than the one given by \citeNS{Woltran07}. It is equivalent to the original one.} 
\begin{enumerate}
\item $Y\models P$
\item $X=Y$, or jointly $X\subseteq Y|_{A\cup B}$ and $X|_A\subset Y|_A$
\item for each $Z \subset Y$
such that
%$Z|_{A\cup B} = Y|_{A\cup B}$, $Z\not\models P^Y$
$Z|_{A} = Y|_{A}$, $Z\not\models P^Y$
\item for each $Z \subset Y$ such that 
$Z|_B \subseteq X|_B$ and $Z|_A  \supset X|_A$, or 
$Z|_B \subset X|_B$ and $Z|_A \supseteq X|_A$, 
$Z\not\models P^Y$
%\item for each $X'\subset Y$, such that $X'|_A = X|_A$ and $X'|_B \subset X|_B$,
%$X'\not\models P^Y$;
%\item for each $X'\subset Y$, such that $X'|_B = X|_B$ and $X|_B = X|_B$,
%$X'\not\models P^Y$;
%\item $X' \models P^Y$ for some $X'\subseteq Y$ with $X|_{A\cup B}=X'|_{A\cup B}$
\item there is $Z\subseteq Y$ such that $X|_{A\cup B}=Z|_{A\cup B}$ and $Z
\models P^Y$.
\end{enumerate}

\begin{theorem}\label{thm:stable}
Let $A,B\subseteq \At$ and let $P,Q$ be programs. 
%(normal programs, respectively). 
Then, $P$ and $Q$ are stable-equivalent relative to $\caHB(A,B)$ 
%($\caHB^n(A,B)$, respectively) 
if and only if $\SE^B_A(P)
=\SE^B_A(Q)$.
\end{theorem}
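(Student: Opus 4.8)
The plan is to prove Theorem~\ref{thm:stable} by establishing the equivalence through a careful analysis of stable models of $P\cup R$ for $R\in\caHB(A,B)$. The central object is the set $\SE_A^B(P)$, whose pairs $(X,Y)$ should be understood as relativized $\SE$-models: $Y$ plays the role of a candidate stable model of an extension, and $X$ encodes, on the relevant alphabet $A\cup B$, a potential witness to non-minimality of the reduct that the context $R$ could exploit. First I would make this reading precise by proving a characterization lemma: $M$ is a stable model of $P\cup R$ if and only if $M\models P\cup R$ and there is no proper ``counter-model'' of the reduct $(P\cup R)^M$ below $M$. Since $(P\cup R)^M=P^M\cup R^M$ and $R$ ranges over $\caHB(A,B)$, the only part of $M$ that $R$ can constrain lives on $A\cup B$; this is exactly what conditions (2)--(4) in the definition of $\SE_A^B$ isolate, while condition~(5) guarantees the existence of an $R$-consistent witness on $A\cup B$.

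\smallskip\noindent
For the forward direction (soundness), I would argue the contrapositive: assuming $\SE_A^B(P)\neq\SE_A^B(Q)$, I construct a specific context $R\in\caHB(A,B)$ on which $P\cup R$ and $Q\cup R$ differ in their stable models. Without loss of generality take a pair $(X,Y)\in\SE_A^B(P)\setminus\SE_A^B(Q)$. The idea is to build $R$ from facts and rules over $A\cup B$ that ``pin down'' the trace $Y|_{A\cup B}$ as a model while using the $X$-data to block or permit the minimality check. Concretely, one adds rules forcing the atoms of $Y|_{A\cup B}$ and constraints keyed to $X|_B$ so that $Y$ survives as a stable model of $P\cup R$ precisely because $(X,Y)\in\SE_A^B(P)$, whereas the failure of one of conditions (1)--(5) for $Q$ causes $Y$ either not to be a model of $Q\cup R$ or not to be minimal for $(Q\cup R)^Y$. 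I would treat the case $X=Y$ (minimality failure detected directly) separately from the case $X\subsetneq Y|_{A\cup B}$ with $X|_A\subsetneq Y|_A$ (a strictly smaller competitor on the context alphabet), since these correspond to different ways the context can distinguish the programs.

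\smallskip\noindent
For the backward direction (completeness), I would assume $\SE_A^B(P)=\SE_A^B(Q)$ and show that for every $R\in\caHB(A,B)$, $P\cup R$ and $Q\cup R$ have identical stable models. Fix such an $R$ and suppose $M$ is a stable model of $P\cup R$. Using the characterization lemma, I extract from $M$ and $R$ a pair $(X,Y)$ with $Y=M$ and $X$ determined by the minimal $R$-witness on $A\cup B$, and verify that $(X,Y)\in\SE_A^B(P)$; here conditions (3) and (4) encode that no competitor beats $M$ on the $P$-part of the reduct once $R$'s contribution is fixed, and condition~(5) records the $R$-feasible witness. Since the sets coincide, $(X,Y)\in\SE_A^B(Q)$, and I then reverse the construction to conclude that $M$ is also a stable model of $Q\cup R$, using that $R^M$ and the trace $Y|_{A\cup B}$ are shared data independent of whether the fixed part is $P$ or $Q$.

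\smallskip\noindent
The main obstacle I anticipate is the correct handling of the interaction between the two alphabets $A$ and $B$ in conditions (2) and (4), together with the asymmetry in condition~(4) (the two disjoined cases $Z|_B\subseteq X|_B,\ Z|_A\supset X|_A$ versus $Z|_B\subset X|_B,\ Z|_A\supseteq X|_A$). Because heads of context rules live in $A$ and bodies in $B$, a context can independently manipulate what is \emph{derivable} (via heads in $A$) and what is \emph{testable} (via bodies in $B$), so the counterexample context in the forward direction must be engineered to respect both restrictions simultaneously while still forcing the desired discrepancy. Getting the reduct bookkeeping right---ensuring that the witness $Z$ in condition~(5) and the blocked competitors in (3)--(4) transfer faithfully between $P\cup R$ and the abstract conditions on $P$ alone---will be the delicate part, and I expect it to require a case analysis on whether $X|_B=Y|_B$, mirroring the special role that equality on $B$ plays in the supported-model characterization (condition~(5) of $\Mod_A^B$).
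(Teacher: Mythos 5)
You should first be aware that the paper does not prove Theorem~\ref{thm:stable} at all: it is imported from Woltran (2008) (see the footnote remarking only that the presentation of $\SE_A^B$ is ``slightly different'' from but equivalent to the original), so there is no in-paper proof to compare your attempt against. Judged on its own terms, your text is a proof \emph{plan} rather than a proof, and the gap is that every step carrying actual mathematical content is deferred. The ``characterization lemma'' you propose is just a restatement of the definition of a stable model as a minimal model of the reduct, so it does no work. In the forward direction, the entire argument rests on exhibiting a concrete context $R\in\caHB(A,B)$ built from a pair $(X,Y)\in\SE_A^B(P)\setminus\SE_A^B(Q)$ and verifying that $P\cup R$ and $Q\cup R$ differ on stable models; you describe $R$ only as ``rules forcing the atoms of $Y|_{A\cup B}$ and constraints keyed to $X|_B$'' without writing it down, and without the case analysis on \emph{which} of the conditions (1)--(5) fails for $Q$ (failure of (1), of (3), of (4), or of (5) each requires a different argument for why the same $R$ witnesses the discrepancy). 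In the backward direction, the delicate points are likewise asserted rather than checked: e.g.\ that a countermodel $Z\subset M$ of $(Q\cup R)^M$ yields a pair $(Z|_{A\cup B}$-trace$,M)\in\SE_A^B(Q)$, and that condition (5) for $P$ then produces a $Z'\subseteq M$ with $Z'\models R^M$ (which uses that satisfaction of $R^M$ depends only on the trace over $A\cup B$) and with $Z'\neq M$ (which uses $X|_A\subset M|_A$ from condition (2)). You explicitly name the interaction of the two alphabets in conditions (2) and (4) as the anticipated obstacle, but the theorem is true or false precisely according to whether that obstacle can be overcome, so flagging it does not discharge it.

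A smaller point: your closing analogy to ``the special role that equality on $B$ plays \dots\ (condition (5) of $\Mod_A^B$)'' imports a suppmin-specific phenomenon; condition (5) of $\SE_A^B$ has no clause conditioned on $X|_B=Y|_B$, and the case split you would actually need is the one between $X=Y$ and $X\subseteq Y|_{A\cup B}$ with $X|_A\subset Y|_A$ from condition (2), together with the two disjuncts of condition (4). The overall two-directional strategy is the standard and correct one (and is, in outline, what Woltran's proof does), but as submitted the proposal establishes nothing beyond that outline.
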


\smallskip
\noindent
\textbf{Decision problems.}
We are interested in problems of deciding hyperequivalence relative 
to classes of programs 
of the
form $\caHB(A',B')$, where $A'$ and $B'$ stand either for finite sets
or for complements of finite sets. In the former case, the set is given
\emph{directly}. In the latter, it is specified by means of its finite
\emph{complement}. Thus, we obtain the classes of \emph{direct-direct}, 
\emph{direct-complement}, \emph{complement-direct} and 
\emph{complement-complement} decision problems. We denote them using 
strings of the 
form $\sem_{\delta,\varepsilon}(\alpha,\beta)$, where
\begin{enumerate}
\item 
$\sem$ stands for $\spp$, $\sppm$ or $\stb$ and identifies the 
semantics relative to which we define hyperequivalence;
\item
$\delta$ and $\varepsilon$ stand for $d$ or $c$ (direct and
complement, respectively), and specify one of the four classes of 
problems mentioned above;
\item 
$\alpha$ is either $\cdot$ or $A$, where $A\subseteq\at$ is finite.
If $\alpha=A$, then $\alpha$ specifies a \emph{fixed} alphabet for the 
heads of rules in context programs: either $A$ or the complement $\ac$ 
of $A$, depending on whether $\delta=d$ or $c$. The 
parameter $A$ does not belong to and does not vary with input. If 
$\alpha = \cdot$, then the specification $A$ of the head alphabet is part of 
the input and defines it as $A$ or $\ac$, again according to
%the value of 
$\delta$;
\item
$\beta$ is either $\cdot$ or $B$, where $B\subseteq\at$ is 
finite. It obeys the same conventions as $\alpha$ but defines 
the body alphabet according to the value of $\varepsilon$.
\end{enumerate}

For instance, $\sppm_{d,c}(A,\cdot)$, where $A\subseteq\at$ is finite, 
stands for the following problem: given programs $P$ and $Q$, and a set 
$B$, decide whether $P$ and $Q$ are suppmin-equivalent with respect to
$\caHB(A,B^c)$. 
Similarly, $\stb_{c,c}(\cdot,\cdot)$ denotes the 
following problem: given programs $P$ and $Q$, and sets $A$ and 
$B$, decide whether $P$ and $Q$ are stable-equivalent with respect to 
$\caHB(A^c, B^c)$.
With some abuse of notation, we often talk about ``the problem 
$\sem_{\delta,\varepsilon}(A,B)$'' as a shorthand for ``an arbitrary problem 
of the form $\sem_{\delta,\varepsilon}(A,B)$ with fixed finite sets $A$ and $B$'';
likewise we do so for $\sem_{\delta,\varepsilon}(\cdot,B)$ and $\sem_{\delta,\varepsilon}(A,\cdot)$.

As we noted, for supp- and suppmin-equivalence, there is no essential
difference
between normal and disjunctive programs. For stable-equivalence, 
allowing disjunctions in the heads of rules affects the complexity. 
Thus, in the case of stable-equivalence, we distinguish versions
of the problems $\stb_{\delta,\varepsilon}(\alpha,\beta)$, where %both 
the input programs %and the programs used as contexts 
are normal.\footnote{%
As demonstrated by \citeNS{Woltran07}, we can also restrict the 
programs used as contexts to normal ones, 
as that makes no difference.} %, cf.\ \cite{Woltran07}.}
We denote these problems by $\stb^n_{\delta,\varepsilon}(\alpha,\beta)$.

Direct-direct problems for the semantics of supported and supported minimal
models were considered earlier \cite{tw08a}, and their complexity was fully 
determined there. The complexity of problems $\stb_{d,d}(\cdot,\cdot)$, was 
also established before \cite{Woltran07}. Problems similar to 
$\stb_{c,c}(A,A)$ were already studied by %Eiter et al. 
\citeNS{efw04}.
In this paper, we complete the results on the complexity of problems
$\sem_{\delta,\varepsilon}(\alpha,\beta)$ for all three semantics.
In particular, we establish the complexity of the problems with at least one of 
$\delta$ and $\varepsilon$ being equal to $c$. 
 
The complexity of problems involving the complement of 
$A$ or $B$ is not a straightforward consequence of the results on 
direct-direct problems. 
%This is especially true in the case of membership arguments. 
In the
direct-direct problems, the class of context programs is 
%restricted, in fact 
essentially finite, as the head and body alphabets
for rules are finite. It is no longer the case 
for the three remaining problems, where at least one of the alphabets is 
infinite and so, the class of contexts is infinite, as well.

We note that when we change $A$ or $B$ to $\cdot$ in the problem 
specification, the resulting problem is at least as hard as the original 
one. Indeed for each such pair of problems, there are straightforward
%\textbf
{polynomial-time} reductions from one to the other. We illustrate
 these relationships
in Figure~\ref{fig1}. 
%the problems remain at least as hard by observing the following diagram 
%%demonstrating the existence
%of some obvious reductions (for any selection of the semantics and values
%for $\delta$ and $\varepsilon$). 
%%The arrows between problems indicate that there is an efficient 
%%reduction of the ``arrowtail'' problem to the ``arrowhead'' problem.
%\textbf
{Each arrow indicates that the ``arrowtail'' problem can be reduced
in polynomial time to the ``arrowhead'' one.}
Consequently, if there is a path from a problem $\Pi$ to the
problem $\Pi'$ in the diagram, $\Pi'$ is at least as hard as $\Pi$ and
$\Pi$ is at most as hard as $\Pi'$. We use this observation in proofs of
all complexity results.

%\vspace*{-0.2in}
\begin{figure}[ht]
\centerline{\includegraphics[scale=0.4]{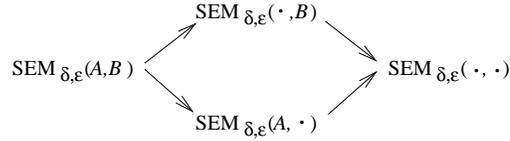}}
\caption{A simple comparison of the hardness of problems}
\label{fig1}
\end{figure}

%\textbf
{Finally, we note that throughout the paper, we write $\Pol$ 
instead of the more common P to denote the class of all problems that
can be solved by deterministic polynomial-time algorithms. As decision
problems we consider typically refer to a program $P$, we want to avoid
the ambiguity of using the same symbol in two different meanings.}
%\vspace*{-0.3in}

\section{Supp-equivalence}

As the alphabet for the bodies of context programs
plays no role in supp-equi\-valence (cf.\ Theorem~\ref{nlp1}), 
the problems 
$\spp_{d,c}(A,\beta)$ and
$\spp_{d,c}(\cdot,\beta)$
coincide with the problems
$\spp_{d,d}(A,\beta)$ and
$\spp_{d,d}(\cdot,\beta)$,
respectively, whose complexity was shown to be coNP-complete \cite{tw08a}.
For the same reason, problems 
$\spp_{c,d}(A,\beta)$ and
$\spp_{c,d}(\cdot,\beta)$
coincide with 
$\spp_{c,c}(A,\beta)$ and
$\spp_{c,c}(\cdot,\beta)$.
Thus, to complete the complexity picture for problems 
$\spp_{\delta,\epsilon}(\alpha,\beta)$, it suffices to focus on
%the former two, i.e., 
$\spp_{c,d}(A,\beta)$ and $\spp_{c,d}(\cdot,\beta)$.

First, we prove an upper bound on the complexity of the problem 
$\spp_{c,d}(\cdot,\cdot)$. The proof depends on two lemmas.

\begin{lemma}
\label{newLemma1}
Let $P$ be a program and $A$ and $Y$ sets of atoms. Then, $Y\in
\Mod_\ac(P)$ if and only if $Y'\in\Mod_\ac(P)$, where $Y'=Y\cap(\At(P)\cup A)$.
\end{lemma}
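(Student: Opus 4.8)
The plan is to show that the atoms dropped in passing from $Y$ to $Y'$ are irrelevant to both conditions defining membership in $\Mod_\ac(P)$. Write $D=Y\setminus Y'$ for the set of dropped atoms. By the definition of $Y'$, we have $D = Y\cap(\At(P)\cup A)^c$, so every atom of $D$ lies outside $\At(P)$ and outside $A$; in particular $D\cap\At(P)=\emptyset$ and $D\subseteq\ac$. These two facts are the whole engine of the argument.

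First I would observe that $D$ cannot affect classical satisfaction or one-step provability, because no atom of $D$ occurs in any rule of $P$ (all atoms appearing in $P$ lie in $\At(P)$). For every rule $r\in P$, the tests $Y\models B(r)$ and $Y\models H(r)$ depend only on $Y\cap\At(P)$, and $Y\cap\At(P)=Y'\cap\At(P)$ since $D\cap\At(P)=\emptyset$. Hence $Y\models B(r)\iff Y'\models B(r)$ and $Y\models H(r)\iff Y'\models H(r)$ for each $r$, which gives $Y\models P\iff Y'\models P$. Moreover, the set of rules whose body is satisfied is the same under $Y$ and under $Y'$, so $T_P(Y)$ is defined exactly when $T_P(Y')$ is, and then $T_P(Y)=T_P(Y')$. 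I will write $T$ for this common value and note $T\subseteq\hd(P)\subseteq\At(P)$, whence $D\cap T=\emptyset$.

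It then remains to verify that the second defining condition, $Y\setminus T_P(Y)\subseteq\ac$, transfers. Since $D\cap T=\emptyset$, we have $Y\setminus T=(Y'\setminus T)\cup D$, and because $D\subseteq\ac$ (so $D\cap A=\emptyset$) this set meets $A$ in exactly the same places as $Y'\setminus T$; that is, $(Y\setminus T)\cap A=(Y'\setminus T)\cap A$. As $Y\setminus T\subseteq\ac$ is equivalent to $(Y\setminus T)\cap A=\emptyset$ (and likewise for $Y'$), the condition holds for $Y$ iff it holds for $Y'$. Combined with the equivalence $Y\models P\iff Y'\models P$ from the previous step, this yields $Y\in\Mod_\ac(P)\iff Y'\in\Mod_\ac(P)$. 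There is no serious obstacle in this proof; the only point demanding care is the final bookkeeping, namely recognizing that the dropped atoms all lie in $\ac$ and therefore can never themselves cause a violation of $Y\setminus T_P(Y)\subseteq\ac$, so that this condition is insensitive to their presence.
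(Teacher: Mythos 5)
Your proof is correct and rests on exactly the same two observations as the paper's: atoms outside $\At(P)$ affect neither satisfaction of $P$ nor $T_P$ (so $Y\models P\iff Y'\models P$ and $T_P(Y)=T_P(Y')$), and the dropped atoms all lie in $\ac$. The only cosmetic difference is that you derive the symmetric identity $(Y\setminus T_P(Y))\cap A=(Y'\setminus T_P(Y'))\cap A$ and get both directions at once, whereas the paper argues the two implications separately (one by monotonicity, one by a case split on whether $y\in Y'$).
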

\begin{proof}
%\textbf
{First, we note that atoms that do not occur in $P$ have no effect
on whether an interpretation satisfies the body of a rule in $P$.}
Thus, $T_P(Y)=T_P(Y')$. If $Y\in\Mod_\ac(P)$, then
$Y\models P$ and $Y\setminus T_P(Y)\subseteq\ac$. The former
property implies that $Y'\models P$ 
%\textbf
{(as before, atoms that do not 
occur in $P$ have no effect on whether an interpretation is a model of $P$
or not)}. Since $Y'\setminus T_P(Y')=
Y'\setminus T_P(Y)\subseteq Y\setminus T_P(Y)$, the latter one implies
that $Y'\setminus T_P(Y')\subseteq\ac$. Thus, $Y'\in \Mod_\ac(P)$.

Conversely, let $Y'\in \Mod_\ac(P)$. Then $Y'\models P$ and,
consequently, $Y\models P$ (by the comment made above). Moreover, 
we also have $Y'\setminus
T_P(Y')\subseteq\ac$. Let $y\in Y\setminus T_P(Y)$. If $y\notin Y'$
then, as $y\in Y$ and $Y'=Y\cap(\at(P)\cup A)$, $y\notin A$, that is,
$y\in\ac$. If $y\in Y'$, then $y\in
Y'\setminus T_P(Y')$ (we recall that $T_P(Y)=T_P(Y')$). Hence, $y\in\ac$
in this case, too. It follows that $Y\setminus T_P(Y)\subseteq\ac$
and so, $Y\in\Mod_\ac(P)$.
\end{proof}

\begin{lemma}
\label{newLemma2}
Let $P$ and $Q$ be programs and $A$ a set of atoms. Then, $\Mod_\ac(P)
\not=\Mod_\ac(Q)$ or, for some $Y\in\Mod_\ac(P)$, $T_P(Y)\not=T_Q(Y)$ 
if and only if there is $Y'\subseteq\at(P\cup Q)\cup A$ such that $Y'$ 
belongs to exactly one of $\Mod_\ac(P)$ and $\Mod_\ac(Q)$, or $Y'$ 
belongs to both $\Mod_\ac(P)$ and $\Mod_\ac(Q)$ and $T_P(Y')\not=T_Q(Y')$.
\end{lemma}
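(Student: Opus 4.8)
The plan is to prove the right-to-left direction by inspection and to concentrate the real work on the left-to-right direction, where an arbitrary (possibly unbounded) witness must be compressed to one living in the finite set $\at(P\cup Q)\cup A$. For the easy direction, suppose such a $Y'$ exists. If $Y'$ lies in exactly one of $\Mod_\ac(P)$ and $\Mod_\ac(Q)$, then $\Mod_\ac(P)\neq\Mod_\ac(Q)$ outright; and if $Y'$ lies in both while $T_P(Y')\neq T_Q(Y')$, then taking $Y=Y'$ witnesses the second disjunct of the left-hand side. So nothing beyond unwinding definitions is needed here.

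For the left-to-right direction, the engine is Lemma~\ref{newLemma1} together with the observation from its proof that $T_P(Y)$ depends only on $Y\cap\at(P)$. I would first record a slight strengthening: for any superset $S\supseteq\at(P)\cup A$, membership of $Y$ in $\Mod_\ac(P)$ and the value of $T_P(Y)$ are unchanged when $Y$ is replaced by $Y\cap S$. Indeed, writing $Y''=Y\cap S$ and applying Lemma~\ref{newLemma1} to both $Y$ and $Y''$, one has $Y''\cap(\at(P)\cup A)=Y\cap(\at(P)\cup A)$ because $\at(P)\cup A\subseteq S$, so $Y\in\Mod_\ac(P)$ iff $Y''\in\Mod_\ac(P)$; and $T_P(Y)=T_P(Y'')$ since $Y$ and $Y''$ agree on $\at(P)$.

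Now set $Y'=Y\cap(\at(P\cup Q)\cup A)$, with $Y$ supplied by the left-hand side. Since $\at(P\cup Q)\cup A=\at(P)\cup A\cup\at(Q)$ contains both $\at(P)\cup A$ and $\at(Q)\cup A$, the strengthened observation applies to $P$ and to $Q$ simultaneously: $Y\in\Mod_\ac(P)$ iff $Y'\in\Mod_\ac(P)$, $Y\in\Mod_\ac(Q)$ iff $Y'\in\Mod_\ac(Q)$, and $T_P(Y')=T_P(Y)$, $T_Q(Y')=T_Q(Y)$. If the left-hand side holds because $\Mod_\ac(P)\neq\Mod_\ac(Q)$, pick $Y$ in exactly one of these sets; then $Y'$ is in exactly one of them as well, and $Y'\subseteq\at(P\cup Q)\cup A$ is the required witness. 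Otherwise $\Mod_\ac(P)=\Mod_\ac(Q)$ and there is $Y\in\Mod_\ac(P)$ with $T_P(Y)\neq T_Q(Y)$; then $Y'$ lies in both sets and $T_P(Y')=T_P(Y)\neq T_Q(Y)=T_Q(Y')$, again as required.

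The one place demanding care---the main obstacle---is that Lemma~\ref{newLemma1} is phrased for a single program and the projection $Y\cap(\at(P)\cup A)$, whereas here I need one common witness $Y'$ that behaves correctly for $P$ and $Q$ at once. The fix is precisely the monotonicity remark above: projecting onto the larger alphabet $\at(P\cup Q)\cup A$ loses nothing for either program, because that alphabet contains each program's relevant atoms, so a single application of the strengthened Lemma~\ref{newLemma1} to each of $P$ and $Q$ suffices.
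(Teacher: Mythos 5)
Your proof is correct and follows essentially the same route as the paper's: both directions are handled identically, with the ``only if'' direction reducing to the observation that projecting $Y$ onto $\at(P\cup Q)\cup A$ preserves membership in $\Mod_{\ac}(P)$ and $\Mod_{\ac}(Q)$ and the values of $T_P$ and $T_Q$, via Lemma~\ref{newLemma1} and the fact that $T_P$ depends only on $Y\cap\at(P)$. The only difference is presentational: you state the superset-monotonicity strengthening of Lemma~\ref{newLemma1} explicitly, whereas the paper invokes ``the argument given above'' implicitly; this makes your write-up slightly more careful but not a different proof.
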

\begin{proof}
Clearly, we only need to prove
the ``only-if'' implication. To this end, we note that if
$\Mod_\ac(P)\not=\Mod_\ac(Q)$, then by Lemma \ref{newLemma1},
there is $Y'\subseteq \at(P\cup Q)\cup A$ with that property. Thus,
let
us
assume that
$\Mod_\ac(P)=\Mod_\ac(Q)$. If for some $Y\in\Mod_\ac(P)$,
$T_P(Y)\not=T_Q(Y)$ then again
%$Y$ belongs to both $\Mod_\ac(P)$ and
%$\Mod_\ac(Q)$.
by the argument given above, %the set
$Y'=Y \cap(\at(P\cup Q)\cup A)$ belongs to both $\Mod_\ac(P)$ and
$\Mod_\ac(Q)$, and $T_P(Y')=T_P(Y)\not=T_Q(Y)=T_Q(Y')$.
\end{proof}

\begin{theorem}
\label{m-revsupp}
The problem $\spp_{c,d}(\cdot,\cdot)$
%For every finite $A\subseteq\at$, the problem
%$\spp_\ac$ 
is in the class coNP.
\end{theorem}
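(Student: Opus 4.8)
The plan is to show membership in coNP by exhibiting a polynomial-size certificate for the complementary problem, namely \emph{non}-supp-equivalence, and arguing that this certificate can be verified in polynomial time. By Theorem~\ref{nlp1} (applied with $\caC=\caHB(\ac,\at)$, which sits between $\caHB(\ac,\emptyset)$ and $\caHB(\ac,\at)$ as required), programs $P$ and $Q$ are supp-equivalent relative to $\caHB(\ac,B)$ for any $B$ exactly when $\Mod_\ac(P)=\Mod_\ac(Q)$ and $T_P(Y)=T_Q(Y)$ for every $Y\in\Mod_\ac(P)$. Note that since the body alphabet is irrelevant for supp-equivalence, the ``direct'' specification of $B$ in $\spp_{c,d}$ plays no role in the characterization; only the complement head alphabet $\ac$ matters.

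First I would invoke Lemma~\ref{newLemma2}, which is precisely the bridge between the characterization and a bounded search. It tells us that $P$ and $Q$ fail to be supp-equivalent if and only if there is a \emph{witness} $Y'\subseteq\at(P\cup Q)\cup A$ that either (i) lies in exactly one of $\Mod_\ac(P)$ and $\Mod_\ac(Q)$, or (ii) lies in both but satisfies $T_P(Y')\neq T_Q(Y')$. The crucial point is that this witness is drawn from a \emph{finite, polynomially bounded} universe: $\at(P\cup Q)\cup A$ has size linear in the size of the input (recall that in $\spp_{c,d}(\cdot,\cdot)$ the set $A$ is part of the input). Hence a nondeterministic machine can guess such a $Y'$ using polynomially many bits.

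Next I would verify that checking the witness condition for a guessed $Y'$ is a polynomial-time task. Deciding whether $Y'\models P$ amounts to checking each rule of $P$ against $Y'$, which is linear. Computing $T_P(Y')$ (and detecting whether it is undefined, i.e.\ whether some constraint is violated) is likewise polynomial, as is comparing it with $Y'$ to test the condition $Y'\setminus T_P(Y')\subseteq\ac$, equivalently $(Y'\setminus T_P(Y'))\cap A=\emptyset$. Thus membership of $Y'$ in $\Mod_\ac(P)$ and in $\Mod_\ac(Q)$ can each be decided in polynomial time, as can the test $T_P(Y')\neq T_Q(Y')$. Consequently the whole witness-verification runs in polynomial time, so the complementary problem is in NP and $\spp_{c,d}(\cdot,\cdot)$ is in coNP.

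I do not expect a genuine obstacle here, since Lemmas~\ref{newLemma1} and~\ref{newLemma2} have already done the substantive work of reducing an \emph{a priori} infinite quantification (over all $Y\subseteq\at$, where $\at$ is infinite) to a search over a finite, polynomially bounded set. The one point that deserves care is the handling of the condition $Y'\setminus T_P(Y')\subseteq\ac$ when $A$ is specified directly: one must express it as a condition on the finitely many relevant atoms rather than on the infinite complement $\ac$, and confirm that the restriction of the witness to $\at(P\cup Q)\cup A$ does not lose any atom of $A$ that could matter. This is exactly what Lemma~\ref{newLemma1} guarantees, so the argument goes through cleanly.
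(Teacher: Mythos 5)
Your proposal is correct and follows essentially the same route as the paper's proof: it uses Theorem~\ref{nlp1} together with Lemma~\ref{newLemma2} to bound the witness to the polynomial-size universe $\at(P\cup Q)\cup A$, and then verifies membership in $\Mod_\ac(\cdot)$ and the condition $T_P(Y')\neq T_Q(Y')$ in polynomial time, rephrasing $Y'\setminus T_P(Y')\subseteq\ac$ as $(Y'\setminus T_P(Y'))\cap A=\emptyset$ exactly as the paper does. The only cosmetic difference is that the paper first explicitly collapses $\spp_{c,d}(\cdot,\cdot)$ to $\spp_{c,d}(\cdot,\emptyset)$ before running the same nondeterministic algorithm, whereas you note the irrelevance of $B$ inline.
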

{\normalfont\itshape Proof}\\
It is sufficient to show that $\spp_{c,d}(\cdot,\emptyset)$ is in coNP, since
$(P,Q,A)$ is a YES instance of $\spp_{c,d}(\cdot,\emptyset)$ if and only if
$(P,Q,A,B)$ is a YES instance of $\spp_{c,d}(\cdot,\cdot)$ (cf.\ Theorem~\ref{nlp1}).

%\textbf
{Thus, we will now focus on proving that $\spp_{c,d}(\cdot,\emptyset)$
is in coNP.} 
Theorem \ref{nlp1} and Lemma \ref{newLemma2} imply the correctness of 
the following algorithm to decide the complementary problem to 
$\spp_{c,d}(\cdot,\emptyset)$ for an instance $(P,Q,A)$: 

\begin{enumerate}
\item nondeterministically guess $Y\subseteq \at(P\cup Q)\cup A$, and
\item verify that $Y$ belongs to exactly one of $\Mod_\ac(P)$ and
$\Mod_\ac(Q)$, or that $Y$ belongs to $\Mod_\ac(P)$ and
$\Mod_\ac(Q)$, and that $T_P(Y)\not=T_Q(Y)$. 
\end{enumerate}

%Clearly, 
Checking $Y\models P$ and $Y\models Q$ can be done in
polynomial time (in the size of the input, which is given by 
$|At(P\cup Q)\cup A|$). Similarly, for $R=P$ or $Q$, $Y\setminus T_R(Y)
\subseteq\ac$ if and only if $(Y\setminus T_R(Y))\cap A=\emptyset$. 
Thus, checking $Y\setminus T_R(Y)\subseteq\ac$ can be done in
polynomial time, too, and so the algorithm runs in polynomial time.
Hence, the complementary problem to $\spp_{c,d}(\cdot,\emptyset)$ is in 
NP. It follows that the problem $\spp_{c,d}(\cdot,\emptyset)$ is in coNP and
so, the assertion follows.
\hspace*{1em}\proofbox

\smallskip
For the lower bound we use the problem $\spp_{c,d}(A,B)$. Let us comment
that the reduction, from the satisfiability problem to $\spp_{c,d}(A,B)$,
used in the following hardness proof, is indeed computable in polynomial
time. The same is true for the reductions used in all other places in 
the paper. In each case, the polynomial-time computability of the 
reductions is evident, and we do not state it explicitly in the proofs.

\begin{theorem}
\label{h-revsupp}
%For every finite $A,B\subseteq\At$, the problem $\spp_\ac^B$ is 
The problem $\spp_{c,d}(A,B)$ is
coNP-hard. 
%where $\beta$ is either $\cdot$ or a set $B$ of atoms.
\end{theorem}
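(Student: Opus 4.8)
The plan is to establish coNP-hardness by a polynomial-time reduction from SAT, exploiting the characterization of supp-equivalence in Theorem~\ref{nlp1}. Since the body alphabet is immaterial for supp-equivalence, I apply Theorem~\ref{nlp1} with the head alphabet taken to be $\ac$ and $\caC=\caHB(\ac,B)$ (the required inclusions $\caHB(\ac,\emptyset)\subseteq\caHB(\ac,B)\subseteq\caHB(\ac,\at)$ hold trivially): $P$ and $Q$ are supp-equivalent relative to $\caHB(\ac,B)$ if and only if $\Mod_{\ac}(P)=\Mod_{\ac}(Q)$ and $T_P(Y)=T_Q(Y)$ for every $Y\in\Mod_{\ac}(P)$. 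The target is a pair $(P,Q)$ that is supp-equivalent exactly when the input formula is unsatisfiable.

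Given a CNF formula $\varphi=\bigwedge_j C_j$ over variables $x_1,\dots,x_n$, I choose the atoms $x_1,\dots,x_n$ from $\at\setminus(A\cup B)$, which is possible since $A$ and $B$ are finite; this is what makes the construction uniform in the fixed sets $A$ and $B$. For each clause $C_j$ let $\beta_j$ be the body satisfied precisely by the interpretations that falsify $C_j$, namely the positive atoms $\{x_i\st \neg x_i\in C_j\}$ together with $\nt x_i$ for each $x_i\in C_j$. I set $P=\{\,\lar\beta_j\st C_j\ \mbox{a clause of}\ \varphi\,\}$, a program consisting solely of constraints, and $Q=\{\lar\}$, the single empty-body constraint, which has no models at all.

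The crux is the computation of $\Mod_{\ac}(P)$. Because $P$ contains only constraints, whenever $T_P(Y)$ is defined it equals $\emptyset$, and it is defined exactly when $Y$ violates none of the constraints, i.e.\ when $Y\cap\{x_1,\dots,x_n\}$ satisfies $\varphi$. Hence the condition $Y\setminus T_P(Y)\subseteq\ac$ collapses to $Y\cap A=\emptyset$, and $\Mod_{\ac}(P)=\{Y\st Y\cap\{x_1,\dots,x_n\}\models\varphi\ \mbox{and}\ Y\cap A=\emptyset\}$, whereas $\Mod_{\ac}(Q)=\emptyset$. If $\varphi$ is satisfiable, any satisfying assignment $S\subseteq\{x_1,\dots,x_n\}$, viewed as the interpretation $S$ (disjoint from $A$ by the choice of the $x_i$), lies in $\Mod_{\ac}(P)\setminus\Mod_{\ac}(Q)$, so $P$ and $Q$ are not supp-equivalent; if $\varphi$ is unsatisfiable, every interpretation violates some constraint, so $\Mod_{\ac}(P)=\emptyset=\Mod_{\ac}(Q)$ and the $T$-condition holds vacuously, making $P$ and $Q$ supp-equivalent. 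Thus $\varphi$ is satisfiable if and only if $P$ and $Q$ are not supp-equivalent, and since SAT is NP-complete this yields coNP-hardness of $\spp_{c,d}(A,B)$.

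I expect the only delicate point to be the bookkeeping around the support condition: one must check that, for the constraint-only program $P$, the set $Y\setminus T_P(Y)$ really reduces to $Y$ on the relevant interpretations, so that membership in $\Mod_{\ac}(P)$ is governed solely by satisfaction of $\varphi$ together with disjointness from the fixed set $A$. A secondary point is to confirm that no spurious witnesses are introduced by atoms outside $\{x_1,\dots,x_n\}$; this follows because membership in $\Mod_{\ac}(P)$ depends only on $Y\cap(\at(P)\cup A)$ (cf.\ Lemma~\ref{newLemma1}), so it suffices to range over $Y\subseteq\{x_1,\dots,x_n\}\cup A$. Everything else, including polynomial-time computability of the map $\varphi\mapsto(P,Q)$, is immediate.
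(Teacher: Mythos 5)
Your proof is correct, and its skeleton matches the paper's: reduce from SAT, pair $P$ with a contradiction-only program $Q$ (so $\Mod_{\ac}(Q)=\emptyset$), and use Theorem~\ref{nlp1} to reduce supp-equivalence to the emptiness of $\Mod_{\ac}(P)$. The difference lies in the gadget and in how the condition $Y\setminus T_P(Y)\subseteq\ac$ is discharged. The paper keeps the original formula atoms (possibly inside $A$) and adds generator rules $y\lar\nt y'$, $y'\lar\nt y$, $\lar y,y'$, so that every model $M\subseteq\at(P)$ satisfies $T_P(M)=M$ and hence $M\setminus T_P(M)=\emptyset\subseteq\ac$ automatically, making the argument independent of $A$; this self-supporting construction is then reused in later hardness proofs (Theorems~\ref{thm:suppBAc1} and~\ref{thm:suppBAc2}). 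You instead take $P$ to be constraint-only, so $T_P(Y)=\emptyset$ whenever defined and the condition degenerates to $Y\cap A=\emptyset$, which you then satisfy by renaming the propositional variables away from the fixed finite set $A\cup B$ --- a legitimate move precisely because $A$ and $B$ are not part of the input in $\spp_{c,d}(A,B)$. Your variant is leaner; the paper's is more robust (no renaming needed, and the same $P(\vph)$ serves several theorems). One cosmetic remark: your $Q=\{\lar\}$ is an empty-bodied, empty-headed constraint, which is admissible under the paper's rule syntax but slightly degenerate ($T_Q$ is undefined on every interpretation); the paper's $Q=\{f\lar;\ \lar f\}$ avoids this edge case while serving the same purpose.
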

\begin{proof}
Let us consider a CNF formula $\vph$,\footnote{{Here and throughout
the paper, by \emph{CNF formula} we mean a formula in the conjunctive 
normal form.}} let $Y$ be the set of
atoms in $\vph$, and let $Y'=\{y'\st y\in Y\}$ be a set of new atoms. 
We define
\begin{eqnarray*}
P(\vph)&=&\{ y \lar \nt y';\; y'\lar \nt y;\; \lar y,y'\mid y\in Y\}\cup\\
&& \{ \lar \hat{c} \mid c\mbox{\ is a clause in\ }\vph\}
\end{eqnarray*}
%To this end, for each atom $z\in X\cup Y$, we introduce a new atom $z'
%\notin X\cup Y$. For a set $Z\subseteq\at$, we define $Z'=\{z'\st z\in
%Z\}$.  We also define $W=X\cup X'\cup Y\cup Y'$.
where, for each clause $c\in \varphi$, say $c=y_1\vee\cdots \vee
y_k\vee\neg  y_{k+1}\vee\cdots\vee\neg y_m$, $\hat{c}$ denotes the
the sequence $y'_1,\ldots,y'_k,y_{k+1},\ldots,y_m$. 
%We will use these
%sequences as conjunctions of literals forming the bodies of rules of
%program clauses.
To simplify the notation, we write $P$ for $P(\vph)$.
One can check that $\vph$ has a model if and only if $P$ has a
model. Moreover, for every model $M$ of $P$ such that
$M\subseteq\at(P)$,
$M$ is %a \emph{stable} model of $P$. Thus, each such model of $P$
%is also
a \emph{supported} model of $P$ and, consequently, satisfies
$M=T_{P}(M)$.

Next, let $Q$ consist of $f$ and $\lar f$. As 
$Q$ has no models, Theorem \ref{nlp1} implies that 
%$Q$ is supp-equivalent to $P$
%relative to $\caHB^n(\ac,B)$ if and only if $\Mod_\ac(Q) =\Mod_\ac(P)$
%and for every $M\in\Mod_\ac (Q)$, $T_Q(M)=T_{P}(M)$. Since $\Mod_\ac(Q)
%=\emptyset$, 
$Q$ is supp-equivalent to $P$ relative
to $\caHB(\ac,B)$ if and only if $\Mod_\ac(P)=\emptyset$. 
If $M\in
\Mod_\ac(P)$, then there is $M'\subseteq\At(P)$ such that $M'\in
\Mod_\ac(P)$. Since every model $M'$ of $P$ such that $M'\subseteq \At(P)$
satisfies $M'=T_{P}(M')$, it follows that $\Mod_\ac(P)= \emptyset$ if and
only if $P$ has no models.
Thus, $\vph$ is unsatisfiable if and only if $Q$ is supp-equivalent to
$P$ relative to $\caHB(\ac,B)$, and the assertion follows.
\end{proof}

\smallskip
%\textbf
{The observations made at the beginning of this section, Theorems 
\ref{m-revsupp} and \ref{h-revsupp}, and the relations depicted in
Figure \ref{fig1} imply the following corollary.}

\begin{corollary}\label{cor:supp}
%For every finite sets $A, B \subseteq \At(P)$, 
The problem $\spp_{\delta,\varepsilon}(\alpha,\beta)$ is coNP-complete, for
any combination of $\delta,\varepsilon\in\{c,d\}$, $\alpha\in \{A,\cdot\}$, $\beta\in\{B,\cdot\}$.
\end{corollary}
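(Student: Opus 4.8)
The plan is to assemble the result from the three ingredients already in place: the body-irrelevance observation at the start of this section, the bounds of Theorems~\ref{m-revsupp} and~\ref{h-revsupp}, and the monotonicity of hardness recorded in Figure~\ref{fig1}. First I would invoke the fact that, by Theorem~\ref{nlp1}, supp-equivalence is characterized purely in terms of the set $\Mod_A(P)$ (with $A$ the head alphabet), the corresponding set for $Q$, and the values of $T_P$ and $T_Q$ on $\Mod_A(P)$, none of which refers to the body alphabet. Consequently the answer to any instance of $\spp_{\delta,\varepsilon}(\alpha,\beta)$ does not depend on $\varepsilon$ or $\beta$, so the nominal problems collapse to the four governed by the choice of $\delta\in\{c,d\}$ and $\alpha\in\{A,\cdot\}$; in particular $\spp_{d,c}(\cdot,\beta)$ coincides with $\spp_{d,d}(\cdot,\beta)$ and $\spp_{c,c}(\cdot,\beta)$ with $\spp_{c,d}(\cdot,\beta)$, and likewise for the fixed-alphabet variants, exactly as recorded at the beginning of the section.

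For membership in coNP, I would treat the two values of $\delta$ separately at the most general $(\cdot,\cdot)$ version, since by Figure~\ref{fig1} the $(\cdot,\cdot)$ variant is the hardest for each fixed $(\delta,\varepsilon)$, so membership there propagates to every variant that reduces to it. For $\delta=d$ the problem $\spp_{d,d}(\cdot,\cdot)$ is coNP-complete by the results of \citeNS{tw08a}, hence in coNP; for $\delta=c$ the problem $\spp_{c,d}(\cdot,\cdot)$ is in coNP by Theorem~\ref{m-revsupp}. Propagating these two bounds along the arrows of Figure~\ref{fig1}, and using body-irrelevance to cover the $\varepsilon=c$ cases, places every $\spp_{\delta,\varepsilon}(\alpha,\beta)$ in coNP.

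For coNP-hardness I would argue dually, starting from the easiest $(A,B)$ version of each family, since by Figure~\ref{fig1} hardness of $\spp_{\delta,\varepsilon}(A,B)$ propagates to every variant obtained by replacing $A$ or $B$ with $\cdot$. For $\delta=d$, $\spp_{d,d}(A,B)$ is coNP-hard by \citeNS{tw08a}; for $\delta=c$, $\spp_{c,d}(A,B)$ is coNP-hard by Theorem~\ref{h-revsupp}. Body-irrelevance again transfers these to the $\varepsilon=c$ cases. Combining the coNP upper bounds with the coNP-hardness lower bounds yields coNP-completeness for all combinations of $\delta,\varepsilon,\alpha,\beta$.

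I do not expect a genuine obstacle here: the substantive work is already discharged in Theorems~\ref{m-revsupp} and~\ref{h-revsupp} and in the cited prior results, so this is essentially a bookkeeping argument. The only care needed is to keep the two propagation directions straight --- membership flows from the hardest $(\cdot,\cdot)$ variant to all others, while hardness flows from the easiest $(A,B)$ variant to all others --- and to remember that the irrelevance of the body alphabet under supp-equivalence is precisely what lets the argument for $\varepsilon=d$ simultaneously settle $\varepsilon=c$.
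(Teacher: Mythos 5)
Your proposal is correct and follows exactly the paper's own route: the paper justifies the corollary by combining the body-irrelevance observations at the start of the section (which reduce everything to the $\spp_{d,d}$ cases known from prior work and the $\spp_{c,d}$ cases), the coNP membership of $\spp_{c,d}(\cdot,\cdot)$ from Theorem~\ref{m-revsupp}, the coNP-hardness of $\spp_{c,d}(A,B)$ from Theorem~\ref{h-revsupp}, and the propagation of bounds along Figure~\ref{fig1}. Your bookkeeping of the two propagation directions is precisely the intended argument.
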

%Proof: The result follows from Theorems \ref{m-revsupp} and \ref{h-revsupp},
%and the fact that there is a straightforward polynomial-time reduction of
%$\spp^B_\ac$ to $\spp_\ac$.\hfill$\Box$

%\vspace*{-0.2in}
\section{Suppmin-equivalence}

In this section, we establish the complexity for direct-complement,
complement-direct and complement-complement problems of deciding 
suppmin-equivalence. The complexity of direct-direct problems is 
already known \cite{tw08a}. 

%\smallskip
%\noindent
\subsection{Upper bounds}
The argument consists of a series of auxiliary results. 
The first two lemmas are
concerned with the basic problem of deciding whether 
$(X,Y)\in\Mod^{B'}_{A'}(P)$, where $A'$ and $B'$ stand for $A$ or $\ac$
and $B$ or $\bc$, respectively.

\begin{lemma}
\label{lem:memb1}
The following problems are in the class coNP: 
Given a program $P$, and sets $X$, $Y$, $A$, and $B$, 
decide whether 
\begin{enumerate}
%\item given a normal program $P$, and sets $X$, $Y$, $A$, and $B$, 
%decide whether 
\item[i.] $(X,Y)\in\Mod^B_{A^c}(P)$; 
%\item given a normal program $P$, and sets $X$, $Y$, $A$, and $B$, 
%decide whether
\item[ii.] $(X,Y)\in\Mod^\bc_A(P)$;
%\item given a normal program $P$, and sets $X$, $Y$, $A$, and $B$, 
%decide whether
\item[iii.] $(X,Y)\in\Mod^\bc_\ac(P)$.
\end{enumerate}
\end{lemma}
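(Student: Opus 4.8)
The plan is to show that each of the three membership problems can be decided by a nondeterministic polynomial-time algorithm for the *complementary* problem, which gives membership in coNP directly. The definition of $\Mod_{A'}^{B'}(P)$ has five conditions, and the key observation is that conditions (1), (2), and (5) are checkable in deterministic polynomial time, while conditions (3) and (4) are universally quantified over subsets $Z\subset Y$ and hence are naturally co-NP conditions: their *negations* ask for the existence of a witness $Z$ violating the respective closure property. So the natural strategy is: to decide that $(X,Y)\notin\Mod_{A'}^{B'}(P)$, first check in polynomial time whether any of (1), (2), (5) fails (if so, accept), and otherwise nondeterministically guess a set $Z\subset Y$ and verify in polynomial time that $Z$ witnesses the failure of (3) or of (4). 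Since $Z\subseteq Y$ and $Y$ is part of the input, the guess has polynomial size and the verification ($Z\models P$, the various restrictions $Z|_A$, $Z|_B$ relative to $X$ and $Y$) is clearly polynomial.

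The subtlety that makes the three cases worth stating separately — and the step I expect to be the main obstacle — is that $A$ or $B$ is now given as a complement, i.e.\ $A' = \ac$ or $B' = \bc$, so these alphabets are \emph{infinite}. This matters in two places. First, condition (2), $X\subseteq Y|_{A'\cup B'}$, and the restrictions $Z|_{A'}$, $Z|_{B'}$ appearing in (3)--(5) refer to infinite sets. I would handle this by noting that only atoms actually occurring in $P$ (together with those in the finite input sets $X,Y$ and the finitely-specified parts of $A,B$) can affect any of the conditions: for an interpretation $W\subseteq Y$, satisfaction $W\models P$ and the value of $T_P(W)$ depend only on $W\cap\At(P)$, and the restriction to an infinite alphabet such as $\ac$ can be computed as a set-difference from the finite complement $A$. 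Thus $Z|_{\ac}$ is really $Z\setminus A$, which is polynomially computable, and similarly for $\bc$. Second, and more importantly, the guessed witness $Z$ ranges over subsets of $Y$, and since $Y$ is finite regardless of whether $A$ or $B$ is infinite, the witness space stays polynomially bounded; the infiniteness of the alphabet does \emph{not} enlarge the search space for $Z$, it only changes how the membership tests $Z|_{A'}$, $Z|_{B'}$ are evaluated. Making this reduction-to-finite-support argument precise (analogous to Lemma~\ref{newLemma1}) is the crux.

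Concretely, for each of the three cases I would argue as follows. For case (i), $\Mod^B_{\ac}(P)$: here $A'=\ac$ is infinite but $B'=B$ is finite. Conditions (3) and (4) quantify over $Z\subset Y$; their negations are each an existential guess of such a $Z$, with verification using $Z|_{\ac}=Z\setminus A$ and the finite $B$. For case (ii), $\Mod^{\bc}_A(P)$: now $A'=A$ is finite and $B'=\bc$ is infinite, so the roles swap and the restriction $Z|_{\bc}=Z\setminus B$ is the infinite one; the structure of the coNP algorithm is identical. For case (iii), $\Mod^{\bc}_{\ac}(P)$: both alphabets are infinite, and both $Z|_{\ac}$ and $Z|_{\bc}$ are computed as set-differences from the finite $A$ and $B$. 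In all three cases the complementary problem is in NP by the guess-and-check scheme above, so the stated problems are in coNP.

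The one point requiring care is to confirm that the guessed $Z$ really may be restricted to $Z\subseteq Y$ rather than to all of $\At$, and that intersecting $Z$ with $\At(P)\cup X\cup Y$ does not change any of the relevant tests — this is the analogue for pairs $(X,Y)$ of the support-reduction performed in Lemma~\ref{newLemma1}, and I would invoke exactly that style of reasoning (atoms outside $\At(P)$ affect neither $W\models P$ nor $T_P(W)$) to keep the witness polynomially bounded. Once that is in place, the polynomial-time checkability of each individual condition, together with the finiteness of the witness space, yields the coNP upper bound.
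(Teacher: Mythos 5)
Your proposal is correct and follows essentially the same route as the paper: decide the complementary problem in NP by checking the negations of conditions (1), (2), and (5) in polynomial time, guessing a witness $Z$ for the failure of (3) or (4), and evaluating restrictions to $\ac$ or $\bc$ as set differences from the finite $A$ or $B$. The only superfluous part is your concern about bounding the witness $Z$: the definition of $\Mod_{A'}^{B'}(P)$ already quantifies only over $Z\subset Y$, and $Y$ is part of the input, so no support-reduction argument in the style of Lemma~\ref{newLemma1} is needed here.
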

\smallskip
{\normalfont\itshape Proof}\\
We first show that the complementary problem, this is, to decide 
whether $(X,Y)\notin\Mod^B_\ac(P)$, is in NP. To this end, we observe
that $(X,Y)\notin\Mod^B_\ac(P)$ if and only if 
at least one of the following conditions holds:
\begin{enumerate}
\item 
$Y\notin\Mod_\ac(P)$,
\item 
$X\not\subseteq Y|_{\ac\cup B}$,
\item
there is $Z\subset Y$ such that $Z|_{\ac\cup B}=Y|_{\ac\cup B}$
and $Z\models P$,
\item 
there is $Z\subset Y$ such that $Z|_B=X|_B$, $Z|_\ac \supseteq 
X|_\ac$ and $Z\models P$, 
\item 
$X|_B=Y|_B$ and $Y\setminus T_P(Y)\not\subseteq X$.
\end{enumerate} 
We note that verifying any condition involving $\ac$ can be reformulated
in terms of $A$. For instance, for every set $V$, we have $V|_\ac=V
\setminus A$, and $V\subseteq\ac$ if and only if $V\cap
A=\emptyset$. Thus, the conditions (1), (2) and (5) can be decided in 
polynomial time. %The 
Conditions (3) and (4) can be decided by a 
nondeterministic polynomial time algorithm. Indeed, once we 
nondeterministically guess $Z$, all other tests can be decided in 
polynomial time.
The proofs for the remaining two claims %are based on 
use the same ideas and
differ only in technical details depending on which of $A$ and $B$ is 
subject to the complement operation.
\hspace*{1em}\proofbox

\begin{lemma}
\label{lem:memb2}
For  every finite set $B\subseteq\at$, the following problems are in 
the class $\Pol$:
Given a program $P$, and sets $X$, $Y$, and $A$, decide whether 
\begin{enumerate}
\item[i.] $(X,Y)\in\Mod^\bc_\ac(P)$;
%\item given a normal program $P$,
%and sets $X$, $Y$ and $A$, decide whether 
\item[ii.] $(X,Y)\in\Mod^\bc_A(P)$.
\end{enumerate}
\end{lemma} 
\begin{proof}
In each case, the argument follows the same lines as that for Lemma
\ref{lem:memb1}. The difference is in the case of the conditions (3) and
(4). Under the assumptions of this lemma, %the conditions (3) and (4) 
they can
be decided in \emph{deterministic} polynomial time. Indeed, let us note
that there are no more than $2^{|B|}$ sets $Z$ such that
$Z|_{\ac\cup\bc}=Y|_{\ac\cup\bc}$ (or, for the second problem,
such that $Z|_{A\cup\bc}=Y|_{A\cup\bc}$). Since $B$ is
finite, fixed, 
%\textbf
{and not a part of the input,} the condition 
(3) can be checked in polynomial time 
by a simple enumeration of all possible sets $Z$ such that $Z\subset Y$  
and $Z|_{\ac\cup\bc}=Y|_{\ac\cup\bc}$ and checking for each of them
whether $Z\models P$. For the condition (4), the argument is similar. 
Since $Z$ is constrained by $Z|_\bc=X|_\bc$, there are
no more than $2^{|B|}$ possible candidate sets $Z$ to consider in this
case, too. 
\end{proof}

\smallskip
The role of the next lemma is to show that $(X,Y)\in\Mod_A^B(P)$ implies
constraints on $X$ and $Y$.

\begin{lemma}
\label{in0}
Let $P$ be a %normal 
program and $A,B\subseteq\at$. If $(X,Y)\in
\Mod_A^B(P)$ then $X\subseteq Y\subseteq\at(P)\cup A$.
\end{lemma}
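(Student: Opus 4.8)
The plan is to prove the two inclusions $X \subseteq Y$ and $Y \subseteq \at(P) \cup A$ separately, reading off each from the defining conditions (1)--(5) of $\Mod_A^B(P)$ listed just before Theorem~\ref{thm:general}.

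First I would establish $Y \subseteq \at(P) \cup A$. By condition (1), $Y \in \Mod_A(P)$, which by the definition of $\Mod_A(P)$ means $Y \models P$ and $Y \setminus T_P(Y) \subseteq A$. The key observation is that $T_P(Y) \subseteq \hd(P) \subseteq \at(P)$, since $T_P(Y)$ is by definition a set of heads of rules of $P$. Now take any $y \in Y$. If $y \in T_P(Y)$, then $y \in \at(P)$. Otherwise $y \in Y \setminus T_P(Y) \subseteq A$. Either way $y \in \at(P) \cup A$, so $Y \subseteq \at(P) \cup A$.

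Next I would establish $X \subseteq Y$. This is where one has to be a little careful, because condition (2) only gives $X \subseteq Y|_{A \cup B}$, and a priori $Y|_{A\cup B}$ is just $Y \cap (A \cup B)$, which is a subset of $Y$. So in fact $X \subseteq Y|_{A\cup B} = Y \cap (A\cup B) \subseteq Y$ follows immediately from condition (2) together with the meaning of the restriction notation $Y|_{A\cup B}$. I would state explicitly that $Y|_{A\cup B}$ denotes $Y \cap (A\cup B)$, so that the inclusion $Y|_{A\cup B} \subseteq Y$ is transparent, and then chain the inclusions to conclude $X \subseteq Y$.

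I do not expect any genuine obstacle here: the statement is essentially a direct unpacking of conditions (1) and (2) in the definition of $\Mod_A^B(P)$, combined with the elementary fact that $T_P(Y) \subseteq \hd(P) \subseteq \at(P)$. The only point requiring mild attention is making sure the restriction notation $Y|_{A\cup B}$ is correctly interpreted as $Y \cap (A\cup B)$ so that condition (2) really does yield a subset of $Y$; once that is noted, both inclusions are immediate and can be combined to give $X \subseteq Y \subseteq \at(P) \cup A$.
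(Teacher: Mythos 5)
Your proof is correct and follows essentially the same route as the paper's: the paper likewise derives $Y\subseteq T_P(Y)\cup A\subseteq \at(P)\cup A$ from $Y\setminus T_P(Y)\subseteq A$, and obtains $X\subseteq Y$ from $X\subseteq Y|_{A\cup B}\subseteq Y$. Your explicit unpacking of $Y|_{A\cup B}$ as $Y\cap(A\cup B)$ and of $T_P(Y)\subseteq \hd(P)\subseteq\at(P)$ just spells out steps the paper leaves implicit.
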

\begin{proof}
We have $Y\in\Mod_A(P)$. Thus, $Y\setminus T_P(Y)\subseteq A$
and, consequently, $Y\subseteq T_P(Y)\cup A\subseteq \at(P)\cup A$.
We also have $X\subseteq Y|_{A\cup B}\subseteq Y$. %\hfill$\Box$
\end{proof}

\begin{theorem}
\label{thm:memb1a}
The problem $\sppm_{d,c}(\cdot,\cdot)$ is in the class 
$\PiP{2}$. The problem $\sppm_{d,c}(\cdot,B)$ is in the
class coNP.
\end{theorem}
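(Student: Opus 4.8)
The plan is to attack each statement through its complementary problem together with the characterization of suppmin-equivalence in Theorem~\ref{thm:general}. Since the body alphabet here is $\bc$, that theorem tells us that $P$ and $Q$ fail to be suppmin-equivalent relative to $\caHB(A,\bc)$ precisely when there is a pair $(X,Y)$ that \emph{witnesses} non-equivalence, meaning either $(X,Y)$ belongs to exactly one of $\Mod^\bc_A(P)$ and $\Mod^\bc_A(Q)$, or it belongs to both and $T_P(Y)|_\bc\not=T_Q(Y)|_\bc$. Thus I would design a guess-and-check procedure for the complementary problem that nondeterministically produces a candidate witness and then verifies that it is one; the cost of the verification (a coNP oracle call versus a deterministic polynomial-time test) is exactly what separates the two bounds.

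The first thing I would establish is that the search for $(X,Y)$ can be confined to a polynomially bounded space. This is what Lemma~\ref{in0} delivers: if $(X,Y)$ lies in $\Mod^\bc_A(P)$ or in the analogous set for $Q$, then $X\subseteq Y\subseteq\at(P\cup Q)\cup A$. Note that the conclusion $Y\subseteq\at(P)\cup A$ of that lemma does not involve the body alphabet at all, so it applies unchanged when the body alphabet is the co-finite set $\bc$. Hence a witness, if one exists, can be found among pairs $(X,Y)$ with $X\subseteq Y\subseteq\at(P\cup Q)\cup A$, each of size polynomial in the input, and the nondeterministic first step is simply to guess such a pair.

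For the verification step in the case $\sppm_{d,c}(\cdot,\cdot)$, where both $A$ and $B$ are part of the input, I would invoke Lemma~\ref{lem:memb1}(ii): deciding $(X,Y)\in\Mod^\bc_A(P)$ is in coNP, so both membership and non-membership can each be settled by a single call to an NP oracle. The remaining test, $T_P(Y)|_\bc\not=T_Q(Y)|_\bc$, is decidable in polynomial time, since $T_P(Y)$ and $T_Q(Y)$ are computable in polynomial time and restriction to $\bc$ is just the set difference with $B$. Consequently the guess-and-check procedure places the complementary problem in $\SigmaP{2}$, so $\sppm_{d,c}(\cdot,\cdot)$ is in $\PiP{2}$. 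For $\sppm_{d,c}(\cdot,B)$ the set $B$ is fixed and finite, so Lemma~\ref{lem:memb2}(ii) upgrades the membership test to deterministic polynomial time; the whole verification is then polynomial, the complementary problem lands in NP, and $\sppm_{d,c}(\cdot,B)$ is in coNP.

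The main obstacle, and the step that must be handled with care, is the first one: without the bound from Lemma~\ref{in0} the candidate $Y$ could in principle range over the infinite alphabet (recall that $\bc$ is infinite), and a naive guess would not be of polynomial size. Once the search is restricted to $\at(P\cup Q)\cup A$, everything else reduces to invoking the membership results at the appropriate level of the polynomial hierarchy. I would also confirm that the witness condition extracted from Theorem~\ref{thm:general} is genuinely symmetric in $P$ and $Q$, so that a single guessed pair $(X,Y)$ suffices to certify non-equivalence no matter which of the two programs it distinguishes.
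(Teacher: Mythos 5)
Your proposal is correct and follows essentially the same route as the paper's proof: both reduce non-equivalence via Theorem~\ref{thm:general} to the existence of a witness pair $(X,Y)$, bound the guess by Lemma~\ref{in0} to $X\subseteq Y\subseteq\at(P\cup Q)\cup A$ (which is finite here since the head alphabet is given directly), and then verify membership using Lemma~\ref{lem:memb1}(ii) for the $\PiP{2}$ bound and Lemma~\ref{lem:memb2}(ii) for the coNP bound when $B$ is fixed. Your explicit remark that the conclusion of Lemma~\ref{in0} is independent of the body alphabet is a useful clarification that the paper leaves implicit.
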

\begin{proof}
We start with an argument for the problem $\sppm_{d,c}(\cdot,\cdot)$.
By Theorem \ref{thm:general}, $P$ and $Q$ are not suppmin-equivalent
relative to $\caHB(A,\bc)$ if and only if there is $(X,Y)\in \Mod_A^\bc(P)
\div\Mod_A^\bc(Q)$, or there is $(X,Y)\in\Mod_A^\bc(P)$ and $T_P(Y)|_\bc\not=
T_Q(Y)|_\bc$. Thus, by Lemma \ref{in0}, to decide that $P$ and $Q$ are not
suppmin-equivalent relative to $\caHB(A,\bc)$, one can guess $X$ and $Y$
such that $X\subseteq Y\subseteq \at(P\cup Q) \cup A$ and verify that
$(X,Y)\in \Mod_A^\bc(P) \div\Mod_A^\bc(Q)$, or that $(X,Y)\in\Mod_A^\bc(P)$
and $T_P(Y)|_\bc\not=T_Q(Y)|_\bc$. By Lemma \ref{lem:memb1}(ii), deciding
the membership of $(X,Y)$ in $\Mod_A^\bc(P)$ and $\Mod_A^\bc(Q)$ can be
accomplished by means of two calls to a coNP oracle. Deciding 
$T_P(Y)|_\bc\not=T_Q(Y)|_\bc$ can be accomplished in polynomial time
(we note that $T_P(Y)|_\bc=T_P(Y)\setminus B$ and $T_Q(Y)|_\bc=T_Q(Y)
\setminus B$). The argument for the second part of the assertion is
essentially the same. The only difference is that we use Lemma 
\ref{lem:memb2}(ii) instead of Lemma \ref{lem:memb1}(ii) to obtain a stronger
bound.
\end{proof}

\smallskip
%\noindent
Lemma \ref{in0} is too weak for the membership results for
complement-direct and comp\-le\-ment-complement problems. Indeed,
for these two types of problems, it only limits $Y$ to subsets of 
$\at(P)\cup\ac$, which is infinite.
%\textbf
{To handle these two classes of problems we use results that
provide stronger limits on $Y$ and can be used in proofs of the
membership results.}
The 
proofs are quite technical. To preserve the overall flow of the argument,
we present them in the appendix.

\begin{lemma}
\label{lem:new}
Let $P,Q$ be programs and $A,B\subseteq\at$.
\begin{enumerate}
\item 
If $(X,Y)\in\Mod_\ac^B(P)\setminus\Mod_\ac^B(Q)$ then
there is $(X',Y')\in\Mod_\ac^B(P)\setminus\Mod_\ac^B(Q)$ such that 
$Y'\subseteq\At(P\cup Q)\cup A$.
\item 
If $(X,Y)\in\Mod_\ac^B(P)$ and $T_P(Y)|_B\not=T_Q(Y)|_B$,
then there is $(X',Y')\in\Mod_\ac^B(P)$ such that $T_P(Y')|_B\not=
T_Q(Y')|_B$ and $Y'\subseteq\At(P\cup Q)\cup A$.
\end{enumerate}
\end{lemma}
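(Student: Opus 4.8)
The plan is to show that the atoms of $Y$ lying outside $U:=\At(P\cup Q)\cup A$ are \emph{inert}. Writing $D:=Y\setminus U$, every atom of $D$ belongs to $\ac$ (it is not in $A$) and occurs in neither $P$ nor $Q$; hence for $R\in\{P,Q\}$ both $T_R(W)$ and the relation $W\models R$ depend only on $W\cap\At(R)$, so deleting or re-adding atoms of $D$ changes neither. The candidate witness is the trimmed pair $Y':=Y\cap U$ and $X':=X\cap U$, which satisfies $Y'\subseteq\At(P\cup Q)\cup A$ by construction; I set $E:=X\cap D=X\setminus U$, so $X=X'\cup E$ and $Y=Y'\cup D$ with $E\subseteq D\subseteq\ac$.

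The core is a transfer lemma: for every program $R$, $(X,Y)\in\Mod_\ac^B(R)$ implies $(X',Y')\in\Mod_\ac^B(R)$. Conditions (1) and (2) are immediate from inertness together with $X'\subseteq X$ and $Y'\subseteq Y$. For (3) I would lift a competitor $Z\subset Y'$ with $Z|_{\ac\cup B}=Y'|_{\ac\cup B}$ to $\hat Z:=Z\cup D\subset Y$, check $\hat Z|_{\ac\cup B}=Y|_{\ac\cup B}$ using $D\subseteq\ac$, apply (3) for $(X,Y)$ to get $\hat Z\not\models R$, and transfer back to $Z$ by inertness; (4) is analogous with $\hat Z:=Z\cup E$. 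The delicate clause is (5). Here I would first prove the auxiliary fact (K): if $(X,Y)\in\Mod_\ac^B(R)$ and $X'|_B=Y'|_B$, then $D\subseteq X$. Indeed, were $E\subsetneq D$, the set $Z:=Y'\cup E$ would satisfy $Z\subset Y$, $Z|_B=X|_B$ and $Z|_\ac\supseteq X|_\ac$, so (4) would force $Z\not\models R$, contradicting $Z\models R$ (which holds by inertness, as $Y\models R$). Given (K), $X'|_B=Y'|_B$ yields $X|_B=Y|_B$, whence (5) for $(X,Y)$ gives $Y\setminus T_R(Y)\subseteq X$ and therefore $Y'\setminus T_R(Y')\subseteq X'$.

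With the transfer lemma in hand, part (2) is immediate: $(X',Y')\in\Mod_\ac^B(P)$, and $T_P(Y')|_B=T_P(Y)|_B\neq T_Q(Y)|_B=T_Q(Y')|_B$ by inertness. For part (1) I still need $(X',Y')\notin\Mod_\ac^B(Q)$, i.e. a \emph{lifting} direction for $Q$: $(X',Y')\in\Mod_\ac^B(Q)\Rightarrow(X,Y)\in\Mod_\ac^B(Q)$, whose contrapositive is what is wanted. Lifting of (1), (2), (3) is unconditional (for (3), $Z|_{\ac\cup B}=Y|_{\ac\cup B}$ forces $D\subseteq Z$, so $Z\setminus D\subsetneq Y'$ is a legitimate competitor for $(X',Y')$). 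The only obstruction is a Case-B competitor $Z=Y'\cup S$ with $E\subseteq S\subsetneq D$ in clause (4); such a $Z$ can meet $Z|_B=X|_B$ only when $X'|_B=Y'|_B$, and clause (5) can likewise fire only when $X'|_B=Y'|_B$. This is exactly the configuration controlled by (K): since we are given $(X,Y)\in\Mod_\ac^B(P)$, fact (K) applied to $P$ forces $E=D$ whenever $X'|_B=Y'|_B$, which excludes $S\subsetneq D$. Hence the lifting for $Q$ goes through in both remaining cases ($X'|_B\neq Y'|_B$, or $E=D$), and part (1) follows.

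The step I expect to be the main obstacle is precisely clause (5) in the presence of the ``$B$-visible but program-inert'' atoms of $D\cap B$: naive trimming need not preserve (5), and the resolution is the structural fact (K), which shifts the burden onto clause (4) and crucially uses that we have genuine \emph{membership} $(X,Y)\in\Mod_\ac^B(P)$ on the $P$-side to rule out the single bad configuration. All manipulations are purely set-theoretic and remain valid even when $D$ is infinite, so no finiteness assumption on $Y$ is needed.
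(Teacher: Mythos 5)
Your proof is correct, but it is organized quite differently from the paper's. The paper proceeds by removing the offending atoms \emph{one at a time}: Lemma~\ref{in1} gives a bidirectional exchange for atoms of $X$ outside $\At(P\cup Q)\cup A$, Lemma~\ref{in3} gives one for atoms of $Y\setminus X$ (under the side condition $(\Yy)|_B\neq X|_B$), and after the iteration stabilizes a final argument shows that any residual atom of $Y'$ outside $\At(P\cup Q)\cup A$ would produce a competitor $Y'\setminus\{y\}$ contradicting condition (4). You instead trim all inert atoms in a single step and replace the two bidirectional lemmas by an asymmetric pair --- an unconditional transfer $(X,Y)\in\Mod_\ac^B(R)\Rightarrow(X',Y')\in\Mod_\ac^B(R)$ and a lifting for $Q$ that is conditional on the structural fact (K), which is the all-at-once analogue of the paper's final contradiction via condition (4). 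Your route buys two things: it sidesteps the awkward side condition of Lemma~\ref{in3} entirely (fact (K) absorbs exactly the configuration that side condition is guarding against), and, as you note, it is manifestly valid when $Y\setminus(\At(P\cup Q)\cup A)$ is infinite, whereas the paper's ``repeat the process'' phrasing is informal in that case. The paper's route buys reusable equivalence lemmas (Lemmas~\ref{in1} and~\ref{in3} are also invoked in the proof of Lemma~\ref{cor:usuppmin:2}), which a one-shot argument does not provide. I checked the delicate points --- the lift of competitors in clauses (3) and (4) of the transfer, the derivation of (K) from clause (4), and the exclusion of the ``Case B'' competitor in the lifting using (K) on the $P$-side --- and they all go through.
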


\begin{theorem}
\label{thm:memb1b}
The problems
$\sppm_{c,d}(\cdot,\cdot)$ and
$\sppm_{c,c}(\cdot,\cdot)$
are contained in the class $\PiP{2}$.
The problem
$\sppm_{c,c}(\cdot,B)$ is in the class coNP.
\end{theorem}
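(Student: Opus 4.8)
The plan is to mirror the argument of Theorem~\ref{thm:memb1a}, working throughout with the complementary (non-equivalence) problem. By Theorem~\ref{thm:general}, $P$ and $Q$ fail to be suppmin-equivalent relative to $\caHB(\ac,B)$ (for the $c,d$ case) or $\caHB(\ac,\bc)$ (for the $c,c$ case) precisely when there is a witnessing pair $(X,Y)$ that either lies in the symmetric difference $\Mod_\ac^B(P)\div\Mod_\ac^B(Q)$ (respectively $\Mod_\ac^\bc(P)\div\Mod_\ac^\bc(Q)$), or lies in $\Mod_\ac^B(P)$ (respectively $\Mod_\ac^\bc(P)$) and satisfies $T_P(Y)|_{B}\not=T_Q(Y)|_{B}$ (respectively with $\bc$ in place of $B$). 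A guess-and-check algorithm would guess $(X,Y)$ and verify one of these conditions. The obstacle, already flagged before the statement, is that the head alphabet here is $\ac$, which is infinite, so Lemma~\ref{in0} only confines $Y$ to $\at(P)\cup\ac$ and yields no usable bound on the size of the guess.

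The key step is to remove this obstacle using Lemma~\ref{lem:new}. For $\sppm_{c,d}$ the relevant set is $\Mod_\ac^B$, so the lemma applies directly: if a witness of either type exists, then one exists with $Y'\subseteq\At(P\cup Q)\cup A$. For $\sppm_{c,c}$ the body alphabet is $\bc$, and I would instantiate the ``$B$'' of Lemma~\ref{lem:new} with the set $\bc\subseteq\at$; this is legitimate because the lemma is stated for an \emph{arbitrary} subset of $\at$, and it delivers the same bound $Y'\subseteq\At(P\cup Q)\cup A$ for $\Mod_\ac^\bc$. The symmetric-difference case in which $(X,Y)$ belongs to $\Mod_\ac^{\cdot}(Q)$ rather than to $\Mod_\ac^{\cdot}(P)$ is handled by applying the lemma with $P$ and $Q$ interchanged. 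Since membership $(X,Y)\in\Mod$ forces $X\subseteq Y$ by Lemma~\ref{in0}, it then suffices to restrict the guess to pairs with $X\subseteq Y\subseteq\At(P\cup Q)\cup A$, a set of polynomial size.

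With the search space bounded, the complexity bookkeeping is routine. For $\sppm_{c,d}(\cdot,\cdot)$ and $\sppm_{c,c}(\cdot,\cdot)$ the algorithm guesses $(X,Y)$ in polynomial time and then tests membership of $(X,Y)$ in the two relevant $\Mod$ sets; by Lemma~\ref{lem:memb1}(i) (for $c,d$) and Lemma~\ref{lem:memb1}(iii) (for $c,c$) each test is in coNP, while the comparison $T_P(Y)|_{B}\not=T_Q(Y)|_{B}$ reduces to set operations ($T(Y)|_B=T(Y)\cap B$ and $T(Y)|_\bc=T(Y)\setminus B$) and is decidable in polynomial time. Hence the complementary problem is in NP with a coNP oracle, i.e.\ in $\SigmaP{2}$, and the original problems are in $\PiP{2}$. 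For $\sppm_{c,c}(\cdot,B)$ with $B$ fixed, the only change is that the membership tests become deterministic polynomial time by Lemma~\ref{lem:memb2}(i); the entire verification is then polynomial, the complementary problem is in NP, and so $\sppm_{c,c}(\cdot,B)$ is in coNP. I expect the only genuinely delicate point to be the instantiation of Lemma~\ref{lem:new} in the $c,c$ case, together with the observation that its statement is insensitive to whether the body-alphabet argument is finite or co-finite; everything else is assembling bounds that the preceding lemmas already supply.
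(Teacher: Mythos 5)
Your proposal is correct and follows essentially the same route as the paper's proof: Theorem~\ref{thm:general} to characterize non-equivalence, Lemma~\ref{lem:new} (instantiated with $\bc$ as the body alphabet in the $c,c$ case) to bound the witness $Y$ by $\At(P\cup Q)\cup A$, and then Lemma~\ref{lem:memb1}(i)/(iii) respectively Lemma~\ref{lem:memb2}(i) for the membership tests. The only difference is that you spell out two points the paper leaves implicit --- that Lemma~\ref{lem:new} applies verbatim to co-finite body alphabets and that the symmetric-difference case requires applying it with $P$ and $Q$ swapped --- which is careful but not a departure in method.
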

\begin{proof}
The argument is similar to that of Theorem \ref{thm:memb1a}.
First, we will consider the problem $\sppm_{c,d}(\cdot,\cdot)$.
By Theorem \ref{thm:general},
$P$ and $Q$ are not suppmin-equivalent relative to
$\caHB(\ac,B)$ if and only if there is $(X,Y)\in \Mod_\ac^B(P)\div
\Mod_\ac^B(Q)$, or $(X,Y)\in\Mod_\ac^B(P)$ and $T_P(Y)|_B\not=T_Q(Y)|_B$.
By Lemma \ref{lem:new}, $P$ and $Q$ are not suppmin-equivalent relative
to $\caHB(\ac,B)$ if and only if there is $(X,Y)$ such that $X\subseteq
Y\subseteq\at(P\cup Q)\cup A$ and $(X,Y)\in\Mod_\ac^B(P)\div
\Mod_\ac^B(Q)$, or $(X,Y)\in\Mod_\ac^B(P)$ and $T_P(Y)|_B\not=
T_Q(Y)|_B$.

Thus, to decide the complementary problem, it suffices to guess
$X,Y\subseteq \At(P\cup Q)\cup A$ and check that
$(X,Y)\in\Mod^B_{A^c}(P)\div\Mod^B_{A^c}(Q)$,
%$(X,Y)$ is in exactly one of $\Mod^B_{A^c}(P)$ and $\Mod^B_{A^c}(Q)$,
or that
$(X,Y)\in\Mod^B_{A^c}(P)$
and $T_P(Y)|_B \neq T_Q(Y)|_B$.
The 
first task can be decided by NP oracles (Lemma
\ref{lem:memb1}(i)), and testing $T_P(Y)|_B \neq T_Q(Y)|_B$ can
be accomplished in polynomial time.

The remaining arguments are similar. To avoid repetitions, we
only list essential differences.
In the case of $\sppm_{c,c}(\cdot,\cdot)$, we use Lemma 
\ref{lem:memb1}(iii).
To obtain a stronger upper bound for $\sppm_{c,c}(\cdot,B)$, we use 
Lemma~\ref{lem:memb2}(i) instead of Lemma \ref{lem:memb1}(iii).
\end{proof}

\smallskip
When $A$ is fixed to $\emptyset$, that is, we
have $\ac=At$, which means there is no restriction on atoms in the heads
of rules,
a stronger bound on the 
complexity of the complement-complement and complement-direct problems
can be derived. We first state a key lemma (the proof is in the appendix).

\begin{lemma}\label{cor:usuppmin:2}
Let $P,Q$ be programs and $B\subseteq \at$.
If $\Mod_\at^B(P)\not=\Mod_\at^B(Q)$, then there is
$Y\subseteq\at(P\cup Q)$
such that $Y$ is a model of exactly one of $P$ and $Q$, or there is
$a\in Y$ such that $(Y\setminus\{a\},Y)$
belongs to exactly one of $\Mod_\at^B(P)$ and $\Mod_\at^B(Q)$.
\end{lemma}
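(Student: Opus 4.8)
The plan is to specialize the definition of $\Mod_A^B$ to the case $A=\At$ and then extract, from any element of the symmetric difference, a witness of the restricted form demanded by the conclusion. First, observe that for $A=\At$ the defining conditions collapse: $\Mod_\At(P)$ is simply the set of all models of $P$; condition (2) becomes $X\subseteq Y$; condition (3) is vacuous (no $Z\subset Y$ satisfies $Z|_\At=Y|_\At$); condition (4) reads ``for every $Z$ with $X\subseteq Z\subset Y$ and $Z|_B=X|_B$ we have $Z\not\models P$''; and condition (5) reads ``if $X|_B=Y|_B$ then $Y\setminus T_P(Y)\subseteq X$''. Assuming $\Mod_\At^B(P)\neq\Mod_\At^B(Q)$, by symmetry I may fix a pair $(X,Y)\in\Mod_\At^B(P)\setminus\Mod_\At^B(Q)$. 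Since $\At=\emptyset^c$, I would first apply Lemma~\ref{lem:new} with $A=\emptyset$ to replace this pair by one, still denoted $(X,Y)$, that still lies in $\Mod_\At^B(P)\setminus\Mod_\At^B(Q)$ and additionally satisfies $Y\subseteq\at(P\cup Q)$. This secures the alphabet restriction required throughout the rest of the argument.

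Next I would split on whether $Y$ is a model of $Q$. If $Y\not\models Q$, then since $Y\models P$ by condition (1) for $P$, the set $Y\subseteq\at(P\cup Q)$ is a model of exactly one of $P$ and $Q$, giving the first disjunct. So suppose $Y\models Q$. Because $Y\models P$, $Y\models Q$ and $X\subseteq Y$, conditions (1) and (2) hold for both programs, so the discrepancy must arise from condition (4) or (5) for $Q$. If condition (4) fails for $Q$, there is $Z$ with $X\subseteq Z\subset Y$, $Z|_B=X|_B$ and $Z\models Q$; since condition (4) holds for $P$ at $(X,Y)$, this same $Z$ satisfies $Z\not\models P$. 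Then $Z\subseteq Y\subseteq\at(P\cup Q)$ is a model of exactly one of $P$ and $Q$, again yielding the first disjunct.

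The remaining case --- condition (4) holds for $Q$ but condition (5) fails for $Q$ --- is the heart of the proof. Here $X|_B=Y|_B$ and there is $a\in(Y\setminus T_Q(Y))\setminus X$. From $X|_B=Y|_B$ and $a\in Y\setminus X$ one gets $a\notin B$ (otherwise $a\in Y|_B=X|_B\subseteq X$). Now $X\subseteq Y\setminus\{a\}\subset Y$ and $(Y\setminus\{a\})|_B=Y|_B=X|_B$, so condition (4) for $P$ forces $Y\setminus\{a\}\not\models P$; and condition (5) for $P$, active since $X|_B=Y|_B$, gives $Y\setminus T_P(Y)\subseteq X$, whence $a\in T_P(Y)$ because $a\in Y\setminus X$. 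Specializing the membership criterion to a pair $(Y\setminus\{a\},Y)$ with $a\notin B$, one checks that $(Y\setminus\{a\},Y)\in\Mod_\At^B(R)$ holds iff $Y\models R$, $Y\setminus\{a\}\not\models R$, and $a\in T_R(Y)$. For $R=P$ all three conditions hold, so $(Y\setminus\{a\},Y)\in\Mod_\At^B(P)$; for $R=Q$ we have $a\notin T_Q(Y)$, so $(Y\setminus\{a\},Y)\notin\Mod_\At^B(Q)$. Since $a\in Y\subseteq\at(P\cup Q)$, this is exactly the second disjunct.

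I expect this last case to be the main obstacle: the key observation is that condition (4) of the \emph{original} witness for $P$ already delivers $Y\setminus\{a\}\not\models P$, and that condition (5) for $P$ delivers $a\in T_P(Y)$, so that the single-atom pair $(Y\setminus\{a\},Y)$ inherits membership in $\Mod_\At^B(P)$, while the violation of condition (5) for $Q$ (namely $a\notin T_Q(Y)$) keeps it out of $\Mod_\At^B(Q)$. The other cases are routine once the restriction $Y\subseteq\at(P\cup Q)$ is in hand via Lemma~\ref{lem:new}, using only that restricting a set to $\at(P\cup Q)$ preserves modelhood of $P$ and of $Q$.
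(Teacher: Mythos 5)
Your proof is correct and follows essentially the same route as the paper's: reduce to $Y\subseteq\at(P\cup Q)$ via Lemma~\ref{lem:new}, dispose of the cases where $P$ and $Q$ are separated by a model (you do this by checking conditions (1) and (4) for $Q$ locally, where the paper assumes up front that $P$ and $Q$ have the same models), and then build the witness $(Y\setminus\{a\},Y)$ from an atom $a\in(Y\setminus T_Q(Y))\setminus X$, using $a\notin B$ exactly as the paper does. The verification that this pair lies in $\Mod_\at^B(P)$ but not in $\Mod_\at^B(Q)$ matches the paper's argument step for step.
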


\begin{theorem}
\label{thm:memb1}
The problems 
$\sppm_{c,c}(\emptyset,\cdot)$
and
$\sppm_{c,d}(\emptyset,\cdot)$
are in the class coNP.
\end{theorem}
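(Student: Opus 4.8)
The plan is to show that the complementary problems lie in NP; the stated coNP membership then follows. Throughout, write $B'$ for $B$ when the body alphabet is given directly (the case $\sppm_{c,d}(\emptyset,\cdot)$) and for $\bc$ when it is given as a complement (the case $\sppm_{c,c}(\emptyset,\cdot)$); in both cases the head alphabet is $\emptyset^c=\at$. By Theorem~\ref{thm:general}, $P$ and $Q$ \emph{fail} to be suppmin-equivalent relative to $\caHB(\at,B')$ precisely when (I) $\Mod_\at^{B'}(P)\neq\Mod_\at^{B'}(Q)$, or (II) there is $(X,Y)\in\Mod_\at^{B'}(P)$ with $T_P(Y)|_{B'}\neq T_Q(Y)|_{B'}$. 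The goal is to show that each disjunct admits a witness of size polynomial in the input that can be verified in polynomial time.

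The first observation I would record is that, because the head alphabet is all of $\at$, the defining conditions of $\Mod_\at^{B'}$ collapse. For $X=Y$, conditions (2)--(5) in the definition of $\Mod_A^B$ hold automatically: conditions (3) and (4) are vacuous since $Z|_{\at}=Z$ forces $Z=Y$, and condition (5) is trivial; hence $(Y,Y)\in\Mod_\at^{B'}(R)$ if and only if $Y\models R$. For $X=Y\setminus\{a\}$ with $a\in Y$ the same analysis shows that $(Y\setminus\{a\},Y)\in\Mod_\at^{B'}(R)$ holds if and only if $Y\models R$, $Y\setminus\{a\}\not\models R$, and, in case $a\notin B'$, $a\in T_R(Y)$. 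Each of these is decidable in polynomial time (testing $a\notin B'$ amounts to $a\notin B$ or $a\in B$ according to whether $B'=B$ or $\bc$). Thus membership in $\Mod_\at^{B'}$ of pairs of this special shape is a polynomial-time test, even though $B$ is part of the input.

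With these facts in hand I would assemble the NP algorithm for the complement. For disjunct (I), Lemma~\ref{cor:usuppmin:2} guarantees a witness $Y\subseteq\at(P\cup Q)$ that is a model of exactly one of $P$ and $Q$, or an $a\in Y$ with $(Y\setminus\{a\},Y)$ in exactly one of $\Mod_\at^{B'}(P)$ and $\Mod_\at^{B'}(Q)$; by the preceding paragraph both alternatives are polynomial-time checkable once $Y$ (and $a$) are guessed. For disjunct (II), I would bound $Y$ by projecting: if $(X,Y)\in\Mod_\at^{B'}(P)$ and $T_P(Y)|_{B'}\neq T_Q(Y)|_{B'}$, set $Y'=Y\cap\at(P\cup Q)$. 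Since the bodies and heads of $P$ and $Q$ involve only atoms in $\at(P\cup Q)$, one checks $T_P(Y')=T_P(Y)$ and $T_Q(Y')=T_Q(Y)$, and that $Y'\models P$ iff $Y\models P$ (similarly for $Q$). Hence $(Y',Y')\in\Mod_\at^{B'}(P)$, because $Y'\models P$, and $T_P(Y')|_{B'}\neq T_Q(Y')|_{B'}$, so $Y'\subseteq\at(P\cup Q)$ is a bounded witness. Splitting on whether $Y'$ is a model of $Q$, this reduces either to a model-difference witness as in (I) or to the polynomial-time check ``$Y'\models P$, $Y'\models Q$, and $T_P(Y')|_{B'}\neq T_Q(Y')|_{B'}$''. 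Putting the cases together, the complement is decided by guessing a set $Y\subseteq\at(P\cup Q)$ (and optionally an element $a\in Y$) and running the polynomial-time verifications above; hence the complement is in NP and the two problems are in coNP.

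The step I expect to require the most care is disjunct (II): Lemma~\ref{cor:usuppmin:2} only constrains the symmetric difference of the $\Mod$-sets and says nothing about the $T$-condition, so the projection argument bounding $Y$ for (II) — together with the verification that membership of $(Y,Y)$ reduces to $Y\models P$ — must be carried out separately. The collapse of the $\Mod_\at^{B'}$ conditions for pairs $(Y,Y)$ and $(Y\setminus\{a\},Y)$ is what ultimately keeps the verification in $\Pol$ and prevents the coNP oracle call that would otherwise push the bound up to $\PiP{2}$.
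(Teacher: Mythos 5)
Your proof is correct and follows essentially the same route as the paper's: it uses Lemma~\ref{cor:usuppmin:2} to obtain small witnesses for a difference in $\Mod_\at^{B'}$, handles the $T_P$/$T_Q$ discrepancy separately by projecting $Y$ onto $\at(P\cup Q)$, and exploits the collapse of the defining conditions of $\Mod_\at^{B'}$ for pairs $(Y,Y)$ and $(Y\setminus\{a\},Y)$ to keep the verification in polynomial time. The only (harmless) difference is that you treat $\sppm_{c,d}(\emptyset,\cdot)$ and $\sppm_{c,c}(\emptyset,\cdot)$ uniformly, whereas the paper cites prior work for the former and proves only the latter.
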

\begin{proof}
The case of $\sppm_{c,d}(\emptyset,\cdot)$
was settled before by %Truszczy\'nski and Woltran 
\citeNS{tw08a} (they
denoted the problem by $\sppm_\at$).
Thus, we consider only the problem 
$\sppm_{c,c}(\emptyset,\cdot)$. We will show that the 
following nondeterministic algorithm verifies, given programs 
$P$, $Q$ and a set $B\subseteq\At$, that $P$ and $Q$
are not suppmin-equivalent relative to 
%$\caHB^n(\at,\bc)$.
$\caHB(\at,\bc)$.
We guess a
pair $(a,Y)$, where $Y\subseteq \at(P\cup Q)$, and $a\in\at(P\cup Q)$
such that
(a) $Y$ is a model of exactly one of
$P$ and $Q$; or
(b) $a\in Y$ and $(Y\setminus\{a\},Y)$ belongs to exactly
one of $\Mod_\at^\bc(P)$ and $\Mod_\at^\bc(Q)$; or
(c) $Y$ is model of $P$ and $T_P(Y)\setminus B\neq T_Q(Y)
\setminus B$.

Such a pair exists if and only if $P$ and $Q$ are not
suppmin-equivalent relative to 
%$\caHB^n(\at,\bc)$.
$\caHB(\at,\bc)$.
Indeed, let us assume that
such a pair $(a,Y)$ exists. If (a) holds for $(a,Y)$, say $Y$ is a model
of $P$ but not of $Q$, then $(Y,Y)\in\Mod_\at^\bc(P)\setminus \Mod_\at^\bc(Q)$
(easy to verify from the definition of $\Mod_\at^\bc(\cdot)$). Thus, 
$\Mod_\at^\bc(P)\not=\Mod_\at^\bc(Q)$ and, by Theorem \ref{thm:general}, 
$P$ and $Q$ are not suppmin-equivalent relative to 
%$\caHB^n(\at,\bc)$.
$\caHB(\at,\bc)$.
If 
(b) holds for $(a,Y)$, $\Mod_\at^B(P)\not=\Mod_\at^B(Q)$ again, and we 
are done, as above, by Theorem \ref{thm:general}. Finally, if (c) holds,
$(Y,Y)\in\Mod_\at^\bc(P)$ (as $Y\models P$) and
$T_P(Y)|_\bc= T_P(Y)\setminus B\neq T_Q(Y) \setminus B = T_Q(Y)|_\bc$.
Thus, one more time by Theorem \ref{thm:general}, $P$ and $Q$ are not 
suppmin-equivalent relative to 
%$\caHB^n(\at,\bc)$.
$\caHB(\at,\bc)$.

Conversely, if $P$ and $Q$ are not suppmin-equivalent relative to
%$\caHB^n(\at,\bc)$, then $\Mod_\at^\bc(P)\not=\Mod_\at^\bc(Q)$, or 
$\caHB(\at,\bc)$, then $\Mod_\at^\bc(P)\not=\Mod_\at^\bc(Q)$, or 
there is $(X,Y)\in \Mod_\at^\bc(P)$ such that $T_P(Y)|_\bc\neq T_Q(Y)|_\bc$.
By Lemma \ref{cor:usuppmin:2}, if $\Mod_\at^\bc(P)\not=\Mod_\at^\bc(Q)$
then 
%The former implies (by Lemma \ref{cor:usuppmin:2}) that 
there is 
$(a,Y)$ such that $Y\subseteq \at(P\cup Q)$ and $(a,Y)$ satisfies (a)
or (b). Thus, let us assume that there is
$(X,Y)\in \Mod_\at^\bc(P)$ such that $T_P(Y)|_\bc\neq T_Q(Y)|_\bc$.
Then, $Y\models P$ and $T_P(Y)|_\bc\neq T_Q(Y)|_\bc$
or, equivalently, $T_P(Y)\setminus B\neq T_Q(Y)\setminus B$.
Let $Y'=Y\cap \at(P\cup Q)$. Clearly, $Y'\models P$, $T_P(Y)=T_P(Y')$, 
and $T_Q(Y)=T_Q(Y')$. Thus, $T_P(Y')\setminus B\neq T_Q(Y')\setminus B$.
Picking any $a\in\at(P\cup Q)$ (since $P$ and $Q$ are not 
suppmin-equivalent relative to 
%$\caHB^n(\at,\bc)$, $\at(P\cup Q)\not=
$\caHB(\at,\bc)$, $\at(P\cup Q)\not=
\emptyset$) yields a pair $(a,Y')$, with $Y'\subseteq \at(P\cup Q)$, 
for which (c) holds.

It follows that the algorithm is correct. Moreover, checking whether
$Y\models P$ and $Y \models Q$ can clearly be done in polynomial time
in the total size of
$P$, $Q$, and $B$; the same holds 
for checking $T_P(Y)\setminus B\neq T_Q(Y)\setminus B$.
Finally, testing $(Y\setminus \{a\},Y)\in\Mod_\at^\bc(P)$ and 
$(Y\setminus\{a\},Y)\in\Mod_\at^\bc(Q)$ are polynomial-time tasks 
(with respect to the size of the input), too. The conditions
(1) - (3) and (5) are evident. To verify the condition (4), we need to 
verify that $Z\not\models P$ for just one set $Z$, namely $Z=Y\setminus
\{a\}$. Thus, the algorithm runs in polynomial time.  It follows that 
the complement of our problem is in the class NP. 
%and so the assertion
%follows.
%~\hfill$\Box$
\end{proof}

%\smallskip
%\noindent
\subsection{Lower bounds and exact complexity results}

We start with direct-complement problems. 

\begin{theorem}
\label{thm:suppBAc1}
The problem $\sppm_{d,c}(A,\cdot)$ is $\PiP{2}$-hard.
\end{theorem}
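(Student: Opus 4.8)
The plan is to reduce from the validity problem for quantified Boolean formulas of the form $\Phi = \forall Y \exists Z\, \varphi$, where $\varphi$ is a propositional formula (say, in CNF) over the variables $Y \cup Z$; this problem is $\PiP{2}$-complete. Given $\Phi$, I would construct in polynomial time two (normal) programs $P$ and $Q$ and a finite set $B$ such that $P$ and $Q$ are suppmin-equivalent relative to $\caHB(A, B^c)$ if and only if $\Phi$ is valid. All atoms introduced in the construction will be chosen fresh, that is, disjoint from the fixed set $A$, so that $A$ plays no active role and the reduction works uniformly for the fixed finite $A$ occurring in the problem.

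For the encoding, I would introduce, for each propositional variable $v \in Y \cup Z$, a pair of atoms representing its two truth values (using a choice gadget such as $v \lar \nt \hat v$ and $\hat v \lar \nt v$, as in the proof of Theorem~\ref{h-revsupp}), together with auxiliary atoms for testing the clauses of $\varphi$. The crucial design decision is the placement of atoms relative to $B$: the atoms encoding the guess of the universal variables $Y$, which must be ``visible'' to the characterization, are kept in $B^c$, while the atoms encoding the existential witnesses $Z$ and the clause-checking machinery are placed inside $B$, so that projection onto $B^c$ in the conditions defining $\Mod_A^{B^c}$ hides them. The program $Q$ would be a reference program (for instance, $P$ augmented with rules forcing one extra candidate model, or a saturated variant of $P$) chosen so that the inclusion $\Mod_A^{B^c}(P) \subseteq \Mod_A^{B^c}(Q)$ holds unconditionally, and so that the reverse inclusion, together with the agreement of the truncated operators $T_P(Y)|_{B^c}$ and $T_Q(Y)|_{B^c}$ demanded by Theorem~\ref{thm:general}, holds precisely when every assignment to $Y$ can be extended by some $Z$ satisfying $\varphi$.

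Using Theorem~\ref{thm:general}, (non-)equivalence reduces to the existence of a distinguishing pair $(X,Y) \in \Mod_A^{B^c}(P) \div \Mod_A^{B^c}(Q)$, or of a pair witnessing a difference in the $T$-values. I would arrange the programs so that the candidate interpretations $Y$ correspond to assignments of the universal variables, while conditions (3) and especially (4) in the definition of $\Mod_A^B$ --- the requirement that no $Z \subset Y$ with $Z|_{B^c} = X|_{B^c}$ and $Z|_A \supseteq X|_A$ be a model --- encode the inner existential test $\exists Z\, \varphi$. Concretely, a distinguishing pair should be forced to exist exactly when some $Y$-assignment admits no satisfying $Z$, that is, exactly when $\Phi$ fails. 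Both directions of the correctness argument then amount to checking, directly from the definitions, that the pairs in $\Mod_A^{B^c}(P)$ and in $\Mod_A^{B^c}(Q)$ match up if and only if $\Phi$ is valid.

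The main obstacle I anticipate is engineering condition~(4) correctly: its quantification over subsets $Z \subset Y$ constrained simultaneously on $B^c$ and on $A$ must be made to mirror the search for a satisfying assignment of the existential variables, while the visibility constraints imposed by the cofinite body alphabet $B^c$ keep the witness atoms private to the programs and out of reach of any context in $\caHB(A, B^c)$. Getting this interaction exactly right --- so that neither spurious distinguishing pairs appear when $\Phi$ holds, nor genuine ones are suppressed when $\Phi$ fails --- is where the bulk of the technical care will go; the polynomial-time computability of the reduction is immediate.
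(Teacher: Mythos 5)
Your high-level strategy is the right one and matches the paper's: reduce from the validity of $\forall Y\exists X\,\varphi$, invoke Theorem~\ref{thm:general} to turn (non-)equivalence into a comparison of the sets $\Mod_A^{B^c}(\cdot)$, and exploit the fact that the subset-quantified conditions in the definition of $\Mod_A^{B^c}$ range over interpretations that agree on $A\cup B^c$ --- so that atoms placed inside the finite $B$ become ``invisible'' and the quantification over them performs the inner $\exists X$ test. However, what you have written is a plan, not a proof, and the part you explicitly defer (``where the bulk of the technical care will go'') is essentially the entire content of the hardness argument. You never exhibit the programs $P$ and $Q$, and without them none of the claims (the unconditional inclusion one way, the equivalence of the reverse inclusion with validity, the agreement of $T_P(Y)|_{B^c}$ and $T_Q(Y)|_{B^c}$ on all relevant $Y$) can be checked. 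The paper's construction is not a routine instantiation of your outline: it requires a specific saturation gadget (rules $x\lar u,u'$, $x'\lar u,u'$ and $x\lar\hat c$, $x'\lar\hat c$) so that $P$ acquires, for each universal assignment $I$, an extra classical model $I\cup(Y\setminus I)'\cup X\cup X'$ that is \emph{minimal} --- and hence survives into $\Mod_A^{B^c}(P)$ --- exactly when $I$ admits no witness $J$, while $Q$ (which carries the constraints $\lar x,x'$) has no such model at all. Your gadget of plain choice rules $v\lar\nt\hat v$, $\hat v\lar\nt v$ for the existential variables would not produce these saturated candidates, and your description of $Q$ as ``$P$ augmented with rules forcing one extra candidate model'' has the roles reversed relative to what is needed.

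Two further design choices in your sketch diverge from what actually works and would need repair. First, the paper sets $B=X\cup X'\cup Y\cup Y'$, i.e.\ \emph{all} program atoms (universal and existential alike) go into $B$; this makes $N|_{A\cup B^c}$ strip out every program atom, so that condition (3) --- quantifying over all $Z\subset N$ with $Z|_{A\cup B^c}=N|_{A\cup B^c}$ --- reduces exactly to minimality of $N$ as a model of $P$, which is where $\exists X$ is decided. Your placement keeps the universal atoms in $B^c$; then conditions (2)--(5) no longer collapse in this clean way, condition (5) and the $T$-agreement clause become nontrivial constraints involving the $Y$-atoms, and you would have to re-verify everything. Second, you pin your hopes on condition (4), but in the paper's argument it is condition (3) that kills the spurious saturated pairs when the QBF is true; it is not evident that (4) alone can be made to do this job. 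None of these issues is obviously fatal, but resolving them is precisely the proof, and it is absent.
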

%\vspace*{-0.1in}
\smallskip
{\normalfont\itshape Proof}\\
Let $\forall Y\exists X \varphi$ be a QBF, where $\varphi$ is a
CNF formula over $X\cup Y$. We can assume that $A\cap X=\emptyset$
(if not, variables in $X$ can be renamed). Next, we can assume that
$A\subseteq Y$ (if not, one can add to $\vph$ ``dummy'' clauses $y\vee \neg y$, for $y\in Y$).
%Indeed, $\vph^{+}$ obtained by expanding $\vph$ with
%clauses $z\vee\neg z$, for each $z\in A$, has the property that $\forall
%Y\exists X \varphi$ is true if and only if $\forall Y^{+}\exists X \varphi^{+}$
%is true, where $Y^{+}=Y\cup A$.
%
We will construct programs $P(\varphi)$ and $Q(\varphi)$, and a set $B$,
so that $\forall Y\exists X\varphi$ is true if and only if $P(\vph)$ and 
$Q(\vph)$ are suppmin-equivalent relative to $\caHB(A,B^c)$. Since the
problem to
decide whether a given QBF $\forall Y\exists X \varphi$ is true is
$\PiP{2}$-complete, the assertion will follow.
 
For every atom $z\in X\cup Y$, we introduce a fresh atom
$z'$ (in particular, in such a way that $z'\notin A$). Given a set of ``non-primed'' atoms $Z$, we define $Z'=\{z'\st z\in
Z\}$. Thus, we have $A\cap (Y'\cup X')=\emptyset$. 
%Next, for 
%a clause $c=z_1\vee\cdots \vee z_k\vee\neg  z_{k+1}\vee
%\cdots\vee\neg z_m$, we denote by $\hat{c}$ the sequence 
%$z'_1,\ldots,z'_k,z_{k+1},\ldots,z_m$.
%
%Finally, we
We use $\hat{c}$ as in the proof of Theorem~\ref{h-revsupp} and
define the following programs:
\begin{eqnarray*}
P(\varphi) & = & \{ z \lar \nt z';\;  z' \lar \nt z \mid z\in X\cup Y\}\cup
\{ \lar y,y' \mid y \in Y \}\cup \\
%&& \{ \lar y,y' \mid y \in Y \}\cup \\
&& \{ x \lar u,u';\; x'\lar u,u' \mid x,u\in X \}\cup\\
&& \{ x \lar \hat{c};\; x' \lar \hat{c} \mid x\in X, c\mbox{\ is a clause in\ } \varphi\};\\
Q(\varphi) & = & \{ z \lar \nt z';\;  z' \lar \nt z \mid z\in X\cup Y\}\cup
\{ \lar z,z' \mid z \in X\cup Y \}\cup \\
%&& \{ \lar z,z' \mid z \in X\cup Y \}\cup \\
&& \{ \lar \hat{c} \mid c\mbox{\ is a clause in\ } \varphi \}.
\end{eqnarray*}
To simplify notation, from now on we write $P$ for $P(\vph)$ and $Q$
for $Q(\vph)$. We also define $B=X\cup X'\cup Y\cup Y'$. We observe that 
$\At(P)=\At(Q)=B$.

One can check that the models of $Q$ contained in $B$
are sets of type
%\begin{equation}\label{eq:modB}
\begin{enumerate}
\item
$I\cup (Y\setminus I)'\cup J \cup (X\setminus J)'$,
%\end{equation}
where $J\subseteq X$, $I\subseteq Y$ and $I\cup J\models \varphi$.
\end{enumerate}
Each model of $Q$ is also a model of $P$ but $P$ has additional models
contained in $B$, viz. %. They are of the form:
%\begin{equation}\label{eq:spoiledB}
\begin{enumerate}
\item[2.]
$I \cup (Y\setminus I)' \cup X\cup X'$,
%\end{equation}
for \emph{each} $I\subseteq Y$.
\end{enumerate}
Clearly, for each model $M$ of $Q$ such that $M\subseteq B$, $T_Q(M)=M$.
Similarly, for each model $M$ of $P$ such that $M\subseteq B$, $T_P(M)=M$.
Hence, each such model $M$ is also supported for both $P$ and $Q$.

From these comments, it follows that for every model $M$ of $Q$ ($P$, respectively),
$T_Q(M)=M\cap B$ ($T_P(M)=M\cap B$, respectively). Thus, for every
model $M$ of both $P$ and $Q$, $T_Q(M)|_{B^c}=T_P(M)|_{B^c}$.
It follows that
$P$ and $Q$ are suppmin-equivalent 
with respect to $\caHB(A,\bc)$
if and only if $\Mod_A^{B^c}(P)=
\Mod_A^{B^c}(Q)$ (indeed, we recall that if $(N,M)\in\Mod_A^{B^c}(R)$ then
$M$ is a model of $R$).

Let us assume that $\forall Y\exists X \varphi$ is false. Hence, there
exists an assignment $I\subseteq Y$ to atoms $Y$ such that for every
$J\subseteq X$, $I\cup J\not\models \varphi$. Let $N=I
\cup(Y\setminus I)'\cup X\cup X'$.
We will show that $(N|_{A\cup {B^c}},N)\in\Mod_A^{B^c}(P)$.

Since $N$ is a supported model of $P$, $N\in\Mod_A(P)$. The requirement
(2) for $(N|_{A\cup {B^c}},N)\in\Mod_A^{B^c}(P)$ is evident. The
requirement (5) holds, since $N\setminus T_P(N)=\emptyset$. By the
property of $I$, $N$ is a minimal model of $P$. Thus, the requirements
(3) and (4) hold, too. It follows that $(N|_{A\cup {B^c}},N)\in\Mod_A^{B^c}(P)$,
as claimed. Since $N$ is not a model of $Q$, $(N|_{A\cup {B^c}},N)\notin
\Mod_A^{B^c}(Q)$.

Let us assume that $\forall Y\exists X \varphi$ is true. First, %we
observe that $\Mod_A^{B^c}(Q)\subseteq \Mod_A^{B^c}(P)$. Indeed, let
$(M,N)\in\Mod_A^{B^c}(Q)$. It follows that $N$ is a model of $Q$ and,
consequently, of $P$. From our earlier comments, it follows that
$T_Q(N)=T_P(N)$. Since $N\setminus T_Q(N)\subseteq A$, $N\setminus T_P(N)
\subseteq A$. Thus, $N\in\Mod_A(P)$.  Moreover, if $M|_{B^c}=N|_{B^c}$
then $N\setminus T_Q(N)\subseteq M$ and, consequently, $N\setminus T_P(N)
\subseteq M$. Thus, the requirement (5) for $(M,N)\in\Mod_A^{B^c}(P)$ holds.
The condition $M\subseteq N|_{A\cup {B^c}}$ is evident (it holds as $(M,N)
\in\Mod_A^{B^c}(Q)$). Since $N$ is a model of $Q$, $N=N'\cup V$, where $N'$
is %of the form (\ref{eq:modB}) 
a model of type 1
and $V\subseteq \At\setminus B$. Thus,
every model $Z\subset N$ of $P$ is also a model of $Q$. It implies that
the requirements (3) and (4) for $(M,N)\in\Mod_A^{B^c}(P)$ hold. Hence,
$(M,N)\in\Mod_A^{B^c}(P)$ and, consequently, $\Mod_A^{B^c}(Q)\subseteq
\Mod_A^{B^c}(P)$.

We will now use the assumption that $\forall Y\exists X \varphi$ is true
to prove the converse inclusion, i.e., $\Mod_A^{B^c}(P)\subseteq \Mod_A^{B^c}(Q)$.
To this end, let us consider $(M,N)\in
\Mod_A^{B^c}(P)$. If $N=N'\cup V$, where $N'$ is of %the form (\ref{eq:modB})
type 1
and $V\subseteq \At\setminus B$, then arguing as above, one can show that
$(M,N)\in\Mod_A^{B^c}(Q)$. Therefore, let us assume that $N=N'\cup V$, where
$N'$ is of %the form %(\ref{eq:spoiledB}) 
type 2 and $V\subseteq \At\setminus B$.
More specifically, let $N'=I\cup(Y\setminus I)'\cup X\cup X'$, for some $I\subseteq Y$. By our
assumption, there is $J\subseteq X$ such that $I\cup J\models \vph$.
It follows that
%That is, 
$Z=I\cup(Y\setminus I)'\cup J\cup (X\setminus J)'\cup V$ is a model
of $P$. Clearly, $Z\subset N$. Moreover, 
since $B^c\cap (X\cup X'\cup Y\cup Y')=A\cap (X\cup X'\cup Y\cup Y')=\emptyset$,
% and $A\subseteq Y$,
%since $Z,N\subseteq B$, 
%it follows that 
we have $Z|_{A\cup {B^c}}=N|_{A\cup {B^c}}$. Since $(M,N)\in\Mod_A^{B^c}(P)$,
the requirement (3) implies that $Z$ is not a model of $P$, a
contradiction. Hence, the latter case is impossible and $\Mod_A^{B^c}(P)
\subseteq\Mod_A^{B^c}(Q)$ follows.

We proved that $\forall Y\exists X \varphi$ is true if and only if
$\Mod_A^{B^c}(P)=\Mod_A^{B^c}(Q)$. This completes the proof of the assertion.
\hspace*{1em}\proofbox

\begin{theorem}\label{thm:suppBAc2}
The problem $\sppm_{d,c}(A,B)$ is coNP-hard.
\end{theorem}
\begin{proof}
Let us consider a CNF formula $\vph$ over a set of atoms $Y$. Without loss of generality we can 
assume that $Y\cap B=\emptyset$. For each atom $y\in Y$, we introduce a
fresh atom $y'$. Thus, in particular, $B\cap (Y\cup Y')=\emptyset$. Finally,
we consider programs $P(\vph)$ and $Q=\{f\lar;\; \lar f\}$ from the proof of 
Theorem~\ref{h-revsupp}.
In the remainder of the proof, we write $P$ for $P(\vph)$.

From the proof of Theorem~\ref{h-revsupp}, we know that $P$ %(here and in 
%the remainder of the proof, we write $P$ for $P(\vph)$) 
has a model
if and only if $\vph$ has a model (is satisfiable).
We will now show that $\Mod^\bc_A(P)\neq\emptyset$ if and only if 
$\vph$ is satisfiable. It is easy to check that $\Mod^\bc_A(Q)=\emptyset$.
Thus, the assertion will follow by Theorem \ref{thm:general}.
%$Q$ is suppmin-equivalent to $P$
%relative to $\caHB(A,\bc)$ if and only if $\Mod^\bc_A(P)=\emptyset$.

Let us assume that $P$ has a model. Then $P$ has a model, say $M$, such that 
$M\subseteq Y\cup Y'$.
We show that $(M,M)\in \Mod^\bc_A(P)$. Indeed,
since $T_P(M)=M$, $M\in\Mod_A(P)$. Also, since $Y\cup Y'
\subseteq \bc$, $M|_{A\cup\bc}
=M$ and so, $M\subseteq M|_{A\cup\bc}$. Lastly, $M\setminus T_P(M)=\emptyset
\subseteq M$. Thus, the conditions (1), (2) and (5) for $(M,M)\in 
\Mod^\bc_A(P)$ hold. Since $M|_{A\cup\bc}=M$ and $M|_{\bc}=M$, there 
is no $Z\subset M$ such that $Z|_{A\cup\bc}= M|_{A\cup\bc}$ or 
$Z|_\bc=M|_\bc$. Thus, also the conditions (3) and (4) hold, %. It follows
and
$\Mod^\bc_A(P)\not=\emptyset$ follows.
Conversely, let $\Mod^\bc_A(P)\not=\emptyset$ and let $(N,M)\in
\Mod^\bc_A(P)$. Then $M\in\Mod_A(P)$ and, in particular, $M$ is a model
of $P$.  
\end{proof}

\smallskip
Combining Theorems \ref{thm:suppBAc1} and \ref{thm:suppBAc2} with 
Theorem \ref{thm:memb1a} yields the following result that fully determines the 
complexity of direct-complement problems.

\begin{corollary}\label{cor:suppmindc}
The problems $\sppm_{d,c}(A,\cdot)$ and $\sppm_{d,c}(\cdot,\cdot)$ 
are $\PiP{2}$-complete. The problems 
$\sppm_{d,c}(A,B)$ and $\sppm_{d,c}(\cdot,B)$ 
are coNP-complete.
\end{corollary}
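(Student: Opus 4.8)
The plan is to assemble Corollary~\ref{cor:suppmindc} purely from results already in hand, since it is a summary statement rather than a new argument. The four problems it concerns are $\sppm_{d,c}(A,\cdot)$, $\sppm_{d,c}(\cdot,\cdot)$, $\sppm_{d,c}(A,B)$, and $\sppm_{d,c}(\cdot,B)$, and for each I would match an upper bound from the membership results with a lower bound from the hardness results, invoking the reductions of Figure~\ref{fig1} to transport hardness or membership between the parametrized and the $\cdot$-versions where needed.

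First I would handle the two $\PiP{2}$-complete claims. For membership, Theorem~\ref{thm:memb1a} directly states that $\sppm_{d,c}(\cdot,\cdot)$ is in $\PiP{2}$; since the diagram in Figure~\ref{fig1} gives a polynomial-time reduction from $\sppm_{d,c}(A,\cdot)$ to $\sppm_{d,c}(\cdot,\cdot)$ (replacing a fixed $A$ by an input $A$ can only make the problem harder), $\sppm_{d,c}(A,\cdot)$ is also in $\PiP{2}$. For hardness, Theorem~\ref{thm:suppBAc1} establishes that $\sppm_{d,c}(A,\cdot)$ is $\PiP{2}$-hard; again by Figure~\ref{fig1}, hardness of the more constrained problem $\sppm_{d,c}(A,\cdot)$ propagates to the less constrained $\sppm_{d,c}(\cdot,\cdot)$. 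Combining, both $\sppm_{d,c}(A,\cdot)$ and $\sppm_{d,c}(\cdot,\cdot)$ are $\PiP{2}$-complete.

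Next I would treat the two coNP-complete claims symmetrically. Theorem~\ref{thm:memb1a} gives that $\sppm_{d,c}(\cdot,B)$ is in coNP, and via the reduction from $\sppm_{d,c}(A,B)$ to $\sppm_{d,c}(\cdot,B)$ in Figure~\ref{fig1}, the problem $\sppm_{d,c}(A,B)$ lies in coNP as well. For the matching lower bound, Theorem~\ref{thm:suppBAc2} shows $\sppm_{d,c}(A,B)$ is coNP-hard, and hardness lifts to $\sppm_{d,c}(\cdot,B)$ along the same arrow. Hence both $\sppm_{d,c}(A,B)$ and $\sppm_{d,c}(\cdot,B)$ are coNP-complete.

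There is no genuine obstacle here: the corollary is the bookkeeping step that pairs the upper bounds of Theorem~\ref{thm:memb1a} with the lower bounds of Theorems~\ref{thm:suppBAc1} and~\ref{thm:suppBAc2}. The only point requiring slight care is making sure each reduction of Figure~\ref{fig1} is applied in the correct direction---membership is inherited by the \emph{easier} (more constrained) problem and hardness by the \emph{harder} (less constrained) problem---so that, for instance, the $\PiP{2}$-hardness proved for the fixed-$A$ case in Theorem~\ref{thm:suppBAc1} is transferred upward to the input-$A$ case rather than the reverse. Once the directions are checked, the four completeness results follow immediately.
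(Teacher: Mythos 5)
Your proposal is correct and matches the paper's own argument exactly: the corollary is obtained by pairing the membership bounds of Theorem~\ref{thm:memb1a} with the hardness results of Theorems~\ref{thm:suppBAc1} and~\ref{thm:suppBAc2}, using the Figure~\ref{fig1} reductions in precisely the directions you describe. Nothing further is needed.
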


Before we move on to complement-direct and complement-complement
problems, we present a construction that will be of use in both 
cases.  Let $\forall Y\exists X \varphi$ be a QBF, where $\varphi$ is 
a CNF formula over $X\cup Y$. Without loss of generality we can assume that $X$ and $Y$ are
non-empty.

We define $X'$, $Y'$ and $\hat{c}$, for each clause $c$ of $\vph$, as 
before. Next, let $A,B\subseteq\at$ be 
such that: $A\neq\emptyset$, $A\cap(X\cup X'\cup Y\cup Y')=B\cap(X\cup X'
\cup Y\cup Y')=\emptyset$, and let $g\in A$. 

We define $W=X\cup X'\cup Y \cup Y'\cup\{g\}$ and observe that
$X\cup X'\cup Y\cup Y' \subseteq \ac$ and $g\notin\ac$. Finally,
we select an arbitrary element $x_0$ from $X$ and
define the programs $P(\varphi)$ and $Q(\varphi)$ as follows:
\begin{eqnarray*}
P(\varphi) & = &
\{
 \lar \nt y, \nt y';\;
 \lar y,  y' \mid y\in Y\} \cup \\
&&
\{
 \lar u, \nt v, \nt v' ;\;
 \lar u', \nt v, \nt v' \\
&&
\hphantom{\{}
 \lar \nt u, v, v' ;\;
 \lar \nt u',v, v' \mid u,v\in X\} \cup \\
&&
\{
 \lar \hat{c}, x_0, \nt x'_0;\;
 \lar \hat{c}, \nt x_0,  x'_0
\mid c\mbox{\ is a clause in\ } \varphi\} \cup\\
&&
\{
 \lar \nt g \} \cup
%&&
\{
u \lar x_0, x'_0, u \mid u \in W \} \\
Q(\varphi) & = &
P(\varphi)\cup %\\ &&
\{ \lar \nt x_0, \nt x'_0\}.
\end{eqnarray*}

\begin{lemma}
\label{last}
Under the notation introduced above, $\forall Y\exists X\varphi$ is true
if and only if $P(\vph)$ and $Q(\vph)$ are suppmin-equivalent relative to 
%$\caHB^n(A^c,B)$.
$\caHB(A^c,B)$.
\end{lemma}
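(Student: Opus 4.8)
The plan is to apply Theorem~\ref{thm:general} and to exploit the fact that $Q(\vph)=P(\vph)\cup\{\lar\nt x_0,\nt x'_0\}$ differs from $P(\vph)$ by a single \emph{constraint}. Writing $P$ and $Q$ for $P(\vph)$ and $Q(\vph)$, I first note that this extra rule has an empty head, so $T_P(M)=T_Q(M)$ whenever both are defined, and every model of $Q$ is a model of $P$. Since adding a constraint can only turn sets into non-models, conditions (3) and (4) in the definition of $\Mod_\ac^B(\cdot)$ are easier to meet for $Q$ than for $P$, while conditions (1), (2), (5) and the value of $T$ coincide; hence $\Mod_\ac^B(P)\subseteq\Mod_\ac^B(Q)$ and the requirement $T_P(Y)|_B=T_Q(Y)|_B$ of Theorem~\ref{thm:general} holds automatically. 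Thus suppmin-equivalence relative to $\caHB(\ac,B)$ reduces to the single reverse inclusion $\Mod_\ac^B(Q)\subseteq\Mod_\ac^B(P)$, and the lemma becomes: $\forall Y\exists X\,\vph$ is true iff $\Mod_\ac^B(Q)\setminus\Mod_\ac^B(P)=\emptyset$.

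Next I would pin down $\Mod_\ac(P)=\Mod_\ac(Q)$. The constraint $\lar\nt g$ forces $g$ into every model, and since $g\in A$ the membership condition $M\setminus T(M)\subseteq\ac$ (equivalently $M\cap A\subseteq T(M)$), together with the observation that the only non-constraint rules are $u\lar x_0,x'_0,u$ (so $T(M)=W\cap M$ if $x_0,x'_0\in M$ and $T(M)=\emptyset$ otherwise), forces $x_0,x'_0\in M$ and then, via the rules $\lar\nt u,v,v'$ and $\lar\nt u',v,v'$ instantiated at $v=x_0$, forces $X\cup X'\subseteq M$. Using Lemma~\ref{lem:new} (with the roles of $P$ and $Q$ exchanged) I may restrict attention to $M\subseteq\at(P\cup Q)\cup A=W$, so the only remaining freedom is the choice of a $Y$-assignment: the relevant $M$ are exactly $M_I:=X\cup X'\cup I\cup(Y\setminus I)'\cup\{g\}$ for $I\subseteq Y$, each satisfying $T(M_I)=M_I$. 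Crucially, because $x_0,x'_0\in M_I$, both clause families $\lar\hat c,x_0,\nt x'_0$ and $\lar\hat c,\nt x_0,x'_0$ are vacuously satisfied at $M_I$, so $\vph$ plays no role at the level of $M$ itself.

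The reduction then lives entirely in condition (4). Fix $M_I$ and a set $N$ with $(N,M_I)\in\Mod_\ac^B(Q)\setminus\Mod_\ac^B(P)$. Only condition (4) can separate the two (condition (3) admits at most the single candidate $M_I\setminus\{g\}$, which is not a model of either program), so there is $Z\subset M_I$ with $Z|_B=N|_B$, $Z|_\ac\supseteq N|_\ac$ and $Z\models P$ but $Z\not\models Q$; the latter forces $x_0,x'_0\notin Z$. A short analysis of the $X$-rules shows that a model of $P$ with $x_0,x'_0\notin Z$ and $Z\subseteq M_I$ cannot contain any atom of $X\cup X'$, hence must equal $Z_0:=I\cup(Y\setminus I)'\cup\{g\}$; this both pins $N$ down ($N\cap\ac\subseteq I\cup(Y\setminus I)'$ and $N\cap B=\{g\}\cap B$) and shows $Z_0$ is the unique witness. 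Conversely, the sets $Z$ with $Z\subset M_I$, $Z|_B=N|_B$, $Z|_\ac\supseteq N|_\ac$ that are \emph{models of $Q$} are exactly the $Z_J:=I\cup(Y\setminus I)'\cup J\cup(X\setminus J)'\cup\{g\}$ for $J\subseteq X$ with $I\cup J\models\vph$: the $X$-rules force any model containing an $x$-atom to carry a total $X$-assignment, the extra constraint of $Q$ is then met since exactly one of $x_0,x'_0$ is present, and precisely one of the two guarded clause families is active, being satisfied iff no clause of $\vph$ is falsified by $I\cup J$.

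Combining these, $(N,M_I)$ meets condition (4) for $Q$, i.e.\ $(N,M_I)\in\Mod_\ac^B(Q)$, if and only if no $Z_J$ is a model of $Q$, i.e.\ iff there is no $J$ with $I\cup J\models\vph$; and in that case $Z_0$ witnesses the failure of condition (4) for $P$, giving $(N,M_I)\in\Mod_\ac^B(Q)\setminus\Mod_\ac^B(P)$. Hence a separating pair exists iff some $I\subseteq Y$ admits no $J$ with $I\cup J\models\vph$, i.e.\ iff $\forall Y\exists X\,\vph$ is false; equivalently, $P$ and $Q$ are suppmin-equivalent relative to $\caHB(\ac,B)$ iff the QBF is true. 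I expect the main obstacle to be the clause-constraint bookkeeping of the previous paragraph—verifying that for a total $X$-assignment exactly one of the $x_0/x'_0$-guarded families is active and is violated precisely when $\vph$ is falsified, and that $Z_0$ is the only admissible $P$-model with $x_0,x'_0$ absent—together with the careful invocation of Lemma~\ref{lem:new} needed to discard models $M$ carrying atoms outside $W$.
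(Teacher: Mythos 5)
Your proposal is correct and follows essentially the same route as the paper's proof: reduce via Theorem~\ref{thm:general} to comparing $\Mod_{A^c}^B(P)$ and $\Mod_{A^c}^B(Q)$ after establishing $\Mod_{A^c}(P)=\Mod_{A^c}(Q)$ and the $T$-condition, note that $\Mod_{A^c}^B(P)\subseteq\Mod_{A^c}^B(Q)$ holds unconditionally, and show the reverse inclusion fails exactly when some $I\subseteq Y$ admits no completing $J$, with the ``type 3'' model $\{g\}\cup I\cup(Y\setminus I)'$ witnessing the failure of condition (4). One small correction: $\at(P\cup Q)\cup A=W\cup A$ may properly contain $W$ when $A\neq\{g\}$, so your appeal to Lemma~\ref{lem:new} to restrict to $M\subseteq W$ needs the additional (easy) observation that $M\cap A\subseteq T_P(M)\subseteq W$ forces $M\cap A\subseteq\{g\}$ and hence $M\subseteq W$.
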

\begin{proof}
As usual, to simplify notation we write $P$ for $P(\vph)$ and $Q$
for $Q(\vph)$. We observe that $\At(P)=\At(Q)=W$.
We observe that both $P$ and $Q$ have the following models that are 
contained in $W$:\\
\hspace*{0.35in}\parbox{5in}{
\begin{enumerate}
%\begin{equation}\label{eq:top}
%\{g\}\cup X\cup X'\cup I \cup (Y\setminus I)'
%\end{equation}
\item \label{eq:top}
$\{g\}\cup X\cup X'\cup I \cup (Y\setminus I)'$,
for \emph{each} $I\subseteq Y$;
and
%\begin{equation}\label{eq:inbetween}
%\{g\}\cup J \cup (X\setminus J)'\cup I\cup (Y\setminus I)'
%\end{equation}
\item $\{g\}\cup J \cup (X\setminus J)'\cup I\cup (Y\setminus I)'$,
where $J\subseteq X$, $I\subseteq Y$ and $I\cup J\models \varphi$.
\end{enumerate}
}
Moreover, $P$ has also additional models contained in $W$:\\
\hspace*{0.35in}\parbox{5in}{
\begin{enumerate}
%\begin{equation}\label{eq:bot}
%\{g\}\cup I\cup (Y\setminus I)'
%\end{equation}
\item[3.] $\{g\}\cup I\cup (Y\setminus I)'$,
for \emph{each} $I\subseteq Y$.
\end{enumerate}
}

For each model $M$ of the type \ref{eq:top}, $T_P(M)=T_Q(M)=M$, 
thanks to the rules $u \lar x_0, x'_0,u$, where $u \in W$. Thus, 
for each model $M$ of type \ref{eq:top}, we have
$M\in\Mod_{A^c}(P)$ and $M\in\Mod_{A^c}(Q)$.

Let $M$ be a model of $P$ of one of the other two types. Then, we have
$T_P(M)=\emptyset$. Moreover, since $g\in M$ and $g\notin\ac$,
$M\setminus T_P(M)\not\subseteq\ac$. Thus $M\notin \Mod_\ac(P)$.
Similarly, if $M$ is a model of $Q$ of type 2, %\ref{eq:inbetween},
$T_Q(M)=\emptyset$. For the same reasons as above, $M\notin \Mod_\ac(Q)$.
Hence, $\Mod_{A^c}(P)=\Mod_{A^c}(Q)$, and both $\Mod_\ac(P)$ and
$\Mod_\ac(Q)$ consist of interpretations $N$ of the form $N'\cup V$,
where $N'$ is a set of the type \ref{eq:top} and $V\subseteq\at
\setminus W$. Clearly, for each such set $N$, $T_P(N)=N'=T_Q(N)$.
Thus $T_P(N)|_B=T_Q(N)|_B$ holds for each $(M,N)\in\Mod^B_{A^c}(P)$ (as
$(M,N)\in\Mod^B_{A^c}(P)$ implies
$N\in\Mod_{A^c}(P)$). By Theorem~\ref{thm:general}, it follows
that $P$ and $Q$ are suppmin-equivalent relative to 
%$\caHB^n(A^c,B)$ if and
$\caHB(A^c,B)$ if and
only if $\Mod^B_{A^c}(P)=\Mod^B_{A^c}(Q)$.

Thus, to complete the proof, it suffices to show that 
$\forall Y\exists X\varphi$ is true if and only if 
$\Mod^B_{A^c}(P)=\Mod^B_{A^c}(Q)$. 

Let us assume that $\forall Y\exists X \varphi$ is false. Hence, there
exists an assignment $I\subseteq Y$ to atoms $Y$ such that for every
$J\subseteq X$, $I\cup J\not\models \varphi$. Let 
$N=\{g\}\cup I\cup(Y\setminus I)'\cup X\cup X'$.
We will show that $(\{g\}|_B,N)\in\Mod_{A^c}^B(Q)$.
Since $N$ is of the type \ref{eq:top}, $N\in\Mod_{A^c}(Q)$. The
requirement (2) for $(\{g\}|_B,N)\in\Mod_{A^c}^B(Q)$ is evident, as $g\in N$. 
The requirement (5) holds, since $N\setminus T_Q(N) =\emptyset\subseteq
\{g\}|_B$. By the property of $I$, $N$ is a minimal model of $Q$. Thus, 
the requirements (3) and (4) hold, too. It follows that $(\{g\}|_B,N)
\in\Mod_{A^c}^B(Q)$, as claimed. On the other hand $(\{g\}|_B,N)\notin
\Mod_{A^c}^B(P)$. Indeed, let $M=\{g\}\cup I\cup (Y\setminus I)'$.
Then $M\models P$ (it is of the type 3).
%(\ref{eq:bot}). 
We now observe that
$M\subset N$, $\{g\}|_B=M|_B$ (as $B\cap (Y\cup Y')=\emptyset$), and
$M|_\ac \supseteq (\{g\}|_B)|_{A^c}$ (as $(\{g\}|_B)|_{A^c}=\emptyset$,
due to the fact that $g\notin\ac$). It follows that $(\{g\}|_B,N)$
violates the condition (4) for $(\{g\}|_B,N)\in\Mod_{A^c}^B(P)$.

Conversely, let us assume that $\forall Y\exists X \varphi$ is true. 
We first observe that $\Mod_{A^c}^B(P)\subseteq \Mod_{A^c}^B(Q)$.  
Indeed, let $(M,N)\in\Mod_{A^c}^B(P)$. Then, $N\in\Mod_{A^c}(P)$ and,
consequently, $N\in\Mod_{A^c}(Q)$. Moreover, if $M|_B=N|_B$, then 
$N\setminus T_P(N)\subseteq M$ and, as $T_P(N)=T_Q(N)$, $N\setminus
T_Q(N)\subseteq M$. Next, as $(M,N)\in\Mod_{A^c}^B(P)$, $M\subseteq 
N|_{{A^c}\cup B}$. Thus, the requirements (1), (5) and (2) for $(M,N)
\in\Mod_{A^c}^B(Q)$ hold. Since every model of $Q$ is a model of $P$,
it follows that the conditions (3) and (4) hold, too.

We will now use the assumption that $\forall Y\exists X \varphi$ is true
to prove the converse inclusion $\Mod_{A^c}^B(Q)\subseteq \Mod_{A^c}^B(P)$.
To this end, let us consider $(M,N)\in
\Mod_{A^c}^B(Q)$. Reasoning as above, we can show that the conditions 
(1), (5) and (2) for $(M,N)\in\Mod_{A^c}^B(P)$ hold. 

By our earlier comments, $N=N'\cup V$, where $N'$ is of the form 
\ref{eq:top} and $V\subseteq \At\setminus W$. More specifically, 
$N'=\{g\}\cup I\cup(Y\setminus I)'\cup X\cup X'$, for some $I\subseteq
Y$. 

Let us consider $Z\subset N$ such that $Z|_{\ac\cup B}=N|_{\ac\cup B}$.
Since $W\setminus \{g\}\subseteq \ac$, $Z\supseteq N|_{\ac\cup B}
\supseteq I\cup(Y\setminus I)'\cup X\cup X'$. It follows that $Z\cap 
W$ is not of the type 3.
%(\ref{eq:bot}).
Thus, since $Z\not\models Q$,
$Z\not\models P$. Consequently, the condition (3) for $(M,N)\in
\Mod_{A^c}^B(P)$ holds.

So, let us consider $Z\subset N$ such that $Z|_{B}=M|_{B}$ and
$Z|_{\ac}\supseteq M|_{\ac}$. Let us assume that $Z\models P$. Since $Z
\not\models Q$, $Z=Z'\cup U$, where $Z'$ is a set of the type 3
%(\ref{eq:bot})
and $U\subseteq\at\setminus W$. Since $Z\subseteq N$,
$Z'\subseteq N'$, and so, $Z'=\{g\}\cup I\cup(Y\setminus I)'$. 

Since $\forall Y\exists X \varphi$ is true, there is $J\subseteq X$ such
that $I\cup J\models \vph$. It follows that 
\[
N''=\{g\}\cup I\cup(Y\setminus I)'\cup J\cup (X\setminus J)'\cup U
\]
is a model of both $P$ and $Q$ (of the type 2). %\ref{eq:inbetween}). 
Since $B\cap W\subseteq\{g\}$, it follows that $N''|_B= Z|_B=M|_B$.
Since $N''\supseteq Z$, $N''|_\ac\supseteq Z|_\ac\supseteq M|_\ac$.
Moreover, $N''\subset N$. Since $(M,N)\in \Mod_{A^c}^B(Q)$, $N''\not
\models Q$, a contradiction. Thus, $Z\not\models P$ and, consequently,
the condition (4) for $(M,N)\in \Mod_{A^c}^B(P)$ holds. This completes
the proof of $\Mod_{A^c}^B(Q)\subseteq\Mod_\ac^B(P)$ and of the lemma.
%\hfill$\Box$
\end{proof}

\smallskip
We now apply this lemma to complement-direct problems. We have the following result.

\begin{theorem}
\label{thm:136}
The problem $\sppm_{c,d}(A,B)$, where $A\neq \emptyset$,
is $\PiP{2}$-hard.
\end{theorem}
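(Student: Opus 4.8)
The plan is to reduce from the validity problem for QBFs of the form $\forall Y\exists X\,\varphi$, where $\varphi$ is in CNF; this problem is $\PiP{2}$-complete, so exhibiting a polynomial-time reduction to $\sppm_{c,d}(A,B)$ establishes $\PiP{2}$-hardness. The whole technical content has already been packaged into Lemma~\ref{last}, which states exactly the correspondence I need: for suitable $A$, $B$, and $g$, the QBF $\forall Y\exists X\,\varphi$ is true if and only if $P(\varphi)$ and $Q(\varphi)$ are suppmin-equivalent relative to $\caHB(\ac,B)$. Thus the only work is to check that, in the problem $\sppm_{c,d}(A,B)$ with $A$ and $B$ \emph{fixed}, the preconditions of Lemma~\ref{last} can always be met.

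Concretely, given an instance $\forall Y\exists X\,\varphi$, I would first rename the variables of $\varphi$ so that $X$ and $Y$ (and hence the primed copies $X'$ and $Y'$) are disjoint from $A\cup B$. This is always possible: $A\cup B$ is a fixed finite set while $\At$ is infinite, so infinitely many fresh names are available. Because $A\neq\emptyset$ by hypothesis, I can also select some element $g\in A$ once and for all. These choices satisfy precisely the assumptions imposed before Lemma~\ref{last}, namely $A\neq\emptyset$, $g\in A$, and $A\cap(X\cup X'\cup Y\cup Y')=B\cap(X\cup X'\cup Y\cup Y')=\emptyset$. I then construct $P(\varphi)$ and $Q(\varphi)$ exactly as defined there. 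The construction is manifestly computable in polynomial time, and by Lemma~\ref{last} it maps true QBFs to YES instances of $\sppm_{c,d}(A,B)$ and false QBFs to NO instances, so it is a correct reduction.

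There is essentially no remaining obstacle, since Lemma~\ref{last} carries the entire burden; the only delicacy is the elementary bookkeeping that guarantees the disjointness conditions and the availability of $g$. It is nonetheless worth emphasizing where the hypothesis $A\neq\emptyset$ is used: the atom $g\in A$ is what forces the type-$3$ models of $P$ to fail membership in $\Mod_\ac(P)$, because $g\notin\ac$ makes $M\setminus T_P(M)\not\subseteq\ac$ for such models. This is exactly the asymmetry between $P$ and $Q$ that the reduction exploits, and it would collapse if $A$ were empty. This is consistent with the fact that the $A=\emptyset$ case admits the stronger coNP upper bound established separately in Theorem~\ref{thm:memb1}, so a $\PiP{2}$-hardness argument of this form cannot be expected to go through there.
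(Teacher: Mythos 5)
Your proposal is correct and follows essentially the same route as the paper's own proof: both reduce from QBF validity by renaming the variables of $\varphi$ away from the fixed finite set $A\cup B$, picking $g\in A$ (using $A\neq\emptyset$), and then invoking Lemma~\ref{last} verbatim. Your closing remark about why the argument must fail for $A=\emptyset$ is a nice addition consistent with Theorem~\ref{thm:memb1}.
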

\begin{proof} Let $\forall Y\exists X \varphi$ be a QBF, where $\varphi$ is a
CNF formula over $X\cup Y$ such that $X$ and $Y$ are nonempty. We can assume 
that $A\cap (X\cup Y)=B\cap 
(X\cup Y)=\emptyset$ (if not, variables in the QBF can be renamed).
We define $X'$ and $Y'$ as in other places. Thus, $(A\cup B)\cap
(X'\cup Y')=\emptyset$. Finally, we pick $g\in A$, and define %$W$, 
$P(\vph)$ and $Q(\vph)$ as above. By Lemma \ref{last}, $\forall Y\exists X \varphi$ is 
true if and only if $P(\vph)$ and $Q(\vph)$ are suppmin-equivalent with respect to
$\caHB(\ac,B)$. Thus, the assertion follows.
(We note that since $B$ is fixed, we cannot assume $g\in B$ or 
$g\notin B$ here; however, Lemma~\ref{last} takes care of both cases).
% S2M: It took me some time to reconstruct that; it's kind of smart!
\end{proof}

\smallskip
We are now in a position to establish exactly the complexity of 
complement-direct problems.

\begin{corollary}\label{cor:suppmincd}
The problems $\sppm_{c,d}(\cdot,B)$ and  $\sppm_{c,d}(\cdot,\cdot)$ are
$\PiP{2}$-complete.
For $A\neq\emptyset$, 
the problems $\sppm_{c,d}(A,B)$, %where $A\not=\emptyset$, 
%$\sppm_{c,d}(\cdot,B)$, %and 
and $\sppm_{c,d}(A,\cdot)$, %where $A\not=\emptyset$,
%and $\sppm_{c,d}(\cdot,\cdot)$ are
are also $\PiP{2}$-complete.
The problems $\sppm_{c,d}(\emptyset,B)$
and $\sppm_{c,d}(\emptyset,\cdot)$ are coNP-complete.
\end{corollary}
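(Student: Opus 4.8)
The plan is to assemble this corollary entirely from results already established in the excerpt, using the reduction relationships of Figure~\ref{fig1} together with the matching upper- and lower-bound theorems. The key observation is that Figure~\ref{fig1} tells us that replacing a fixed parameter ($A$ or $B$) by $\cdot$ yields a problem at least as hard, so hardness propagates ``upward'' from the fixed-parameter versions while membership propagates ``downward'' from the general versions. I would organize the argument by the three claimed complexity levels.

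First I would handle the $\PiP{2}$-complete cases. Membership in $\PiP{2}$ for $\sppm_{c,d}(\cdot,\cdot)$ is exactly Theorem~\ref{thm:memb1b}, and by the Figure~\ref{fig1} reductions every more specific problem $\sppm_{c,d}(\cdot,B)$, $\sppm_{c,d}(A,B)$ and $\sppm_{c,d}(A,\cdot)$ is at most as hard, hence also in $\PiP{2}$. For the lower bound, Theorem~\ref{thm:136} gives $\PiP{2}$-hardness of $\sppm_{c,d}(A,B)$ for $A\neq\emptyset$; since $\sppm_{c,d}(A,B)$ reduces to each of $\sppm_{c,d}(\cdot,B)$, $\sppm_{c,d}(A,\cdot)$ and $\sppm_{c,d}(\cdot,\cdot)$ (again via Figure~\ref{fig1}), all four of these problems are $\PiP{2}$-hard, with the caveat $A\neq\emptyset$ applied to the two that keep $A$ fixed. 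Combining the two bounds gives $\PiP{2}$-completeness for $\sppm_{c,d}(\cdot,B)$ and $\sppm_{c,d}(\cdot,\cdot)$ unconditionally, and for $\sppm_{c,d}(A,B)$ and $\sppm_{c,d}(A,\cdot)$ under the restriction $A\neq\emptyset$.

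Next I would treat the degenerate case $A=\emptyset$, i.e.\ $\ac=\at$, which collapses to coNP. Membership in coNP for $\sppm_{c,d}(\emptyset,\cdot)$ is precisely Theorem~\ref{thm:memb1} (and it subsumes $\sppm_{c,d}(\emptyset,B)$ by Figure~\ref{fig1}). For the matching lower bound I would invoke coNP-hardness of a suitable direct-complement or direct-direct problem and transfer it; the cleanest route is to note that fixing $A=\emptyset$ makes the head alphabet unrestricted, so the construction underlying the coNP-hardness of suppmin-equivalence in the $\at$-head case (the problem denoted $\sppm_\at$ by \citeNS{tw08a}, already cited in Theorem~\ref{thm:memb1}) yields coNP-hardness of $\sppm_{c,d}(\emptyset,B)$, which then propagates to $\sppm_{c,d}(\emptyset,\cdot)$.

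The only genuinely delicate point, and the one I would flag as the main obstacle, is bookkeeping the role of the side condition $A\neq\emptyset$: Theorem~\ref{thm:136} is stated only for $A\neq\emptyset$, so the $\PiP{2}$-hardness claims for the \emph{fixed}-$A$ problems must carry that hypothesis, while the $\cdot$-versions are unconditional because there one is free to choose a nonempty $A$ as part of the input. I would therefore state the corollary in exactly the stratified form given—unrestricted $\PiP{2}$-completeness for the two $\cdot$-in-first-slot problems, $\PiP{2}$-completeness for the fixed-$A$ problems only when $A\neq\emptyset$, and coNP-completeness for the two $A=\emptyset$ problems—so that each hardness transfer is applied only where its source theorem is valid. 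No new constructions are needed; the work is purely in routing the existing bounds correctly through Figure~\ref{fig1}.
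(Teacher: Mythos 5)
Your proposal is correct and follows essentially the same route as the paper: the $\PiP{2}$ upper bounds come from Theorem~\ref{thm:memb1b}, the $\PiP{2}$ lower bounds from Theorem~\ref{thm:136} (propagated via Figure~\ref{fig1}, with the $A\neq\emptyset$ caveat carried on the fixed-$A$ variants), and the $A=\emptyset$ cases are settled by citing the coNP-completeness results of Truszczy\'nski and Woltran for $\sppm_\at^B$ and $\sppm_\at$. The bookkeeping of where the $A\neq\emptyset$ hypothesis is needed matches the paper's statement exactly.
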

\begin{proof} 
For problems $\sppm_{c,d}(A,B)$ (where $A\not=\emptyset$), 
$\sppm_{c,d}(\cdot,B)$, $\sppm_{c,d}(A,\cdot)$ (where $A\not=\emptyset$),
and $\sppm_{c,d}(\cdot,\cdot)$, the upper bound follows from Theorem
\ref{thm:memb1b}, and the lower bound from Theorem \ref{thm:136}.
The problems $\sppm_{c,d}(\emptyset,B)$ and $\sppm_{c,d}(\emptyset,\cdot)$
were proved to be coNP-complete by %Truszczy\'nski and Woltran
\citeNS{tw08a} (in fact, they denoted these problems by $\sppm_\at^B$ and 
$\sppm_\at$, respectively).
\end{proof}

\smallskip
We will now apply Lemma \ref{last} to complement-complement problems.

\begin{theorem}
\label{sppmccLB2}
The problem $\sppm_{c,c}(A,\cdot)$, where $A\not=\emptyset$, is 
$\PiP{2}$-hard.
\end{theorem}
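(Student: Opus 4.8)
The plan is to reduce the evaluation problem for quantified Boolean formulas $\forall Y\exists X\varphi$, which is $\PiP{2}$-complete, to $\sppm_{c,c}(A,\cdot)$, reusing the very construction and Lemma~\ref{last} that were just established for the complement-direct case in Theorem~\ref{thm:136}. The crucial observation is that Lemma~\ref{last} imposes no finiteness requirement on its body set: it merely asks that $B\cap(X\cup X'\cup Y\cup Y')=\emptyset$ and $g\in A\neq\emptyset$. Hence we are free to instantiate the ``direct'' body alphabet of Lemma~\ref{last} with a \emph{co-finite} set, in particular with the complement of a set that we produce as part of the reduction. This is exactly what is needed to pass from body alphabet $B$ in Theorem~\ref{thm:136} to body alphabet $B^c$ here.

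Concretely, given $\forall Y\exists X\varphi$ with $\varphi$ a CNF formula over $X\cup Y$, I would first assume (renaming the quantified variables, and adding a dummy variable if necessary) that $X$ and $Y$ are nonempty and that $A\cap(X\cup Y)=\emptyset$; this is legitimate because $A$ is a fixed finite parameter of the problem. I introduce fresh primed atoms $X'$, $Y'$ chosen outside $A$, pick $g\in A$ (which exists since $A\neq\emptyset$), and build $P(\vph)$ and $Q(\vph)$ exactly as in the construction preceding Lemma~\ref{last}. The single new ingredient is the choice of the input set that names the body alphabet: I set $B=X\cup X'\cup Y\cup Y'$, so that its complement satisfies $B^c\cap(X\cup X'\cup Y\cup Y')=\emptyset$.

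With this choice all hypotheses of Lemma~\ref{last} hold, the role of its direct body set being played by $B^c$: indeed $A\neq\emptyset$, $A\cap(X\cup X'\cup Y\cup Y')=\emptyset$, $B^c\cap(X\cup X'\cup Y\cup Y')=\emptyset$, and $g\in A$. Lemma~\ref{last} then yields that $\forall Y\exists X\varphi$ is true if and only if $P(\vph)$ and $Q(\vph)$ are suppmin-equivalent relative to $\caHB(\ac,B^c)$, which is precisely an instance of $\sppm_{c,c}(A,\cdot)$: the head set $A$ is the fixed parameter, while $B$ is supplied as input. Since the reduction is plainly computable in polynomial time, $\PiP{2}$-hardness of $\sppm_{c,c}(A,\cdot)$ follows.

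I do not anticipate a genuine obstacle, as Lemma~\ref{last} carries the entire combinatorial burden; the only point requiring care is verifying that $B^c$ legitimately meets the disjointness requirement of Lemma~\ref{last}, which is exactly what forces the choice $B\supseteq X\cup X'\cup Y\cup Y'$. As in the remark following Theorem~\ref{thm:136}, it is immaterial whether $g$ falls inside or outside the body alphabet (here $g\in B^c$), since Lemma~\ref{last} was deliberately stated to cover both cases.
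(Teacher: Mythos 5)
Your proposal is correct and follows essentially the same route as the paper: the paper's proof of Theorem~\ref{sppmccLB2} likewise takes $B=X\cup X'\cup Y\cup Y'$ so that $\bc\cap(X\cup X'\cup Y\cup Y')=\emptyset$, and then invokes Lemma~\ref{last} with $\bc$ playing the role of the body alphabet, exploiting exactly the fact that the lemma places no finiteness requirement on that set. The only cosmetic difference is that the paper assumes $A\cap(X\cup X'\cup Y\cup Y')=\emptyset$ outright rather than splitting it into a renaming step and a choice of fresh primed atoms.
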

\begin{proof} Let $\forall Y\exists X \varphi$ be a QBF, where $\varphi$ is a
CNF formula over $X\cup Y$. We select $g\in A$,
and define $X'$ and $Y'$ as usual. Without loss of generality we can assume that $A\cap (X\cup
X'\cup Y\cup Y')=\emptyset$. In particular, $g\notin X\cup X'\cup Y\cup 
Y'$. We set $B=X\cup X'\cup Y \cup Y'$ and so, $\bc\cap(X\cup X'\cup Y \cup Y')
=\emptyset$. Finally, we set $W=X\cup X'\cup Y \cup Y' \cup\{g\}$ and
define programs $P$ and $Q$ as we did in preparation for Lemma \ref{last}.
By Lemma \ref{last}, $\forall Y\exists X \varphi$ is true if and only if
$P$ and $Q$ are suppmin-equivalent with respect to $\caHB(\ac,\bc)$.
Thus, the assertion follows.
\end{proof}

\smallskip
Next, we determine the lower bound for the problem $\sppm_{c,c}(A,B)$.

\begin{theorem}
\label{sppmccLB3}
The problem $\sppm_{c,c}(A,B)$ is coNP-hard.
\end{theorem}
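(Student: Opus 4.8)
The plan is to reduce the coNP-complete unsatisfiability problem for CNF formulas to $\sppm_{c,c}(A,B)$, reusing the programs from the proof of Theorem~\ref{thm:suppBAc2}. Given a CNF formula $\vph$ over a set $V$ of atoms, I would first rename the atoms of $\vph$ so that $V\cap(A\cup B)=\emptyset$; then, introducing fresh primed copies $V'=\{v'\st v\in V\}$, I have $V\cup V'\subseteq\ac\cap\bc$. I take $P=P(\vph)$, the program defined in the proof of Theorem~\ref{h-revsupp}, together with $Q=\{f\lar;\; \lar f\}$. Since $Q$ has no model, $\Mod^\bc_\ac(Q)=\emptyset$, so by Theorem~\ref{thm:general} it suffices to show that $\Mod^\bc_\ac(P)\neq\emptyset$ if and only if $\vph$ is satisfiable.

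For the nontrivial direction I would argue exactly as in Theorem~\ref{thm:suppBAc2}. If $\vph$ is satisfiable, then, by Theorem~\ref{h-revsupp}, $P$ has a model $M\subseteq\at(P)=V\cup V'$, and every such $M$ satisfies $T_P(M)=M$. I then verify $(M,M)\in\Mod^\bc_\ac(P)$. The key point is that the head alphabet $\ac$ enters only through conditions (1) and (5) in the definition of $\Mod^\bc_\ac$, and both hold trivially because $M\setminus T_P(M)=\emptyset$. Conditions (2)--(4) are handled precisely as before: since $M\subseteq\bc$, we have $M|_{\ac\cup\bc}=M|_\bc=M$, while every $Z\subset M$ satisfies $Z|_\bc=Z\neq M$ and $Z|_{\ac\cup\bc}=Z\neq M$, so the universally quantified conditions (3) and (4) are vacuous and (2) is immediate. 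Conversely, if $(N,M)\in\Mod^\bc_\ac(P)$, then $M\models P$, so $P$ has a model and hence $\vph$ is satisfiable.

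Combining these facts, $\vph$ is unsatisfiable if and only if $\Mod^\bc_\ac(P)=\emptyset=\Mod^\bc_\ac(Q)$, which by Theorem~\ref{thm:general} is exactly the condition that $P$ and $Q$ are suppmin-equivalent relative to $\caHB(\ac,\bc)$ (the $T$-condition of that theorem being vacuous when both sets are empty). This gives the desired polynomial-time reduction and hence coNP-hardness. I do not expect a genuine obstacle: the whole argument is showing that replacing the direct head alphabet $A$ of Theorem~\ref{thm:suppBAc2} by its complement $\ac$ does not disturb the membership check, which works precisely because the witnessing models $M$ are supported (so $T_P(M)=M$) and lie inside $\bc$. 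The only step needing care is the renaming, to guarantee $V\cup V'\subseteq\ac\cap\bc$ so that the body-alphabet restriction collapses as required.
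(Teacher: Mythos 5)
Your proof is correct, but it takes a genuinely different route from the paper. The paper does not redo a satisfiability reduction here: it instead reduces the already-known coNP-complete problem $\sppm_{c,c}(\emptyset,\emptyset)$ (the problem $\sppm_\at^\at$ of \citeNS{tw08a}) to $\sppm_{c,c}(A,B)$ by renaming the atoms of $P$ and $Q$ that lie in $A\cup B$ to fresh atoms, and then arguing that for the renamed programs $P'$ and $Q'$, suppmin-equivalence relative to $\caHB(\at,\at)$ coincides with suppmin-equivalence relative to $\caHB(\ac,\bc)$ --- the nontrivial direction being handled by renaming an arbitrary context $R\in\caHB(\at,\at)$ away from $A\cup B$ as well. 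That argument is a generic transfer of hardness that never touches the definition of $\Mod^B_A$. You instead give a direct reduction from UNSAT, reusing $P(\vph)$ and $Q=\{f\lar;\;\lar f\}$ from Theorem~\ref{h-revsupp} and verifying by hand that $(M,M)\in\Mod^\bc_\ac(P)$ for a supported model $M\subseteq V\cup V'\subseteq\ac\cap\bc$; your checks of conditions (1)--(5) are sound (conditions (3) and (4) are indeed vacuous because any $Z\subset M$ has $Z|_\bc=Z\neq M$), and the renaming to ensure $V\cup V'$ avoids the fixed sets $A\cup B$ is the only point of care, which you address. Your version is more self-contained and makes the complement-complement case exactly parallel to the direct-complement case of Theorem~\ref{thm:suppBAc2}; the paper's version is shorter given the prior result from \citeNS{tw08a} and illustrates a reusable renaming technique for complement-style alphabets. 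One small imprecision: you say $\ac$ enters only through conditions (1) and (5), but it also appears in (2)--(4) via $\ac\cup\bc$ and $|_\ac$; this does not affect your argument since you verify those conditions explicitly anyway.
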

\begin{proof} The problem $\sppm_{c,c}(\emptyset,\emptyset)$ is coNP-complete
\cite{tw08a} (in the paper proving that fact, the problem was denoted by 
$\sppm_\at^\at$).
We will show that it can be reduced to $\sppm_{c,c}(A,B)$
(for any finite $A,B\subseteq\at$). 

Thus, let us fix $A$ and $B$ as two finite subsets of $\at$, and let $P$
and $Q$ be normal logic programs. We define $P'$ and $Q'$ to be programs
obtained by replacing consistently atoms in $P$ and $Q$ that belong to
$A\cup B$ with atoms that do not belong to $\at(P\cup Q)\cup A\cup B$.
Clearly, $P$ and $Q$ are suppmin-equivalent
relative to $\caHB(\at,\at)$ if and only if $P'$ and $Q'$ are suppmin-equivalent
relative to $\caHB(\at,\at)$.

Moreover, it is clear that %if $P'$ and $Q'$ are 
suppmin-equivalence
relative to $\caHB(\at,\at)$ 
between $P'$ and $Q'$
implies %$P'$ and $Q'$ are suppmin-equivalent
suppmin-equivalence
relative to $\caHB(\ac,\bc)$
between $P'$ and $Q'$. We will now show the converse implication.
To this end, let $R$ be an arbitrary program from $\caHB(\at,\at)$.
By $R'$ we denote the program obtained by replacing consistently atoms 
in $R$ that belong to $A\cup B$ with atoms that do not belong to $\at(P'
\cup Q') \cup A\cup B$. Since $P'$ and $Q'$ are suppmin-equivalent
relative to $\caHB(\ac,\bc)$, $P'\cup R'$ and $Q'\cup R'$ have the same
suppmin models. Now, we note that because $(A\cup B)\cap \at(P'\cup Q')=
\emptyset$, $P'\cup R'$ and $Q'\cup R'$ have the same suppmin models if
and only if $P'\cup R$ and $Q'\cup R$ have the same suppmin models. Thus,
$P'\cup R$ and $Q'\cup R$ have the same suppmin models and, consequently,
$P'$ and $Q'$ are  suppmin-equivalent relative to $\caHB(\at,\at)$. It follows
that $P$ and $Q$ are suppmin-equivalent relative to $\caHB(\at,\at)$.

By this discussion $P$ and $Q$ are 
suppmin-equivalent relative to $\caHB(\at,\at)$ if and only if $P'$ and 
$Q'$ are suppmin-equivalent relative to $\caHB(\ac,\bc)$. 
coNP-hardness of $\sppm_{c,c}(A,B)$ thus follows from the coNP-hardness
of $\sppm_{c,c}(\emptyset,\emptyset)$.
\end{proof}

\smallskip
Taking into account Theorems \ref{thm:memb1b} and \ref{thm:memb1},
Theorems \ref{sppmccLB2} and \ref{sppmccLB3}
%we can now summarize our results as follows.
yield the following result.

\begin{corollary}\label{cor:suppmincc}
The problems $\sppm_{c,c}(A,\cdot)$, with $A\not=\emptyset$,
and $\sppm_{c,c}(\cdot,\cdot)$ are $\PiP{2}$-complete. The problems 
$\sppm_{c,c}(A,B)$, $\sppm_{c,c}(\cdot,B)$, and
$\sppm_{c,c}(\emptyset,\cdot)$ are coNP-complete.
\end{corollary}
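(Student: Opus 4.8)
The plan is to assemble the corollary from the membership (upper bound) results of Theorems~\ref{thm:memb1b} and~\ref{thm:memb1}, the hardness (lower bound) results of Theorems~\ref{sppmccLB2} and~\ref{sppmccLB3}, and the polynomial-time reductions among problems recorded in Figure~\ref{fig1} (replacing a fixed set $A$ or $B$ by the input parameter $\cdot$ always yields a problem that is at least as hard). No new construction is needed; the work is in matching the right upper and lower bounds to each of the five problems.

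First I would handle the two $\PiP{2}$-complete claims. Membership of $\sppm_{c,c}(\cdot,\cdot)$ in $\PiP{2}$ is exactly Theorem~\ref{thm:memb1b}, and since $\sppm_{c,c}(A,\cdot)$ reduces to $\sppm_{c,c}(\cdot,\cdot)$ by Figure~\ref{fig1}, it too lies in $\PiP{2}$. For the lower bounds, Theorem~\ref{sppmccLB2} gives $\PiP{2}$-hardness of $\sppm_{c,c}(A,\cdot)$ for $A\neq\emptyset$ directly, and the same figure transfers this hardness along the reduction to $\sppm_{c,c}(\cdot,\cdot)$. Matching the bounds establishes $\PiP{2}$-completeness of both.

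Next I would treat the coNP-complete claims. Theorem~\ref{thm:memb1b} places $\sppm_{c,c}(\cdot,B)$ in coNP, and because $\sppm_{c,c}(A,B)$ reduces to it (Figure~\ref{fig1}), $\sppm_{c,c}(A,B)$ is in coNP as well; membership of $\sppm_{c,c}(\emptyset,\cdot)$ in coNP is precisely Theorem~\ref{thm:memb1}. For hardness, Theorem~\ref{sppmccLB3} supplies coNP-hardness of $\sppm_{c,c}(A,B)$, which Figure~\ref{fig1} lifts to $\sppm_{c,c}(\cdot,B)$.

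The only mildly non-routine point, and the step I would flag explicitly, is coNP-hardness of $\sppm_{c,c}(\emptyset,\cdot)$: here $A=\emptyset$ is fixed, so Theorem~\ref{sppmccLB2} does not apply and we cannot borrow $\PiP{2}$-hardness. Instead I would recall, from the proof of Theorem~\ref{sppmccLB3}, that $\sppm_{c,c}(\emptyset,\emptyset)$ is already coNP-complete (the problem denoted $\sppm_\at^\at$ by \citeNS{tw08a}); applying the Figure~\ref{fig1} reduction that turns the fixed set $B=\emptyset$ into the input parameter $\cdot$ then yields coNP-hardness of $\sppm_{c,c}(\emptyset,\cdot)$. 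Combining every matched pair of bounds completes the corollary.
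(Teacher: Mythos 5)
Your proposal is correct and follows essentially the same route as the paper, which likewise obtains the corollary by pairing the membership bounds of Theorems~\ref{thm:memb1b} and~\ref{thm:memb1} with the hardness results of Theorems~\ref{sppmccLB2} and~\ref{sppmccLB3} and propagating along the Figure~\ref{fig1} reductions. Your explicit treatment of the coNP-hardness of $\sppm_{c,c}(\emptyset,\cdot)$ (via the coNP-completeness of $\sppm_{c,c}(\emptyset,\emptyset)$, i.e.\ the problem $\sppm_\at^\at$ of \citeNS{tw08a}, lifted by the Figure~\ref{fig1} reduction) fills in a detail the paper leaves implicit, and it is exactly the intended argument.
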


\section{Stable-equivalence}
\label{stabeq}
%{\bf Combine text.}
In this section, we establish the complexity 
for direct-complement,
complement-direct and complement-com\-plement problems of 
deciding stable-equivalence. 
We will again make use of the relations depicted in Figure \ref{fig1} 
to obtain our results. Thus, for instance, when we derive an upper bound 
for a problem $\stb_{\delta,\varepsilon}(\cdot,\cdot)$ and a matching 
lower bound for $\stb_{\delta,\varepsilon}(A,B)$, we obtain the exact
complexity result for all problems between $\stb_{\delta,\varepsilon}(A,B)$
and $\stb_{\delta,\varepsilon}(\cdot,\cdot)$ (inclusively).
As we will show, for stable equivalence those bounds match in all cases
other than $\delta=\varepsilon=c$.

We also mention that for the upper bounds for relativized hyperequivalence
with respect to the stable-model semantics, some relevant results were
established before. Specifically, the direct-direct problem 
$\stb_{d,d}(\cdot,\cdot)$ is known to be in the class 
$\PiP{2}$ and, under the restriction to normal logic programs, in coNP
\cite{Woltran07}. However, for the sake of 
completeness we treat the direct-direct problems here in full detail 
%We also consider direct-direct problems 
as, in the case of fixed alphabets, they were not considered before.

\subsection{Upper Bounds}

The following lemmas mirror the corresponding results from the previous
section but show some interesting differences.
For instance, as the following result shows, the problem of model checking 
is slightly harder now compared to Lemma~\ref{lem:memb1}. Namely, it is 
located in the class D$^P$. (We recall that the class D$^P$
consists of all problems expressible as the conjunction
of a problem in NP and a problem in coNP.) However, this increase 
in complexity
compared to Lemma~\ref{lem:memb1} does not influence the subsequent
$\PiP{2}$-membership results, since a call to a D$^P$-oracle amounts to two NP-oracle calls.

\begin{lemma}
\label{lem:smemb1}
%\label{lem:stabledisjMC}
The following problems are in the class
D$^P$:
given a program $P$, and sets $X$, $Y$, $A$, and $B$, 
decide whether 
%
%\begin{enumerate}
%\item 
%given a normal program $P$, and sets $X$, $Y$, $A$, and $B$,
%decide whether 
$(X,Y)\in\SE^{B'}_{A'}(P)$, where $A'$ stands for one of $A$ and $\ac$,
and $B'$ stands for one of $B$ and $\bc$,
%\item 
%$(X,Y)\in\SE^{B'}_{A'}(P)$; 
%\item 
%$(X,Y)\in\SE^\bc_{A'}(P)$;
%\item 
%$(X,Y)\in\SE^\bc_\ac(P)$.
%\end{enumerate}
\end{lemma}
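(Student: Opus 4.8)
The claim is that deciding $(X,Y)\in\SE^{B'}_{A'}(P)$ belongs to $\mathrm{D}^P$, for any of the four choices of whether $A'$ is $A$ or $\ac$ and $B'$ is $B$ or $\bc$. The natural plan is to inspect the five defining conditions of $\SE^{B}_{A}(P)$ (given just before Theorem~\ref{thm:stable}) and classify each as polynomial, in NP, or in coNP, exactly as was done for $\Mod^B_A$ in the proofs of Lemmas~\ref{lem:memb1} and~\ref{lem:memb2}. The key new feature, as the lemma statement itself flags, is that condition (5) is an \emph{existential} condition (``there is $Z\subseteq Y$ such that \ldots $Z\models P^Y$''), whereas conditions (3) and (4) are \emph{universal} (``for each $Z\subset Y$ \ldots $Z\not\models P^Y$''). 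This is the source of the jump from coNP to $\mathrm{D}^P$.

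\smallskip
\textbf{Plan of the proof.} First I would show that conditions (1) and (2) are decidable in polynomial time: checking $Y\models P$ is polynomial, and condition (2) is a disjunction of the polynomial tests $X=Y$ and ($X\subseteq Y|_{A'\cup B'}$ and $X|_{A'}\subset Y|_{A'}$). As in Lemma~\ref{lem:memb1}, any restriction involving $\ac$ or $\bc$ is rewritten using $V|_{\ac}=V\setminus A$ and $V\subseteq\ac \Leftrightarrow V\cap A=\emptyset$, so the complementation causes no difficulty for the polynomial tests. Next I would observe that the \emph{conjunction} of conditions (1), (2), (3) and (4) is in coNP: its negation is witnessed by guessing a single $Z\subset Y$ violating (3) or (4) — that is, a $Z$ with the appropriate alphabet restrictions and $Z\models P^Y$ — together with the polynomial failure of (1) or (2), and all these checks (including computing the reduct $P^Y$ and testing $Z\models P^Y$) are polynomial once $Z$ is guessed. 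Finally, condition (5) is in NP: guess $Z\subseteq Y$ with $X|_{A'\cup B'}=Z|_{A'\cup B'}$ and verify $Z\models P^Y$ in polynomial time. Since $(X,Y)\in\SE^{B'}_{A'}(P)$ is the conjunction of a coNP problem (conditions (1)--(4)) and an NP problem (condition (5)), it lies in $\mathrm{D}^P$ by definition of that class, and this holds uniformly across the four sign-choices because the complementations only rewrite alphabet restrictions into equivalent polynomial tests.

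\smallskip
\textbf{Main obstacle.} The only genuinely delicate point is the bookkeeping of the alphabet restrictions under complementation, particularly in the mixed conditions (3) and (4), where $Z$ is simultaneously constrained on $A'$ and $B'$; I would handle each of the four combinations by the uniform substitution $\cdot|_{\ac}\mapsto\cdot\setminus A$ (and likewise for $B$), exactly mirroring the remark in Lemma~\ref{lem:memb1} that ``the proofs \ldots differ only in technical details depending on which of $A$ and $B$ is subject to the complement operation.'' The conceptual content — that the universal conditions give coNP, the single existential condition gives NP, and their conjunction gives $\mathrm{D}^P$ — is robust to these choices, so I expect the write-up to be short, essentially pointing back to the techniques of Lemma~\ref{lem:memb1} and noting only the new existential condition (5) as the reason for $\mathrm{D}^P$ rather than coNP.
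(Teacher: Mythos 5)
Your proposal is correct and follows essentially the same route as the paper: polynomial tests for conditions (1)--(2), coNP tests for the universal conditions (3)--(4), an NP test for the existential condition (5), with the alphabet complementations handled by rewriting $V|_{\ac}$ as $V\setminus A$, and the conjunction placed in D$^P$ by definition. Your explicit packaging of conditions (1)--(4) into a single coNP language intersected with the NP language given by (5) is just a slightly more streamlined statement of the same decomposition the paper uses.
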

\begin{proof}
We use similar arguments as in the proof of Lemma~\ref{lem:memb1}, but
we need now both an NP and a coNP test.

We recall that verifying any condition involving $\ac$ can be
reformulated in terms of $A$. For instance, for every set $V$,
we have $V|_\ac=V \setminus A$, and $V\subseteq\ac$ if and only
if $V\cap A=\emptyset$. The same holds for $\bc$. 

Let $A'\in\{A,\ac\}$ and $B'\in\{B,\bc\}$. We will use the observation
above to establish upper bounds on the complexity of deciding each of 
the conditions (1) - (5) for $(X,Y)\in\SE^{B'}_{A'}(P)$.

The condition (1) can clearly be decided in polynomial time. The same 
holds for the condition (2). It is evident once we note that $X
\subseteq Y|_{A' \cup B'}$ is equivalent 
to $X\subseteq Y\cap (A \cup B)$,
$X\subseteq (Y\cap B)\cup (Y\setminus A)$,
$X\subseteq (Y\cap A)\cup (Y\setminus B)$,
and $X\subseteq Y\setminus(A\cap B)$, depending on the
form of $A'$ and $B'$.
  
It is also easy to show that each of the conditions (3) and (4) can be 
decided by means of a single coNP test, and that the condition (5) can 
be decided by means of one NP test. For all instantiations of $A'$ and 
$B'$, the arguments are similar. We present the details for one case 
only. For example, if $A'$ stands for $A$ and $B'$ stands for $\bc$, to
decide whether $(X,Y)$ violates the
condition (4), we guess a set $Z \subset 
Y$ and verify that (a)
$Z|_\bc\subseteq X|_\bc$ (by checking that $Z\setminus B\subseteq X\setminus B$);
(b) $X|_A \subseteq Z|_A$;
(c) one of the two inclusions is proper; and
(d) $Z\models P^Y$. 
All these tasks can be accomplished in polynomial time, and so deciding 
that the condition (4) does not hold amounts to an NP test. Consequently,
deciding that the condition (4) holds can be accomplished by a coNP test.
%\hfill$\Box$
\end{proof}

\smallskip
When we fix $A$ and $B$ (they are no longer components of the input),
the complexity of testing whether $(X,Y)\in
\SE^\bc_\ac(P)$ is lower --- the problem is in the class $\Pol$. Comparing
with Lemma~\ref{lem:memb2}, the lower complexity holds only for $A'=\ac$
and $B'=\bc$. Moreover, \emph{both} $A$ and $B$ must be fixed. 

\begin{lemma}
\label{lem:smemb2}
%\label{lem:stablediscjMC2}
For every finite sets $A,B\subseteq\at$ the following problem is
in the class $\Pol$: given a program $P$, and sets $X$, $Y$, decide
whether $(X,Y)\in\SE^\bc_\ac(P)$.
\end{lemma}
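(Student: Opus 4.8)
The plan is to follow the same strategy as in the proof of Lemma~\ref{lem:memb2}: reformulate every occurrence of $\ac$ and $\bc$ in terms of the fixed finite sets $A$ and $B$ (using $V|_\ac=V\setminus A$ and $V|_\bc=V\setminus B$ for any set $V$), and then argue that each of the conditions (1)--(5) defining $(X,Y)\in\SE^\bc_\ac(P)$ can be decided by inspecting only a \emph{constant} number of candidate sets $Z$. Since $P^Y$ is computable in polynomial time and each test $Z\models P^Y$ is polynomial, this will establish membership in $\Pol$.

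Conditions (1) and (2) are immediate: (1) is a single model check, and (2) reduces to testing $X=Y$ or else the two containments $X\subseteq Y\setminus(A\cap B)$ and $X\setminus A\subset Y\setminus A$ (here I use $Y|_{\ac\cup\bc}=Y\setminus(A\cap B)$). For condition (3), the constraint $Z|_\ac=Y|_\ac$ reads $Z\setminus A=Y\setminus A$; since $Z\subseteq Y$ this forces $Y\setminus A\subseteq Z$, so $Z=(Y\setminus A)\cup(Z\cap A)$ with $Z\cap A\subseteq Y\cap A$. As $A$ is fixed and finite, there are at most $2^{|A|}$ such sets $Z$, and I would simply enumerate them and verify $Z\not\models P^Y$ for each proper subset. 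Condition (5) is handled analogously: $X|_{\ac\cup\bc}=Z|_{\ac\cup\bc}$ means $Z$ agrees with $X$ outside the fixed finite set $A\cap B$, so $Z$ is determined up to its intersection with $A\cap B$, giving at most $2^{|A\cap B|}$ candidates to test for $Z\models P^Y$.

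The crux is condition (4), and here it is essential that \emph{both} alphabets are complemented. In both disjuncts of the condition the candidate $Z\subset Y$ satisfies $Z|_\bc\subseteq X|_\bc$ together with $Z|_\ac\supseteq X|_\ac$ (one inclusion being proper). I would observe that $(A\cup B)^c=\ac\cap\bc$, so every atom outside $A\cup B$ lies in both $\ac$ and $\bc$. The inclusion $Z|_\bc\subseteq X|_\bc$ then yields $Z\cap(A\cup B)^c\subseteq X\cap(A\cup B)^c$, while $Z|_\ac\supseteq X|_\ac$ yields the reverse inclusion $X\cap(A\cup B)^c\subseteq Z\cap(A\cup B)^c$. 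Hence $Z$ and $X$ must coincide on the (infinite) region $(A\cup B)^c$, and $Z$ is free only on the fixed finite set $A\cup B$. Consequently there are at most $2^{|A\cup B|}$ candidate sets $Z$; for each I would check that it is a proper subset of $Y$ satisfying one of the two disjunctive constraints, and if so verify $Z\not\models P^Y$. Condition (4) holds if and only if none of these constantly many candidates is a model of $P^Y$.

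Putting the five tests together, the whole membership check runs in deterministic polynomial time, so the problem is in $\Pol$. I expect the only real obstacle to be the observation at the heart of condition (4): that the two complemented restrictions squeeze $Z$ into agreeing with $X$ on all but finitely many atoms. This is exactly what fails when only one of $A$, $B$ is complemented, since then the region lying in the infinite alphabet is constrained from a single side only and $Z$ may vary over unboundedly many sets; this is why, as the remark preceding the lemma notes, the $\Pol$ bound requires both $A'=\ac$ and $B'=\bc$ and both sets fixed.
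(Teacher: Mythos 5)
Your proposal is correct and follows essentially the same route as the paper's proof: conditions (1) and (2) are direct polynomial checks, and for each of (3), (4), (5) the complemented restrictions pin $Z$ down outside the fixed finite sets $A$, $A\cup B$, and $A\cap B$ respectively, leaving only a constant number ($2^{|A|}$, $2^{|A\cup B|}$, $2^{|A\cap B|}$) of candidates to enumerate. In particular, your key observation for condition (4) --- that $Z|_\bc\subseteq X|_\bc$ together with $Z|_\ac\supseteq X|_\ac$ forces $Z|_{\ac\cap\bc}=X|_{\ac\cap\bc}$ --- is exactly the argument the paper gives.
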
 
\begin{proof}
As we noted, testing the conditions (1) and (2) for $(X,Y)\in
\SE^\bc_{A^c}(P)$ can be done in polynomial time. 

For the condition (3)
we check all candidate sets $Z$. Since $Z|_{\ac} =
Y|_{\ac}$ all elements of $Z$ are determined by $Y$ except possibly
for those that are also in $A$. Thus, there are at most $2^{|A|}$ 
possible sets
$Z$ to consider.
Since $A$ is fixed (not a part of the input), checking for all these
sets $Z$ whether $Z\models P^Y$ and $Z\subset Y$ can be done in 
polynomial time.

For the condition (4), the argument is similar.
We note that $Z$ is, in particular, restricted by $Z|_\bc\subseteq X|_\bc$
and $X|_\ac\subseteq Z|_\ac$. The two conditions imply that
$X|_{\ac\cap\bc}=Z|_{\ac\cap\bc}$. Thus, all elements of $Z$ are 
determined except possibly for those that are also in $A\cup B$. It follows
that there are at most $2^{|A\cup B|}$ possibilities for $Z$ to consider. 
Clearly, for each of them, we can check whether it satisfies or fails 
the premises and the consequent of (4) in polynomial time. Thus, checking
the condition (4) is a polynomial-time task.

The same (essentially) argument works also for the condition (5).
Since %$Z|_{\ac\cup\bc} = Y|_{\ac\cup\bc}$ and 
$Z|_{\ac\cup\bc} = X|_{\ac\cup\bc}$, %respectively,
all elements of $Z$ are determined except possibly for those that are
also in $A\cap B$. Thus, there are at most $2^{|A\cap B|}$ possible sets
$Z$ to consider. Given that $A$ and $B$ are fixed, checking all those sets
$Z$ for $Z\models P^Y$ and %$Z\subset Y$ or 
$Z\subset Y$ %, %respectively, 
can be done in polynomial time. 
\end{proof}

\smallskip
The reduct of a \emph{normal} program is a Horn program. That property
allows us to obtain stronger upper bounds for the case of normal logic 
programs.

\begin{lemma}
\label{lem:smemb1:normal}
%\label{lem:stablenormalMC}
The following problems are in the class
$\Pol$. 
Given a normal program $P$, and sets $X$, $Y$, $A$, and $B$, 
decide whether $(X,Y)\in\SE^{B'}_{A'}(P)$, where $A'$ stands for $A$ or
$\ac$, and $B'$ stands for $B$ or $\bc$. 
\end{lemma}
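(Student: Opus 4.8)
The plan is to exploit that, $P$ being normal, the reduct $P^Y$ is a Horn program (each rule has at most one head atom and no negative body). Hence for any finite set $F$ of atoms, the program obtained by adding the atoms of $F$ as facts to $P^Y$ is still Horn, and it either has no model or has a least one; moreover every model of $P^Y$ containing $F$ also contains this least model. Which of the two cases holds, and the least model itself, can be found in polynomial time (as the least fixpoint of the one-step provability operator of the definite part, followed by a check of the constraints of $P^Y$). I will call it, when it exists, \emph{the least model of $P^Y$ above $F$}. Writing $A'$ and $B'$ for the instantiations ($A$ or $\ac$, and $B$ or $\bc$), every set I force true or forbid below has the shape $X\cap S$, $Y\cap S$, or $Y\setminus(S\setminus X)$ with $S\in\{A',B',A'\cup B'\}$; each is finite and polynomial-time computable, the only potentially infinite ingredient ($A'$ or $B'$) always being intersected with, or subtracted from, the finite sets $X$ or $Y$.

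Conditions (1) and (2) of the definition of $\SE^{B'}_{A'}(P)$ are purely set-theoretic and, exactly as in the proof of Lemma~\ref{lem:smemb1}, decidable in polynomial time. For condition (3), a witness $Z$ (with $Z\subset Y$, $Z|_{A'}=Y|_{A'}$, $Z\models P^Y$) is exactly a proper-subset model of $P^Y$ above $F=Y\cap A'$, since $Z\subseteq Y$ already forces $Z\cap A'\subseteq Y\cap A'$. As $Y\models P^Y$ and $Y\supseteq F$, the least model $M$ of $P^Y$ above $F$ exists, is contained in $Y$, and agrees with $Y$ on $A'$; every model above $F$ contains $M$, so a proper witness exists if and only if $M\subset Y$, i.e.\ (3) holds iff $M=Y$. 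For condition (5), I ask whether some $Z\subseteq Y$ with $Z|_{A'\cup B'}=X|_{A'\cup B'}$ models $P^Y$; putting $F=X\cap(A'\cup B')$ and $U=Y\setminus((A'\cup B')\setminus X)$, the requirements $F\subseteq Z\subseteq U$ are equivalent to $Z\subseteq Y$ and $Z|_{A'\cup B'}=X|_{A'\cup B'}$. Such a $Z$ exists iff the least model $N$ of $P^Y$ above $F$ exists and $N\subseteq U$ (if so it is itself a witness; otherwise every model above $F$ contains $N\not\subseteq U$ and escapes the box). Each of (3) and (5) thus costs one least-model computation.

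I expect condition (4) to be the main obstacle: its quantifier ranges over $Z$ bounded below on $A'$ and above on $B'$, and it carries two strictness requirements, that $Z\subset Y$ be proper and that $(Z|_{A'},Z|_{B'})\neq(X|_{A'},X|_{B'})$. The plan is to case-split on the witness of the latter, reducing (4) to polynomially many least-model computations. A violating $Z$ must either omit some $b\in X\cap B'$ (``$B'$-strictness'') or include some $a\in(A'\setminus X)\cap Y$ (``$A'$-strictness''). For the $B'$-case I compute the single least model $N$ of $P^Y$ above $F=X\cap A'$ and test that it exists with $N\subseteq Y$, $N|_{B'}\subseteq X|_{B'}$ and $X|_{B'}\not\subseteq N$; then any omitted $b$ lies in $Y\setminus N$, witnessing both the strictness and $N\subset Y$, while conversely any $B'$-strict violating $Z$ contains $N$ and lies in this box, so $N$ passes the test --- the single check is sound and complete. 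For the $A'$-case I iterate over the finitely many $a\in(A'\setminus X)\cap Y$, compute the least model $N_a$ of $P^Y$ above $(X\cap A')\cup\{a\}$, and test that it exists with $N_a\subseteq Y\setminus(B'\setminus X)$ and $N_a\neq Y$; here properness must be checked explicitly, because forcing one $A'$-atom need not drive $N_a$ strictly below $Y$. Minimality of $N_a$ again makes this sound and complete, since any $A'$-strict violating $Z$ with witness $a$ contains $N_a$, which then lies in the box and strictly below $Y$. Condition (4) fails exactly when one of these polynomially many tests succeeds. As all five conditions are thereby decided by polynomially many polynomial-time least-model computations, the problem is in $\Pol$.
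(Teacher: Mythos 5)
Your proposal is correct and follows essentially the same route as the paper: conditions (1) and (2) are immediate, conditions (3) and (5) each reduce to one least-model computation for a Horn program obtained by adding suitable facts to $P^Y$, and condition (4) is split into the two strictness cases, handled by one least-model computation for the $B'$-strict case and one per candidate atom $a\in(A'\setminus X)\cap Y$ for the $A'$-strict case. The paper phrases these as consistency-plus-least-model tests for $P^Y\cup Y|_{A'}$, $P^Y\cup X|_{A'}$, $P^Y\cup(X\cup\{y\})|_{A'}$ and $P^Y\cup X|_{A'\cup B'}$, which coincide with your ``least model above $F$'' formulations.
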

\begin{proof}
As we noted, deciding the conditions (1) and (2) can be accomplished in
polynomial time (even without the assumption of normality).

To show that the condition (3) can be decided in polynomial time, we 
show that the complement of (3) can be decided in polynomial time. 
The complement of (3) has the form: {there is $Z\subset Y$ such that 
%$Z|_{\caA\cup \caB} = Y|_{\caA\cup\caB}$ and $Z\models P^Y$}. 
$Z|_{A'} = Y|_{A'}$ and $Z\models P^Y$}. Let us consider the Horn
program
$%\begin{equation}%\label{eq:horn1}
%P_3 = P^Y \cup Y|_{\caA\cup\caB}
P' = P^Y \cup Y|_{A'}
$. %\end{equation}
Since $P$, $Y$ and $A$ are given, $P'$ can be constructed in polynomial
time (for instance, if $A=\ac$, $P'=P^Y\cup (Y\setminus A)$). 
%also in case
%$\caA$ is not bound by the input).
We will show that the complement of the condition (3) holds if and only 
if $P'$ is consistent and its least model, say $L$, satisfies $L\subset
Y$ and
%$L|_{\caA\cup\caB}=Y|_{\caA\cup\caB}$.
$L|_{A'}=Y|_{A'}$.
First, we observe that if the complement of (3) holds, then $P'$ has a 
model $Z$ such that $Z\subset Y$ and 
%$Z|_{\caA\cup\caB} = Y|_{\caA\cup\caB}$.
$Z|_{A'} = Y|_{A'}$. It follows that $P'$ is consistent and its least
model, say $L$, satisfies $L\subseteq Z$. Thus, $L\subset Y$ and 
$L|_{A'} \subseteq Y|_{A'}$. Moreover, since $L\models P'$, $Y|_{A'}
\subseteq L$. Thus, $Y|_{A'}\subseteq L|_{A'}$. Therefore, we have
$L\subset Y$ and $L|_{A'}=Y|_{A'}$ as needed. 
The converse implication is trivial. Since 
$P'$ can be constructed in polynomial time and
$L$ can be computed in polynomial time ($P'$ is Horn),
deciding the complement of the condition (3) can be accomplished
in polynomial time, too.

To settle the condition (4), we again demonstrate that the complement
of the condition (4) can be decided in polynomial time. To this end,
we observe that the complement of (4) holds if and only if one of the
following two conditions holds:\\
\hspace*{0.35in}\parbox{5in}{
\begin{enumerate}
\item[4$'$.] there is $Z\subset Y$ such that, $X|_{A'}\subseteq Z|_{A'}$,
$Z|_{B'} \subset X|_{B'}$ and $Z\models P^Y$
\item[$4''$.] there is $Z\subset Y$ such that, $X|_{A'} \subset Z|_{A'}$,
$Z|_{B'} \subseteq X|_{B'}$ and $Z\models P^Y$.
\end{enumerate}
}

\smallskip
One can check that (4$'$) holds if and only if $P^Y\cup X|_{A'}$ is
consistent and its least model, say $L$, satisfies $L\subset Y$
and $L|_{B'} \subset X|_{B'}$. Similarly, (4$''$) holds if and only if
there is $y\in (Y\setminus X)|_{A'}$ such that $P^Y\cup
(X\cup\{y\})|_{A'}$ is consistent and its least model, say $L$,
satisfies $L\subset Y$ and $L|_{B'} \subseteq X|_{B'}$. Thus, the
conditions (4$'$) and (4$''$) can be checked in polynomial time.

The argument for the condition (5) is similar to that for the complement
of the condition (3). The difference is that instead of $P'$ we use the
Horn program $P^Y \cup X|_{A'\cup B'}$.
%as the only differences are: 
Reusing the argument for (3) with the arbitrary containment of $Z$ in $Y$ 
(rather than a proper one) shows that
the complement of (5) can be decided in polynomial time.
\end{proof}

\smallskip

The next lemma plays a key role
in establishing an upper bound on the complexity of the problems
$\stb_{\delta,\varepsilon}(\cdot,\cdot)$. Its proof is technical
and we present it in the appendix.

\begin{lemma}
\label{stin1e}
Let $P,Q$ be programs and $A,B\subseteq\at$. If $(X,Y)\in\SE_A^B(P)\setminus
\SE_A^B(Q)$, then there are sets $X',Y'\subseteq\at(P\cup Q)$,
such that at least one of the following conditions holds:
\begin{enumerate}
\item[i.] $(X',Y')\in\SE_A^B(P)\setminus \SE_A^B(Q)$
%\item for every $y\in A\setminus\at(P)$, $(X',Y'\cup\{y\})\in\SE_A^B(P)
%\setminus \SE_A^B(Q)$
\item[ii.] $A\setminus\at(P\cup Q) \neq \emptyset$ and
for every $y,z\in A\setminus\at(P\cup Q)$, $(X',Y'\cup\{y,z\})\in\SE_A^B(P)
\setminus \SE_A^B(Q)$
\end{enumerate}
\end{lemma}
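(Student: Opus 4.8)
The plan is to exploit the fact that atoms outside $V=\at(P\cup Q)$ are invisible to both programs: for every set $Z$, whether $Z\models P$, $Z\models Q$, $Z\models P^Y$ or $Z\models Q^Y$ depends only on $Z\cap V$, and the reducts themselves satisfy $P^Y=P^{Y\cap V}$ and $Q^Y=Q^{Y\cap V}$. I would record these remarks first, since they are used throughout. This lemma is the $\SE$-analogue of Lemma~\ref{lem:new}, but the reduction is more delicate because the five conditions defining $\SE_A^B(\cdot)$ compare $X$, $Y$ and quantified sets $Z$ through the restrictions $|_A$, $|_B$ and $|_{A\cup B}$. Fix a witness $(X,Y)\in\SE_A^B(P)\setminus\SE_A^B(Q)$; note $X\subseteq Y$ by condition~(2). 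The atoms of $Y$ outside $V$ split into the generic $A$-atoms in $A\setminus V$ and the atoms in $\at\setminus(V\cup A)$; the latter are precisely those that case~(ii) forbids us to retain, so the argument has two phases: eliminate the atoms outside $V\cup A$ entirely, then bound the number of generic $A$-atoms that must survive.

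In the first phase I set $Y_1=Y\cap(V\cup A)$ and $X_1=X\cap(V\cup A)$ (so $X_1\subseteq Y_1$), deleting the atoms of $\at\setminus(V\cup A)$ from $X$ and $Y$ \emph{simultaneously}. Because these atoms lie outside $A$ they affect no $|_A$-restriction, and because they lie outside $V$ they affect no reduct-satisfaction test; they enter conditions (2)--(5) only through $|_B$- and $|_{A\cup B}$-comparisons among $X$, $Y$ and the quantified $Z$, all of which are projected uniformly. I would verify, condition by condition, that $(X_1,Y_1)\in\SE_A^B(P)\setminus\SE_A^B(Q)$, translating each quantified witness $Z$ to $Z\cap(V\cup A)$ and back by re-adjoining the deleted atoms (which never changes membership in a reduct). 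After this phase $Y_1\setminus V\subseteq A\setminus V$.

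In the second phase I bound the generic $A$-atoms $G=Y_1\cap(A\setminus V)$. Since these atoms occur in no program and all lie in $A$, they are mutually interchangeable: replacing one by another element of $A\setminus V$, or permuting them, preserves $\SE_A^B$-membership for both $P$ and $Q$; hence the truth of conditions (1)--(5) depends on $G$ only through whether $G\cap X_1$ and $G\setminus X_1$ are empty, and this interchangeability is exactly what yields the ``for every $y,z$'' formulation of case~(ii). I would split on condition~(2). If $X_1=Y_1$, then $G\subseteq X_1$, the premises of condition~(4) are either unsatisfiable or subsumed by condition~(3), and condition~(5) is witnessed by $Y_1$ itself; deleting all of $G$ then preserves every condition (for $Q$, only (1) or (3) can be the failing one, and both survive), giving case~(i) with $X'=Y'=Y_1\cap V$. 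If $X_1\neq Y_1$, then $X_1|_A\subset Y_1|_A$, and I keep the core $X'=X_1\cap V$, $Y'=Y_1\cap V$ and reintroduce generic atoms only as far as is needed to reproduce the inclusion pattern on the $A$-axis.

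The main obstacle is condition~(4), whose premise is a disjunction mixing a strict and a non-strict inclusion over $A$ and over $B$, together with its interaction with the existential condition~(5). A single generic atom in $Y'\setminus X'$ already restores the strict inclusion $X'|_A\subset (Y'\cup\{y\})|_A$ demanded by condition~(2); but faithfully reproducing \emph{which} sets $Z$ can simultaneously meet $Z|_A\supset X'|_A$, $Z|_B\subseteq X'|_B$ and $Z\models P^Y$ (and, dually, the second disjunct of (4)) requires a second generic atom: one atom realises a strict $A$-extension of a candidate $Z$ while the other keeps that $Z$ a proper subset of $Y'\cup\{y,z\}$ and witnesses the top-level strictness. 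This is precisely the configuration recorded in case~(ii). I expect the bulk of the work to be the case analysis checking, for each of the five conditions and for both the ``belongs to $\SE_A^B(P)$'' and ``fails for $\SE_A^B(Q)$'' halves, that the projection neither creates nor destroys a witness $Z$; condition~(4) together with~(5), and the determination that two generic atoms (rather than one) are needed, is where the verification is hardest and most technical, which is why the authors defer it to the appendix.
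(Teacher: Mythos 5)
Your overall strategy---strip the witness down to $\at(P\cup Q)$ plus a bounded number of interchangeable ``generic'' atoms from $A$---is the same as the paper's (which proceeds through Lemmas \ref{stin0}--\ref{stin1c}), but the reduction principle you state in your second phase is false, and it is false exactly at the point that produces the two-atom bound of case~(ii). You claim that the truth of conditions (1)--(5) depends on $G=Y_1\cap(A\setminus V)$ ``only through whether $G\cap X_1$ and $G\setminus X_1$ are empty.'' Take $P=\{a\leftarrow\}$, $B=\emptyset$, $A\supseteq\{a,w_1,w_2\}$ with $w_1,w_2\notin\at(P)$, and $X=\{a\}$. Then $(X,\{a,w_1\})\in\SE_A^B(P)$, but $(X,\{a,w_1,w_2\})\notin\SE_A^B(P)$: the set $Z=\{a,w_1\}$ satisfies $Z\subset\{a,w_1,w_2\}$, $Z|_B\subseteq X|_B$, $Z|_A\supset X|_A$ and $Z\models P^Y$, so condition (4) fails. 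Hence the \emph{number} of generic atoms in $Y\setminus X$ matters (one versus two already makes the difference), not merely whether there are any. This is precisely why the paper's Lemma \ref{stin1b} deletes such an atom only while more than two of them remain, why Lemma \ref{stin1c} identifies two configurations only when they have the \emph{same cardinality}, and why case~(ii) retains exactly two fresh atoms. Your closing paragraph does gesture at the need for a second atom, but if one follows your stated invariant one would shrink $G\setminus X_1$ to a single atom, from which case~(ii) (which adjoins a possibly two-element set $\{y,z\}$) does not follow. What is actually needed, and what your proposal does not supply, is the analogue of Lemma \ref{stin1b}: a proof that an atom of $(Y\setminus(X\cup\at(P)))\cap A$ can be deleted whenever at least \emph{three} are present; the delicate step is condition (4), where for the problematic set $Z=Y\setminus\{y\}$ one must pass to $Z'=Z\setminus\{y'\}$ and still have a third generic atom $y''$ available to keep $Z'|_A\supset X|_A$.

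Two smaller points. First, your ``first phase'' is vacuous: condition (3) already forces $Y\subseteq\at(P)\cup A$ (the paper's Lemma \ref{stin0}; if $y\in Y\setminus(\at(P)\cup A)$ then $Z=Y\setminus\{y\}$ satisfies $Z|_A=Y|_A$ and $Z\models P^Y$, contradicting $(X,Y)\in\SE_A^B(P)$), so there is nothing to project away and no condition-by-condition verification is needed there. Second, the atoms of $G\cap X_1$ can always be removed outright, one at a time and with no cardinality proviso (Lemma \ref{stin1a}), so even ``emptiness of $G\cap X_1$'' is not something the conditions depend on; the only residual parameter is $\min(|G\setminus X_1|,2)$, and tracking it correctly is the substance of the proof.
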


We now use similar arguments to those in the previous section to 
obtain the following collection of membership results.

\begin{theorem}
\label{thm:memb3}
The problem $\stb_{\delta,\varepsilon}(\cdot,\cdot)$, 
is contained in the class $\PiP{2}$, for any
$\delta,\varepsilon\in\{c,d\}$;
$\stb_{c,c}(A,B)$ is contained in the class coNP.
The problem $\stb^n_{\delta,\varepsilon}(\cdot,\cdot)$, 
is contained in the class coNP for any
$\delta,\varepsilon\in\{c,d\}$.
\end{theorem}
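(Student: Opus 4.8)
The plan is to reduce each problem to deciding (in)equality of the sets $\SE^{B'}_{A'}(\cdot)$ via Theorem~\ref{thm:stable}, bound the size of a witness to inequality using Lemma~\ref{stin1e}, and then read off the overall complexity from the appropriate membership lemma. Throughout, $A'$ denotes $A$ or $\ac$ and $B'$ denotes $B$ or $\bc$, according to $\delta$ and $\varepsilon$.

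By Theorem~\ref{thm:stable}, $P$ and $Q$ fail to be stable-equivalent relative to $\caHB(A',B')$ if and only if there is a pair $(X,Y)\in\SE^{B'}_{A'}(P)\div\SE^{B'}_{A'}(Q)$. First I would apply Lemma~\ref{stin1e} (with $A'$ in place of $A$ and $B'$ in place of $B$) to such a witness. The lemma guarantees that a witness can be found either of the form $(X',Y')$ with $X',Y'\subseteq\at(P\cup Q)$, or of the form $(X',Y'\cup\{y,z\})$ with $X',Y'\subseteq\at(P\cup Q)$ and $y,z$ any two elements of $A'\setminus\at(P\cup Q)$. When $A'=\ac$, the set $A'\setminus\at(P\cup Q)=\at\setminus(A\cup\at(P\cup Q))$ is infinite, so two padding atoms can always be chosen fresh; when $A'=A$ is finite, the set $A\setminus\at(P\cup Q)$ is computable, and Lemma~\ref{stin1e} certifies it is nonempty precisely in the cases where padding is needed. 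In every case the witness has size polynomial in the input, and the complement $B'$ enters only the subsequent membership test, not the witness bound.

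This yields the following nondeterministic algorithm for the \emph{complement} of $\stb_{\delta,\varepsilon}(\cdot,\cdot)$: guess $X',Y'\subseteq\at(P\cup Q)$ together with an optional pair $y,z\in A'\setminus\at(P\cup Q)$, set $Y''$ to $Y'$ or $Y'\cup\{y,z\}$, and verify $(X',Y'')\in\SE^{B'}_{A'}(P)\div\SE^{B'}_{A'}(Q)$. For arbitrary disjunctive programs, each membership test underlying the symmetric-difference check is in D$^P$ by Lemma~\ref{lem:smemb1}; since a D$^P$ test amounts to a constant number of NP-oracle calls, the whole verification runs in polynomial time with an NP oracle. Hence the complement problem is in $\SigmaP{2}$, and $\stb_{\delta,\varepsilon}(\cdot,\cdot)$ is in $\PiP{2}$ for every $\delta,\varepsilon\in\{c,d\}$.

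The two sharper bounds come from replacing Lemma~\ref{lem:smemb1} by a stronger membership lemma. For $\stb_{c,c}(A,B)$ the sets $A$ and $B$ are fixed and finite, so by Lemma~\ref{lem:smemb2} deciding $(X',Y'')\in\SE^\bc_\ac(\cdot)$ is in $\Pol$; the padding atoms $y,z$ do not affect this, since they lie in $\ac$ and are therefore forced into every candidate $Z$, leaving the constant bounds $2^{|A|}$, $2^{|A\cup B|}$, $2^{|A\cap B|}$ on candidate sets intact. For $\stb^n_{\delta,\varepsilon}(\cdot,\cdot)$ the programs are normal, so by Lemma~\ref{lem:smemb1:normal} the membership test is in $\Pol$ regardless of which of $A,\ac,B,\bc$ occur. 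In both situations the verification step runs in deterministic polynomial time, so the complement problem is in NP and the original problem is in coNP. The main obstacle is the correct instantiation of Lemma~\ref{stin1e} across all four direct/complement combinations — in particular confirming that the witness stays of polynomial size even when $A'$ or $B'$ is an infinite complement — after which the complexity classes follow mechanically from the membership lemmas.
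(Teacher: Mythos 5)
Your proposal is correct and follows essentially the same route as the paper's proof: reduce to equality of the $\SE^{B'}_{A'}$ sets via Theorem~\ref{thm:stable}, bound the witness size (with the optional two padding atoms) via Lemma~\ref{stin1e}, and then obtain $\PiP{2}$, coNP for $\stb_{c,c}(A,B)$, and coNP for normal programs from Lemmas~\ref{lem:smemb1}, \ref{lem:smemb2} and \ref{lem:smemb1:normal}, respectively. No gaps.
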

\begin{proof}
Given finite programs $P$  and $Q$, and finite subsets $A,B$ of $\at$
the following algorithm decides the complementary problem to 
$\stb_{\delta,\varepsilon}(\cdot,\cdot)$. If $\delta=d$ and
$A\setminus\at(P\cup Q)=\emptyset$, the algorithm guesses
two sets $X,Y\subseteq \at(P\cup Q)$. It verifies whether
$(X,Y)\in\SE_A^B(P)\div\SE_A^B(Q)$ and if so, returns YES.
Otherwise, the algorithm guesses two sets $X,Y\subseteq \at(P\cup Q)$.
If $\delta=d$, it selects two elements $y,z\in A\setminus\at(P\cup Q)$
or, if $\delta=c$, it selects two elements $y,z\in \ac\setminus\at(P\cup Q)$. 
The algorithm verifies whether $(X,Y)\in\SE_{A'}^B(P)\div\SE_{A'}^B(Q)$
(where $A'=A$ if $\delta=d,$ and $A'=\ac$ if $\delta=c$) and
if so, returns YES.
%
%Otherwise, the algorithm proceeds as follows. If $\delta=d$,
%the algorithm selects an arbitrary element $y\in A\setminus \at(P\cup Q)$
%(if such selection is impossible, the algorithm returns YES). If $\delta=c$, 
%the algorithm selects an arbitrary element $y\in\ac\setminus\at(P\cup Q)$
%(always possible as $\ac\setminus\at(P\cup Q)$ is infinite). The algorithm 
%now verifies whether $(X,Y\cup\{y\})\in\SE_\ac^B(P)\div \SE_\ac^B(Q)$ and
%if so, returns YES. 
%
Otherwise, 
%if $\delta=d$, the algorithm selects arbitrary elements
%$y,z\in A\setminus \at(P\cup Q)$ (if such selection is impossible, the
%algorithm returns NO). If $\delta=c$, the algorithm selects arbitrary 
%elements $y,z\in\ac\setminus\at(P\cup Q)$
%(always possible as $\ac\setminus\at(P\cup Q)$ is infinite).
the algorithm verifies whether 
$(X,Y\cup\{y,z\})\in\SE_{A'}^B(P)\div \SE_{A'}^B(Q)$ 
(where $A'=A$ if $\delta=d,$ and $A'=\ac$ if $\delta=c$) and if so, 
returns YES.

The correctness of the algorithm follows by Lemma \ref{stin1e}. 
Since the sizes of $X$ and $Y$ are polynomial in the size of $P\cup Q$, 
the membership of the complementary problem in the class $\Sigma_2^P$
follows by Lemma \ref{lem:smemb1}. 

The remaining claims of the assertion follow in the same way by Lemmas
\ref{lem:smemb2} and \ref{lem:smemb1:normal}, respectively.
\end{proof}

%\smallskip
%\noindent
%\textbf{Stable-equivalence --- lower bounds and exact complexity results.}
\subsection{Lower bounds and exact complexity results}

We start with the case of normal programs.

\begin{theorem}\label{thm:norm}
The problem $\stb^n_{\delta,\varepsilon}(A,B)$
is coNP-hard for any $\delta,\varepsilon\in\{c,d\}$.
\end{theorem}
\begin{proof}
Let us fix $\delta$ and $\varepsilon$, and let $A'$ and $B'$ be sets
of atoms defined by the combinations $A$ and $\delta$, and $B$ and
$\varepsilon$. We will show that UNSAT can be reduced to 
$\stb^n_{\delta,\varepsilon}(A,B)$.                               

Let $\varphi$ be a CNF over of set of atoms $Y$. We define $P(\varphi)$ 
and $Q$ as in the proof of Theorem~\ref{h-revsupp}. We note that both
programs are normal. As before, we write $P$ instead of $P(\vph)$ in 
order to simplify the notation.

To prove the assertion it suffices to show that
$\varphi$ is unsatisfiable if and only if $P$ and
$Q$ are stable-equivalent with respect to $\caHB(A',B')$. To this end,
we will show that $\varphi$ is unsatisfiable if and only if 
$\SE^{B'}_{A'}(P)=\SE^{B'}_{A'}(Q)$ (cf. Theorem \ref{thm:stable}).

Since $Q$ has no models, $\SE^{B'}_{A'}(Q)=\emptyset$. Moreover,
$\SE^{B'}_{A'}(P)=\emptyset$ if and only if $P$ has no
models (indeed, if $(X,Y)\in\SE^{B'}_{A'}(P)$, then $Y$ is a model 
of $P$; if $Y$ is a model of $P$, then $(Y,Y)\in\SE^{B'}_{A'}(P)$).
It follows that $\SE^{B'}_{A'}(P)=\SE^{B'}_{A'}(Q)$ if and only if
$P$ has no models.

In the proof of Theorem~\ref{h-revsupp}, we noted that $P$ has models
if and only if $\varphi$ has models. Thus, $\SE^{B'}_{A'}(P)
=\SE^{B'}_{A'}(Q)$ if and only if $\varphi$ is unsatisfiable. %as required.
\end{proof}

\smallskip
Together with the matching coNP-membership results for $\stb^n_{\delta,\varepsilon}(\cdot,\cdot)$
from Theorem~\ref{thm:memb3} we obtain the following result.

\begin{corollary}
\label{cor74}
The following problems 
are coNP-complete for any $\delta,\varepsilon\in\{c,d\}$:
$\stb^n_{\delta,\varepsilon}(\cdot,\cdot)$,
$\stb^n_{\delta,\varepsilon}(A,\cdot)$,
$\stb^n_{\delta,\varepsilon}(\cdot,B)$ and
$\stb^n_{\delta,\varepsilon}(A,B)$.
\end{corollary}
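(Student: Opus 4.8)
The plan is to obtain the result by sandwiching each of the four problems between the two extreme cases whose complexity has just been pinned down, using the reducibility relations recorded in Figure~\ref{fig1}. The key observation is that, for a fixed pair $\delta,\varepsilon\in\{c,d\}$, the four problems listed are ordered by polynomial-time reducibility, with $\stb^n_{\delta,\varepsilon}(A,B)$ least hard and $\stb^n_{\delta,\varepsilon}(\cdot,\cdot)$ hardest, and the two ``mixed'' problems $\stb^n_{\delta,\varepsilon}(A,\cdot)$ and $\stb^n_{\delta,\varepsilon}(\cdot,B)$ lying in between. Concretely, replacing a fixed set parameter by $\cdot$ never decreases the difficulty, so there is a polynomial-time reduction from each of the four problems to $\stb^n_{\delta,\varepsilon}(\cdot,\cdot)$, and a polynomial-time reduction from $\stb^n_{\delta,\varepsilon}(A,B)$ to each of the four.

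For the upper bound, I would invoke Theorem~\ref{thm:memb3}, which places $\stb^n_{\delta,\varepsilon}(\cdot,\cdot)$ in coNP for every $\delta,\varepsilon\in\{c,d\}$. Since each of the four problems reduces in polynomial time to $\stb^n_{\delta,\varepsilon}(\cdot,\cdot)$ and coNP is closed under polynomial-time many-one reductions, all four problems lie in coNP. For the lower bound, I would invoke Theorem~\ref{thm:norm}, which establishes coNP-hardness of $\stb^n_{\delta,\varepsilon}(A,B)$. Since $\stb^n_{\delta,\varepsilon}(A,B)$ reduces in polynomial time to each of the four problems, every one of them inherits coNP-hardness. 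Combining the two bounds yields coNP-completeness of all four problems, uniformly in $\delta$ and $\varepsilon$.

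There is essentially no technical obstacle here: the corollary is a bookkeeping combination of Theorems~\ref{thm:norm} and~\ref{thm:memb3} with the reduction diagram. The only point requiring care is to apply the arrows of Figure~\ref{fig1} in the correct direction --- using them ``upward'' (towards $\stb^n_{\delta,\varepsilon}(\cdot,\cdot)$) to transport coNP membership, and ``downward'' (from $\stb^n_{\delta,\varepsilon}(A,B)$) to transport hardness --- and to note that both invoked theorems are stated for arbitrary $\delta,\varepsilon\in\{c,d\}$, so the argument covers all semantics-parameter combinations simultaneously.
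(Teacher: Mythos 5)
Your proposal is correct and matches the paper's own argument exactly: the paper derives Corollary~\ref{cor74} by combining the coNP-hardness of $\stb^n_{\delta,\varepsilon}(A,B)$ from Theorem~\ref{thm:norm} with the coNP membership of $\stb^n_{\delta,\varepsilon}(\cdot,\cdot)$ from Theorem~\ref{thm:memb3}, propagating both bounds along the reductions of Figure~\ref{fig1}.
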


We now turn to the case of disjunctive programs.
It turns out that the problems $\stb_{c,d}(A,B)$,
$\stb_{d,d}(A,B)$ and $\stb_{d,c}(A,B)$ are $\Pi_2^P$-hard.
The situation is different for $\stb_{c,c}(A,B)$.
By Theorems \ref{thm:memb3} and Corollary \ref{cor74},
the problem is coNP-complete. However, the two immediate 
successors of that problem, $\stb_{c,c}(A,\cdot)$ and 
$\stb_{c,c}(\cdot,B)$ (cf. Figure \ref{fig1}) are 
$\Pi_2^P$-hard. We will now show these results.

To start with we provide some technical results concerning the structure
of the set $\SE_A^{B}(P)$ when $\at(P)\subseteq A$ and $\at(P)\cap B
=\emptyset$. It will be applicable to programs we construct below.

\begin{lemma}\label{lemma:ue}
Let $P$ be a program and $A,B\subseteq \At$. If $\at(P)\subseteq A$
and $\at(P)\cap B=\emptyset$, then $(X,Y)\in\SE_A^{B}(P)$ if and only
if there are $X',Y'\subseteq\at(P)$ and $W\subseteq A\setminus \at(P)$
such that one of the following conditions holds:
\begin{enumerate}
\item[a.] $X=X'\cup W$, $Y=Y'\cup W$, and $(X',Y')\in\SE_A^{B}(P)$
\item[b.] $X=X'\cup W$, $(X',X')\in\SE_A^{B}(P)$ and $Y=X'\cup
W\cup \{y\}$, for some $y\in A\setminus\at(P)$
\item[c.] $X=X'\cup W$, $(X',X')\in\SE_A^{B}(P)$ and $Y=X'\cup
W\cup D$, for some $D\subseteq B\cap(A\setminus\at(P))$ such that
$W\cap D=\emptyset$ and $|D| \geq 2$.
\end{enumerate}
\end{lemma}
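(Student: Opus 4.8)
The plan is to prove both implications of the biconditional by carefully unwinding the defining conditions (1)--(5) of $\SE_A^B(P)$ under the two structural hypotheses $\at(P)\subseteq A$ and $\at(P)\cap B=\emptyset$. The key simplification these hypotheses provide is that for any set $Z$ we have $Z|_A = Z\cap A$ contains all of $Z\cap\at(P)$, while $Z|_B = Z\cap B$ is disjoint from $\at(P)$; hence the body-alphabet restrictions in conditions (4) and (5) only constrain atoms \emph{outside} $\at(P)$, and the head-alphabet restrictions in conditions (2)--(4) are essentially vacuous on $\at(P)$. I would first record these observations as a preliminary remark, since they are used repeatedly.

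For the ``if'' direction, I would take each of the three cases (a), (b), (c) and verify directly that the stated pair $(X,Y)$ satisfies conditions (1)--(5). In case (a), adding a common set $W\subseteq A\setminus\at(P)$ to both $X'$ and $Y'$ does not affect satisfaction of $P^Y$ (since $W\cap\at(P)=\emptyset$, $P^{Y'}=P^{Y}$), so each condition for $(X,Y)$ reduces to the corresponding condition for $(X',Y')$, which holds by hypothesis. Cases (b) and (c) are where $Y$ genuinely exceeds $X$ only on atoms in $A\setminus\at(P)$: here $Y|_A\supsetneq X|_A$ but $P^Y=P^{X'}$, so I would check that condition (3) holds vacuously (no $Z\subset Y$ with $Z|_A=Y|_A$ can differ from $Y$ except on $\at(P)$, and $P^Y$ is already satisfied by the relevant restriction), and that condition (5) is witnessed by $Z=X'$ itself. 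The distinction between (b) (a single extra atom $y$) and (c) (a set $D$ of size $\geq 2$ drawn from $B$) is precisely what is needed to respect condition (2)'s requirement $X\subseteq Y|_{A\cup B}$ together with $X|_A\subset Y|_A$.

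For the ``only if'' direction, which I expect to be the main obstacle, I would start from an arbitrary $(X,Y)\in\SE_A^B(P)$ and set $Y'=Y\cap\at(P)$, $X'=X\cap\at(P)$, and $W=(X\cap A)\setminus\at(P)$ (or the appropriate analogue), then argue that $(X,Y)$ must fall into exactly one of the three templates. The crux is to pin down the possible shapes of $Y\setminus\at(P)$: condition (2) forces $X\setminus\at(P)\subseteq Y|_{A\cup B}$, and the interplay of conditions (2)--(5) forces the ``excess'' $Y\setminus(X'\cup W)$ to be either empty (case (a)), a single atom outside $B$ or in $B$ but alone (case (b)), or a subset of $B$ of size at least two (case (c)). I would use condition (5) to extract the witness giving $(X',X')\in\SE_A^B(P)$ in cases (b) and (c), and condition (3) to rule out any excess atoms of $Y$ lying in $\at(P)$ beyond $Y'$. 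The careful bookkeeping needed to show these are the \emph{only} possibilities---in particular separating the ``$|D|\geq 2$'' case from the ``single $y$'' case by examining whether $X|_A\subset Y|_A$ can hold with the excess confined to $B$---is the delicate part, and I would organize it by first establishing $Y'$ is forced, then analyzing $Y\setminus\at(P)$ against conditions (2) and (4).
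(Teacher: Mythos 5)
Your overall architecture matches the paper's: split off $W=X\cap(A\setminus\at(P))$, handle case (a) by the observation that atoms of $A\setminus\at(P)$ common to $X$ and $Y$ are immaterial (the paper's Lemma~\ref{stin1a}), and classify the remaining possibilities by the shape of $Y\setminus\at(P)$. However, at the two junctures where the real work happens your plan invokes the wrong conditions, and the steps as described would fail. The first problem is your use of condition (3). Under the hypothesis $\at(P)\subseteq A$ one has $Y\subseteq\at(P)\cup A=A$ for every $(X,Y)\in\SE_A^B(P)$ (Lemma~\ref{stin0}), so $Z|_A=Y|_A$ together with $Z\subset Y$ is impossible and condition (3) is \emph{vacuously} true throughout. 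It therefore cannot be used to ``rule out'' anything; in particular your plan to use it in the only-if direction to force $Y\cap\at(P)=X'$ in cases (b)/(c) does not go through. The paper instead takes $Z=Y'\cap\at(P)$ and applies condition (4) (noting $Z|_B=\emptyset$ and $Z|_A\supset X|_A$), obtaining a contradiction with $Y'\models P^{Y'}$.

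The second problem is in the if-direction: you attribute the dichotomy between (b) (one extra atom anywhere in $A\setminus\at(P)$) and (c) (at least two extra atoms, all in $B$) to condition (2). That is a misdiagnosis: condition (2) holds in both cases for free, since $Y\subseteq A$ gives $Y|_{A\cup B}=Y\supseteq X$ and $X|_A=X\subset Y=Y|_A$, with no reference to $B$ or to cardinality. What actually forces the dichotomy is condition (4): if $Y\setminus X$ contained two distinct atoms of $A\setminus\at(P)$ with one of them, say $y'$, outside $B$, then $Z=X\cup\{y'\}$ would satisfy $Z\subset Y$, $Z|_B\subseteq X|_B$, $Z|_A\supset X|_A$, and $Z\models P^Y$ (because $Z\cap\at(P)=X'$ and $P^Y=P^{X'}$), violating (4); conversely, verifying that (4) holds vacuously under the precise hypotheses of (b) and of (c) is the main content of the if-direction, and your plan never addresses it. A smaller inaccuracy: in case (a) the excess $Y\setminus X$ need not be empty, only contained in $\at(P)$. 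These are fixable within your strategy, but as written the argument is incomplete at exactly the delicate points.
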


The proof of this result is technical and we give it in the appendix.
This lemma points to the crucial role played by those pairs $(X,Y)\in
\SE_A^{B}(P)$ that satisfy $Y\subseteq\at(P)$. In particular, as noted in the
next result, it allows
to narrow down the class of pairs $(X,Y)$ that need to be tested for 
the membership in $\SE_A^{B}(P)$ and $\SE_A^{B}(Q)$ when considering
stable-equivalence of $P$ and $Q$ with respect to $\caHB(A,B)$.

\begin{lemma}
\label{lemma:ue:a}
Let $P$ and $Q$ be programs, and $A,B$ subsets of $\at$ such that
$\at(P\cup Q)\subseteq A$ and $\at(P\cup Q)\cap B=\emptyset$. Then,
$P$ and $Q$ are stable-equivalent with respect to $\caHB(A,B)$ if
and only if for every $X,Y$ such that $Y\subseteq\at(P\cup Q)$, $(X,Y)
\in\SE_A^{B}(P)$ if and only if $(X,Y)\in \SE_A^{B}(Q)$. 
\end{lemma}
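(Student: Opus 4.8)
The plan is to reduce the statement, via Theorem~\ref{thm:stable}, to the equivalence: $\SE_A^B(P)=\SE_A^B(Q)$ if and only if $(X,Y)\in\SE_A^B(P)\Leftrightarrow(X,Y)\in\SE_A^B(Q)$ for all pairs with $Y\subseteq S$, where $S=\at(P\cup Q)$. One direction is immediate, since equality of the two full sets certainly entails their agreement on the subclass of pairs with $Y\subseteq S$. All the work lies in the converse, for which I would fix the agreement hypothesis on pairs with $Y\subseteq S$ and establish the two inclusions $\SE_A^B(P)\subseteq\SE_A^B(Q)$ and $\SE_A^B(Q)\subseteq\SE_A^B(P)$; by symmetry it suffices to treat one of them.

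The engine is Lemma~\ref{lemma:ue}, whose hypotheses hold for each of $P$ and $Q$ because $\at(P),\at(Q)\subseteq S\subseteq A$ and both are disjoint from $B$. The difficulty, which I expect to be the main obstacle, is that Lemma~\ref{lemma:ue} generates $\SE_A^B(P)$ from ``core'' pairs relative to the base $\at(P)$, while it generates $\SE_A^B(Q)$ relative to $\at(Q)$; these bases differ from each other and from $S$, so the two decompositions cannot be matched directly. My first step is therefore to upgrade Lemma~\ref{lemma:ue} to a base-change form in which $\at(R)$ is replaced by the common set $S$: for any program $R$ with $\at(R)\subseteq S\subseteq A$ and $S\cap B=\emptyset$, a pair $(X,Y)$ lies in $\SE_A^B(R)$ if and only if it arises from one of the three schemes (a)--(c) of Lemma~\ref{lemma:ue} with $\at(R)$ replaced by $S$, the generating pairs $(X',Y')$ and $(X',X')$ ranging over $\SE_A^B(R)$ with $X',Y'\subseteq S$, and the auxiliary atoms drawn from $A\setminus S$.

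I would prove this base-change form directly from Lemma~\ref{lemma:ue} by splitting every auxiliary set $W\subseteq A\setminus\at(R)$ as $W=(W\cap S)\cup(W\setminus S)$ and absorbing the part $W\cap S\subseteq S\setminus\at(R)$ into the core. Applying scheme (a) of Lemma~\ref{lemma:ue} with fresh atoms $W\cap S$ shows that the enlarged core $(X'\cup(W\cap S),Y'\cup(W\cap S))$ still lies in $\SE_A^B(R)$ and has both coordinates in $S$, after which the remaining atoms $W\setminus S\subseteq A\setminus S$ serve as the base-$S$ auxiliary set. A key simplification is that in scheme (c) the added set $D\subseteq B$ is automatically disjoint from $S$ (since $S\cap B=\emptyset$), so all of $D$ stays outside $S$; moreover a pair whose $Y$-coordinate lies in $S$ can never be of type (c), nor of type (b) with the fresh atom outside $S$, which keeps the case analysis finite and forces each scheme to map cleanly onto its base-$S$ counterpart. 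The converse inclusion of the base-change form is the reverse move: merging the base-$S$ auxiliary atoms $W\setminus S$ back into the auxiliary set of a base-$\at(R)$ decomposition of the enlarged core, which is legitimate because $W\setminus S\subseteq A\setminus\at(R)$ is disjoint from $S$.

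With the base-change form in hand the lemma follows at once. Its right-hand side refers to $R$ only through pairs $(X',Y')\in\SE_A^B(R)$ with $X',Y'\subseteq S$ (scheme (a)) and diagonal pairs $(X',X')\in\SE_A^B(R)$ with $X'\subseteq S$ (schemes (b) and (c)); since $\SE$ always satisfies $X\subseteq Y$, every such generating pair has its $Y$-coordinate contained in $S$. The agreement hypothesis asserts precisely that $P$ and $Q$ possess identical families of such pairs, so the base-change characterizations of $\SE_A^B(P)$ and $\SE_A^B(Q)$ coincide, giving $\SE_A^B(P)=\SE_A^B(Q)$ and hence, by Theorem~\ref{thm:stable}, stable-equivalence with respect to $\caHB(A,B)$. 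Thus isolating the base mismatch as a base-change strengthening of Lemma~\ref{lemma:ue} is what renders the final comparison routine.
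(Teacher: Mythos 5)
Your proposal is correct, but it takes a genuinely different route from the paper's. The paper disposes of the base mismatch in one move: it replaces $P$ and $Q$ by $P'=P\cup\{a\leftarrow a\mid a\in\at(Q)\setminus\at(P)\}$ and $Q'=Q\cup\{a\leftarrow a\mid a\in\at(P)\setminus\at(Q)\}$, notes that these paddings preserve $\SE_A^B$, and then applies Lemma~\ref{lemma:ue} directly to two programs sharing the common atom base $\at(P\cup Q)$; a pair $(X,Y)\in\SE_A^B(P)$ then decomposes via (a)--(c) into a generating pair $(X',Y')$ or $(X',X')$ with $Y'\subseteq\at(P\cup Q)$, the hypothesis transfers that generating pair to $Q$, and the ``if'' direction of Lemma~\ref{lemma:ue} lifts it back. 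You instead keep the programs untouched and prove a base-change strengthening of Lemma~\ref{lemma:ue} over any $S$ with $\at(R)\subseteq S\subseteq A$ and $S\cap B=\emptyset$, obtained by splitting $W$ as $(W\cap S)\cup(W\setminus S)$ and absorbing $W\cap S$ into the core via scheme (a) (equivalently Lemma~\ref{stin1a}). That works, and your key observations are right: $D$ in scheme (c) is automatically outside $S$ because $S\cap B=\emptyset$, and the only genuinely non-trivial case is a base-$\at(R)$ pair of type (b) whose fresh atom $y$ lies in $S\setminus\at(R)$ --- note that such a pair re-emerges as type (a) over the base $S$ (with generating pair $(X'\cup(W\cap S),\,X'\cup(W\cap S)\cup\{y\})$), not as type (b), so ``each scheme maps onto its counterpart'' is slightly imprecise, though harmless since all that matters for the final step is that every generating pair has its second coordinate inside $S$. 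The trade-off: the paper's padding trick is a two-line reduction at the cost of checking that adding $a\leftarrow a$ leaves $\SE_A^B$ unchanged, while your base-change lemma is reusable and avoids modifying the programs but requires the extra case analysis you outline.
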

\begin{proof}
Without loss of generality, we can assume that $\at(P)=\at(Q)$. Indeed,
let $P'=P\cup\{a\leftarrow a \mid a\in \at(Q)\setminus\at(P)\}$ and
$Q'=Q\cup\{a\leftarrow a \mid a\in \at(P)\setminus\at(Q)\}$. It is easy to 
see that $P$ and $P'$ ($Q$ and $Q'$, respectively) are stable-equivalent 
with respect to $\caHB(A,B)$. Thus, in particular, $\SE_A^{B}(P)=
\SE_A^{B}(P')$ and $\SE_A^{B}(Q)=\SE_A^{B}(Q')$. Moreover, $\at(P')=\at(Q')
=\at(P\cup Q)$. Therefore, $\at(P'\cup Q')\subseteq A$ if and only if $\at(P\cup 
Q)\subseteq A$, and $\at(P'\cup Q')\cap B=\emptyset$ if and only if 
$\at(P\cup Q)\cap B=\emptyset$.

Thus, let us assume that $\at(P)=\at(Q)$. Only the ``if'' part of the claim 
requires a proof, the other implication being evident. Let us assume that
$(X,Y)\in\SE_A^{B}(P)$. By Lemma \ref{lemma:ue}, there are $X',Y'\subseteq
\at(P)$ and $W\subseteq A\setminus \at(P)$ such that one of the conditions
(a) - (c) holds. If (a) holds, $(X',Y')\in\SE_A^{B}(Q)$ and so, $(X,Y)\in
\SE_A^{B}(Q)$. If (b) or (c) holds, $(X',X')\in\SE_A^{B}(Q)$ and so,
$(X,Y)\in\SE_A^{B}(Q)$, as well.
\end{proof}

\smallskip
Finally, we note that under the assumptions of Lemma \ref{lemma:ue}, if
$Y\subseteq \at(P)$, then the conditions for $(X,Y)\in\SE_A^{B}(P)$ 
simplify. 

\begin{lemma}
\label{lemma:ue:b}
Let $P$ be a program and $A,B\subseteq \At$. If $\at(P)\subseteq A$,
$\at(P)\cap B=\emptyset$ and $Y\subseteq\at(P)$, then $(X,Y)\in
\SE_A^{B}(P)$ if and only if $Y\models P$, $X\subseteq Y$, $X\models P^Y$,
and for every $Z\subset Y$ such that $X\subset Z$,
$Z\not\models P^Y$.  
\end{lemma}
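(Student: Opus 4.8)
The plan is to prove the biconditional by unfolding the definition of $\SE_A^B(P)$ --- its five conditions (1)--(5) stated just before Theorem~\ref{thm:stable} --- under the three standing hypotheses $\at(P)\subseteq A$, $\at(P)\cap B=\emptyset$, and $Y\subseteq\at(P)$, and checking that (1)--(5) collapse exactly to the four right-hand-side conditions. First I would record the elementary restriction facts that drive everything. Since $Y\subseteq\at(P)\subseteq A$ we have $Y\subseteq A$, and since $\at(P)\cap B=\emptyset$ we have $Y\cap B=\emptyset$; hence $Y|_A=Y$, $Y|_B=\emptyset$, and $Y|_{A\cup B}=Y$. The same computation applies verbatim to any $Z\subseteq Y$ (and, once it is available, to $X$): such a set lies inside $\at(P)\subseteq A$ and misses $B$, so $Z|_A=Z$, $Z|_B=\emptyset$, and $Z|_{A\cup B}=Z$. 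Everything else is bookkeeping with these identities.

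Then I would treat the conditions one at a time. Condition (1) is literally $Y\models P$. For (2), since $Y|_{A\cup B}=Y$ and $Y|_A=Y$, the clause ``$X=Y$, or $X\subseteq Y|_{A\cup B}$ and $X|_A\subset Y|_A$'' yields $X\subseteq Y$ in either disjunct; conversely, given $X\subseteq Y$ we get $X\subseteq A$, so $X|_A=X$, and the clause holds (take $X=Y$ or, if $X\subset Y$, the second disjunct with $X|_A=X\subset Y=Y|_A$). Thus (2) is equivalent to $X\subseteq Y$. In particular every pair in $\SE_A^B(P)$, and every pair meeting the right-hand side, satisfies $X\subseteq Y$, which I then use freely when simplifying (4) and (5). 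Condition (3) becomes vacuous: for $Z\subset Y$ we have $Z\subseteq A$, so $Z|_A=Y|_A$ reads $Z=Y$, contradicting $Z\subset Y$; there is no eligible $Z$.

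The most delicate step is condition (4), because of its two disjunctive premises. Using $X|_B=\emptyset$ and $Z|_B=\emptyset$, the first premise ``$Z|_B\subseteq X|_B$ and $Z|_A\supset X|_A$'' reduces to $X\subset Z$, while the second premise ``$Z|_B\subset X|_B$ and $Z|_A\supseteq X|_A$'' can never hold since $\emptyset\subset\emptyset$ fails. Hence (4) states exactly that for every $Z\subset Y$ with $X\subset Z$ we have $Z\not\models P^Y$, which is the fourth right-hand-side condition. Finally, for (5), both $X|_{A\cup B}=X$ and, for $Z\subseteq Y$, $Z|_{A\cup B}=Z$, so the existential ``there is $Z\subseteq Y$ with $Z|_{A\cup B}=X|_{A\cup B}$ and $Z\models P^Y$'' forces $Z=X$ and, using $X\subseteq Y$, reduces to $X\models P^Y$.

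Assembling the equivalences --- (1) $\Leftrightarrow Y\models P$, (2) $\Leftrightarrow X\subseteq Y$, (3) holds unconditionally, (4) $\Leftrightarrow$ the maximality clause, and (5) $\Leftrightarrow X\models P^Y$ --- yields the stated biconditional. I expect condition (4) to be the only genuine pitfall: one must verify that the two disjuncts collapse precisely as claimed (and that no case is silently dropped), and one must keep track of which simplifications require $X\subseteq Y$ (those for (4) and (5)) as opposed to holding outright. Everything else is a routine application of the restriction identities.
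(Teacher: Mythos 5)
Your proposal is correct and follows exactly the route the paper takes: the paper's proof is the one-line observation that, under the hypotheses, the four right-hand-side conditions are equivalent to conditions (1), (2), (5) and (4) of the definition of $\SE_A^B(P)$, respectively, while condition (3) is vacuously true. Your write-up simply supplies the restriction identities ($Y|_A=Y$, $Y|_B=\emptyset$, etc.) and the case analysis for condition (4) that the paper leaves implicit.
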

\begin{proof}
Under the assumptions of the lemma, the four conditions are equivalent to 
the conditions (1), (2), (5) and (4) for  $(X,Y)\in\SE_A^{B}(P)$, 
respectively, and the condition (3) is vacuously true.
\end{proof}

\smallskip
Our first $\PiP{2}$-hardness result for stable equivalence results concerns
the problem $\stb_{c,d}(A,B)$.

\begin{theorem}\label{thm:disj:cd}
The problem
$\stb_{c,d}(A,B)$ 
is
hard for the class $\PiP{2}$.
\end{theorem}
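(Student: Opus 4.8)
The plan is to reduce from the validity problem for quantified Boolean formulas of the form $\forall Y\exists X\,\vph$, with $\vph$ a CNF formula over $X\cup Y$; deciding whether such a QBF is true is $\PiP{2}$-complete. Since $A$ and $B$ are \emph{fixed} finite sets while $\At$ is infinite, the first and crucial preprocessing step is to rename the variables of the QBF and to introduce a primed copy $z'$ for every $z\in X\cup Y$ (exactly as in the proof of Theorem~\ref{thm:suppBAc1}) so that \emph{all} atoms occurring in the construction are fresh, i.e.\ $\at(P\cup Q)\cap(A\cup B)=\emptyset$. This guarantees $\at(P\cup Q)\subseteq\ac$ and $\at(P\cup Q)\cap B=\emptyset$, which is precisely the hypothesis under which the structural lemmas proved just above become applicable, with $\ac$ playing the role of the head alphabet ``$A$''. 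The body restriction to the finite set $B$ then plays no role beyond licensing those lemmas.

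Given this, the next step is to translate the equivalence question into a comparison of characteristic pairs. By Theorem~\ref{thm:stable}, $P$ and $Q$ are stable-equivalent relative to $\caHB(\ac,B)$ iff $\SE^B_\ac(P)=\SE^B_\ac(Q)$. Applying Lemma~\ref{lemma:ue:a} (instantiating its ``$A$'' with $\ac$) reduces this to checking, for every $X_0$ and every $Y_0\subseteq\at(P\cup Q)$, that $(X_0,Y_0)$ lies in $\SE^B_\ac(P)$ iff it lies in $\SE^B_\ac(Q)$; and Lemma~\ref{lemma:ue:b} collapses each such test to the simple conditions $Y_0\models P$, $X_0\subseteq Y_0$, $X_0\models P^{Y_0}$, and the \emph{absence} of any $Z$ with $X_0\subset Z\subset Y_0$ and $Z\models P^{Y_0}$. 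Thus I only need to design $P$ and $Q$ so that these simplified pairs coincide precisely when the QBF is true.

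A first and, I expect, indispensable observation constrains the design: taking $X_0=Y_0$ above shows $(Y_0,Y_0)\in\SE^B_\ac(R)$ iff $Y_0\models R$, so if $P$ and $Q$ disagreed on any model inside $\at(P\cup Q)$ they would already yield an \emph{unconditional} distinguishing pair, and equivalence would fail regardless of the QBF. Hence $P$ and $Q$ must have \emph{identical} models over $\at(P\cup Q)$ --- this is the interesting departure from the suppmin construction of Theorem~\ref{thm:suppBAc1}, whose $Q$ had strictly fewer models. I would therefore build both programs from a common guess-and-saturate core: a guess $\{y\lpor y'\mid y\in Y\}$ with constraints $\{\lar y,y'\}$ fixing a complete assignment $I$ to $Y$; an \emph{unconstrained} guess of $X$ (no $\lar x,x'$); the saturation rules $\{x\lar u,u';\ x'\lar u,u'\mid x,u\in X\}$ together with $\{x\lar\hat c;\ x'\lar\hat c\}$ (with $\hat c$ as in Theorem~\ref{thm:suppBAc1}), forcing all of $X\cup X'$ as soon as two complementary $X$-atoms coexist or a clause is falsified. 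The outer universal quantifier is then handled by letting $Y_0$ range over the ``type-2'' saturated sets $I\cup(Y\setminus I)'\cup X\cup X'$, one per assignment $I$; the inner existential quantifier is encoded by the reduct-minimality condition of Lemma~\ref{lemma:ue:b}, since such a saturated $Y_0$ admits a strictly smaller reduct-model (a consistent ``type-1'' $X$-assignment $J$, with no intermediate model thanks to the $x\lar u,u'$ rules) exactly when some $J$ satisfies $\vph[I]$. The two programs are made to differ by a controlled modification (typically a single added constraint or rule) that leaves all type-1 and type-2 sets models of both programs, but alters the reduct of the saturated sets so that a pair with $X_0\subsetneq Y_0$ appears in exactly one of $\SE^B_\ac(P)$ and $\SE^B_\ac(Q)$ precisely for those $I$ with $\vph[I]$ unsatisfiable.

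The main obstacle, and where the bulk of the verification lies, is to make the reduct-level minimality conditions (conditions (3) and (4) of $\SE^B_\ac$, as exposed through Lemma~\ref{lemma:ue:b}) correspond \emph{exactly} to the inner satisfiability test, while simultaneously keeping the classical models of $P$ and $Q$ identical so that no spurious pair $(Y_0,Y_0)$ survives. I would then prove the two directions separately. If $\forall Y\exists X\,\vph$ is true, then for every $I$ a satisfying $J$ provides the strictly-intermediate reduct-model that removes any candidate distinguishing pair, yielding $\SE^B_\ac(P)=\SE^B_\ac(Q)$. If it is false, a ``bad'' assignment $I$ with $\vph[I]$ unsatisfiable makes the corresponding saturated pair minimal in the reduct of one program but not the other, witnessing $\SE^B_\ac(P)\neq\SE^B_\ac(Q)$. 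Discharging the case analysis for both the saturated type-2 sets and the ordinary type-1 assignments, and confirming that the calibrated difference between $P$ and $Q$ preserves identical models yet flips exactly the intended minimality test, is the delicate, case-heavy part of the argument.
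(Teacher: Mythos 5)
You have the right scaffolding, and it matches the paper's: reduce from $\forall Y\exists X\,\vph$, rename atoms so that $\at(P\cup Q)\cap(A\cup B)=\emptyset$, invoke Theorem~\ref{thm:stable} together with Lemmas~\ref{lemma:ue:a} and~\ref{lemma:ue:b} to restrict attention to pairs $(X_0,Y_0)$ with $Y_0\subseteq\at(P\cup Q)$, and observe (correctly, via the $X_0=Y_0$ case) that $P$ and $Q$ must have identical classical models, so that the QBF can only be reflected in the reduct-minimality test. But the proof stops exactly where the actual work begins: you never exhibit the two programs, and the ``controlled modification'' that is supposed to separate them is left unspecified. Worse, the core you do sketch --- $y\lpor y'$ with $\lar y,y'$, the guess of $X$, the saturation rules $x\lar u,u'$, $x'\lar u,u'$, $x\lar\hat c$, $x'\lar\hat c$ --- is entirely negation-free. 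For a positive program $R$ one has $R^M=R$ for every $M$, so two positive programs with identical classical models have identical sets $\SE^B_{\ac}(\cdot)$ by Lemma~\ref{lemma:ue:b}; no ``single added constraint or rule'' that preserves the classical models can then create the asymmetry you need. The reduction as described therefore collapses.

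The missing idea is a mechanism that decouples the classical models from the models of the reduct. The paper achieves this with one auxiliary atom $a$ and a single negative constraint: both programs contain $a\lar x,x'$, $x\lar a$, $x'\lar a$, $a\lar\hat c$ and $\lar\nt a$, so every classical model is saturated and contains $a$; but $P$ has the facts $x\lpor x'$ while $Q$ has the guarded rules $x\lpor x'\lar u$ for $u\in\{a\}\cup X\cup X'$. Classically these are interchangeable (every model contains $a$), yet after taking the reduct with respect to a saturated model $M$ the constraint $\lar\nt a$ disappears and $Q^M$ acquires the extra models $I\cup(Y\setminus I)'$, which block or witness the minimality condition of Lemma~\ref{lemma:ue:b} exactly according to whether some $J$ with $I\cup J\models\vph$ exists. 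Without this (or an equivalent) device, the ``delicate, case-heavy part'' you defer is not merely unverified --- there is nothing of the required shape to verify.
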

\begin{proof}
According to our notational convention, we have to show that %the
%problem
$\stb_{c,d}(A,B)$ is $\PiP{2}$-hard, for every finite
$A,B\subseteq \at$.

Let $\forall Y\exists X \varphi$ be a QBF, where $\varphi$ is a CNF
formula over $X\cup Y$. Without loss of generality we can assume that
every clause in $\varphi$ contains at least one literal $x$ or $\neg x$,
for some $x\in X$. Furthermore, we can also assume that $A\cap (X\cup Y)=
\emptyset$ and $B\cap (X\cup Y)=\emptyset$ (if not, variables in $\vph$
can be renamed). We select the primed (fresh) variables so that $A\cap (X'\cup
Y')=\emptyset$ and $B\cap (X'\cup Y')=\emptyset$, as well.

We will construct programs $P(\varphi)$ and $Q(\varphi)$
so that $\forall Y\exists X\varphi$ is true if and only if
$P(\vph)$ and $Q(\vph)$ are stable-equivalent relative to
$\caHB(\ac,B)$. Since the problem to decide whether a given
QBF $\forall Y\exists X \varphi$ is true is $\PiP{2}$-complete, the
assertion will follow.

To construct $P(\vph)$ and $Q(\vph)$ we select an additional atom
$a\notin X\cup X'\cup Y\cup Y'\cup A \cup B$, %% also B?
and use $\hat{c}$, as defined in some of the arguments earlier in the paper. We set
\begin{eqnarray*}
R(\varphi) & = &
\{ a\lar x,x';\; x\lar a;\; x'\lar a
\mid x \in X
\} \cup \\
&& \{ y \vee y';\; \lar y,y' \mid y \in Y\} \cup \\
&& \{ a \lar  \hat{c} \mid c\mbox{\ is a clause in\ } \varphi\} \cup \\
&& \{\lar\nt a\}
%&& \{ x \lar \hat{c};\; x' \lar \hat{c} \mid x\in X, c\mbox{\ is a clause in\ } \varphi\}
\end{eqnarray*}
and define
\begin{eqnarray*}
P(\varphi) & = &
\{ x \vee x' \mid   x \in X
\} \cup R(\varphi) \\
Q(\varphi) & = &
\{ x \vee x' \lar u \mid x \in X, u\in \{a\}\cup X\cup X'
\} \cup R(\varphi)
\end{eqnarray*}
To simplify notation, from now on we write $P$ for $P(\vph)$ and $Q$
or $Q(\vph)$.

We note that $\at(P)=\at(Q)$, $\at(P)\subseteq \ac$, $\at(Q)\subseteq\ac$,
$\at(P)\cap B=\emptyset$, and $\at(Q)\cap B=\emptyset$. Thus, to determine
whether $P$ and $Q$ are stable-equivalent with respect to $\caHB(\ac,B)$,
we will focus only on pairs $(N,M)\in\SE_\ac^B(P)$ and $(N,M)\in\SE_\ac^B(Q)$
that satisfy $N\subseteq M\subseteq\at(P)$ (cf. Lemma \ref{lemma:ue:a}). By
Lemma \ref{lemma:ue:b}, to identify such pairs, we need to consider models
(contained in $\at(P)=\at(Q)$) of the two programs, and models (again 
contained in $\at(P)=\at(Q)$) of the reducts of the two programs with respect
to their models. From now on in the proof, whenever we use the term ``model''
(of a program or the reduct of a program) we assume that it is a subset of 
$\at(P)=\at(Q)$.

First, one can check that the models of $P$ and $Q$ coincide and are of the form:\\
\hspace*{0.35in}\parbox{5in}{
\begin{enumerate}
\item
$I \cup (Y\setminus I)' \cup X\cup X'\cup \{a\}$, for each $I\subseteq Y$.
\end{enumerate}
}

Next, we look at models of the reducts of $P$ and $Q$ with respect to 
their models, that is, sets of the form (1). Let $M$ be such a set. Since
$a\in M$, then every model of $P$ is a model of $P^M$, and the same holds 
for $Q$.

However, $P^M$ and $Q^M$ have additional models. First, each reduct has as
its models sets of the form\\
\hspace*{0.35in}\parbox{5in}{
\begin{enumerate}
\item[2.]
$I\cup (Y\setminus I)'\cup J \cup (X\setminus J)'$,
where $J\subseteq X$, $I\subseteq Y$ and $I\cup J\models \varphi$.
\end{enumerate}
}
Furthermore, $Q^M$ has additional models, namely, sets of the form\\
\hspace*{0.35in}\parbox{5in}{
\begin{enumerate}
\item[3.]
$I\cup (Y\setminus I)'$, for each $I\subseteq Y$.
\end{enumerate}
}
Indeed, it is easy to check that $I\cup (Y\setminus I)'$ satisfies all rules
of $Q^M$ (in the case of the rules $a \leftarrow \hat{c}$, we use the fact
that every sequence $\hat{c}$ contains an atom $x$ or $x'$ for some $x\in X$).

We will now show that $\forall Y\exists X \varphi$ is true if and only if
$P$ and $Q$ are stable-equivalent relative to $\caHB(\ac,B)$. To this end,
we will show that $\forall Y\exists X \varphi$ is true 
if and only if $\SE_\ac^{B}(P)=\SE_\ac^{B}(Q)$.

%Let $F$ denote either $P$ or $Q$. We note that $\at(F)=\at(P)=\at(Q)$.
We recall that since $\at(P)=\at(Q)\subseteq\ac$ and
$\at(P)\cap B = \at(Q) \cap B = \emptyset$, we can use Lemmas \ref{lemma:ue:a} and \ref{lemma:ue:b}.
%Let $M\subseteq\at(F)$. Since $\at(F)\subseteq\ac=\at(F)$ and $\at(F)\cap
%B= \emptyset$, Lemma \ref{lemma:ue:b} implies that $(N,M)\in\SE_\ac^{B}(F)$
%if and only if
%$M\models F$,
%$N\subseteq M$,
%$N\models F^M$, and
%for every $Z$, if $N\subset Z\subset M$, then $Z\not\models F^M$.
%
Thus, if $M\subseteq\at(P)$, $(N,M)\in\SE_\ac^{B}(P)$ if and only if
$M$ is a set of type (1), that is, 
$M=I\cup (Y\setminus I)'\cup X\cup X'\cup \{a\}$,
for some $I\subseteq Y$, and
either $N=M$ or $N$ is a set of type (2), 
that is, $N=I\cup (Y\setminus I)'\cup J
\cup (X\setminus J)'$, for some $J\subseteq X$ such that $I\cup J\models
\varphi$.

The same pairs $(N,M)$ belong to $\SE_\ac^{B}(Q)$ (still under the assumption
that $M\subseteq\at(P)=\at(Q)$). However, $\SE_\ac^{B}(Q)$ contains also
pairs $(N,M)$ where $M$ is a set of type (1), $N=I\cup (Y\setminus I)'$ 
\emph{and} for every $J\subseteq X$, $I\cup J \not\models \vph$ (given that 
the only models of $Q^M$ that are proper \emph{supersets} of $N$ and proper \emph{subsets}
of $M$ are models of type (2), that is precisely what is needed to ensure 
that for every $Z$, $N\subset Z\subset M$ implies $Z\not\models Q^M$).

Let us assume that $\forall Y\exists X \varphi$ is false. Then, there
exists $I\subseteq Y$ such that for every $J\subseteq X$, $I\cup J\not
\models \varphi$. Let $N=I \cup(Y\setminus I)'$ and 
$M=I \cup(Y\setminus I)' \cup X\cup X' \cup \{a\}$.
From our discussion, it is clear that $(N,M)\in \SE_\ac^{B}(Q)$ but 
$(N,M) \notin\SE_\ac^{B}(P)$. Thus, $\SE_\ac^{B}(P)\not=\SE_\ac^{B}(Q)$.

Conversely, if $\forall Y\exists X \varphi$ is true, then for every $I
\subseteq Y$ there is $J\subseteq X$ such that $I\cup J\models\vph$. This
implies that there are no pairs $(N,M)\in \SE_\ac^{B}(Q)$ of the last kind.
Thus, in that case, if $M\subseteq\at(P)$=$\at(Q)$, then $(N,M)\in
\SE_\ac^{B}(P)$ if and only if $(N,M)\in\SE_\ac^{B}(Q)$. 
By Lemma \ref{lemma:ue:a}, $\SE_\ac^{B}(P)= \SE_\ac^{B}(Q)$. 
\end{proof}

\smallskip
Combining Theorem \ref{thm:disj:cd} with Theorem \ref{thm:memb3} yields
the following result. 
%that fully determines the 
%complexity of direct-complement problems.
%\label{thm:memb3}
%\begin{theorem}\label{thm:disj:cd}

\begin{corollary}\label{cor:stbcd}
The problems 
$\stb_{c,d}(A,B)$,
$\stb_{c,d}(\cdot,B)$,
$\stb_{c,d}(A,\cdot)$ and
$\stb_{c,d}(\cdot,\cdot)$,
are $\PiP{2}$-complete. 
\end{corollary}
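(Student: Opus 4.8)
The plan is to prove this corollary purely as a ``squeeze'' between the two opposite corners of the reduction lattice of Figure~\ref{fig1}, invoking no new constructions. The four problems in the statement all share the same semantics symbol $\stb$ and the same direction flags $\delta=c$, $\varepsilon=d$; they differ only in whether the head parameter is $\cdot$ or a fixed $A$, and whether the body parameter is $\cdot$ or a fixed $B$. By the general observation made in the preliminaries --- replacing a fixed set by $\cdot$ yields a problem at least as hard, via a trivial polynomial-time reduction --- these four problems sit between $\stb_{c,d}(A,B)$ (both parameters fixed, the ``easiest'') and $\stb_{c,d}(\cdot,\cdot)$ (both parameters part of the input, the ``hardest''). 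This is exactly the situation the paper describes when it says that for a path from $\Pi$ to $\Pi'$ in Figure~\ref{fig1}, $\Pi'$ is at least as hard as $\Pi$.

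For the lower bound, I would note that in Figure~\ref{fig1} there is a directed path from $\stb_{c,d}(A,B)$ to each of $\stb_{c,d}(\cdot,B)$, $\stb_{c,d}(A,\cdot)$, and $\stb_{c,d}(\cdot,\cdot)$ (and trivially to itself). Hence each of the four problems is at least as hard as $\stb_{c,d}(A,B)$, which Theorem~\ref{thm:disj:cd} shows to be $\PiP{2}$-hard. Therefore all four problems are $\PiP{2}$-hard. For the upper bound, I would use the reverse direction: each of the four problems reduces in polynomial time to $\stb_{c,d}(\cdot,\cdot)$ (again reading off the paths in Figure~\ref{fig1}), and Theorem~\ref{thm:memb3} places $\stb_{c,d}(\cdot,\cdot)$ in $\PiP{2}$. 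Since $\PiP{2}$ is closed under polynomial-time many-one reductions, all four problems lie in $\PiP{2}$.

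Combining the two bounds gives $\PiP{2}$-completeness for each of $\stb_{c,d}(A,B)$, $\stb_{c,d}(\cdot,B)$, $\stb_{c,d}(A,\cdot)$, and $\stb_{c,d}(\cdot,\cdot)$, which is the assertion. The only point requiring any care is bookkeeping: I must confirm that the relevant directed paths in Figure~\ref{fig1} genuinely connect $\stb_{c,d}(A,B)$ up to each target and each target up to $\stb_{c,d}(\cdot,\cdot)$, so that hardness propagates upward and membership propagates downward across the whole family. There is no real mathematical obstacle here --- the substance of the result is already contained in Theorems~\ref{thm:disj:cd} and~\ref{thm:memb3}, and this corollary merely interpolates the intermediate cases using the monotonicity of difficulty along the reduction relations.
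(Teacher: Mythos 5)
Your proposal is correct and matches the paper's own argument exactly: the paper likewise obtains this corollary by combining the $\PiP{2}$-hardness of $\stb_{c,d}(A,B)$ from Theorem~\ref{thm:disj:cd} with the $\PiP{2}$-membership of $\stb_{c,d}(\cdot,\cdot)$ from Theorem~\ref{thm:memb3}, using the monotonicity along the reductions of Figure~\ref{fig1} to cover the two intermediate problems. No gaps; the bookkeeping you flag is exactly the ``all problems between $\stb_{\delta,\varepsilon}(A,B)$ and $\stb_{\delta,\varepsilon}(\cdot,\cdot)$ inclusively'' remark the paper makes at the start of Section~\ref{stabeq}.
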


Next, we consider the problems $\stb_{d,c}(A,B)$,
and
$\stb_{d,d}(A,B)$. 
We have the following simple result.

\begin{lemma}
\label{lemma:dc1}
Let $P$ and $Q$ be programs and $A,B$ subsets of $\at$ such that $\at(P\cup Q)
\cap A=\emptyset$. Then, $P$ and $Q$ are stable-equivalent with respect to
$\caHB(A,B)$ if and only if $P$ and $Q$ have the same stable models.
\end{lemma}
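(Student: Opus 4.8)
The plan is to prove the nontrivial (``if'') direction by a decomposition argument exploiting the fact that the heads of every context program live in $A$, which is disjoint from $V:=\at(P\cup Q)$. The ``only if'' direction is immediate: since $\emptyset\in\caHB(A,B)$, applying the assumed stable-equivalence to the context $R=\emptyset$ already forces $P$ and $Q$ to have the same stable models. Throughout I write $\mathrm{SM}(\cdot)$ for the set of stable models.

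For the ``if'' direction I would fix an arbitrary $R\in\caHB(A,B)$ and set $V=\at(P\cup Q)$, so that $V\cap A=\emptyset$ and $\hd(R)\subseteq A$ is disjoint from $V$. First I would record two structural facts. (a) For any $M$ we have $(P\cup R)^M=P^M\cup R^M$; since $\bd^\pm(P)\subseteq V$, the reduct $P^M$ equals $P^{M\cap V}$ and mentions only atoms of $V$, while dually $\hd(R^M)\subseteq A$. (b) Any $M\in\mathrm{SM}(P\cup R)$ is a minimal model of the reduct and hence satisfies $M\subseteq\hd(P)\cup\hd(R)\subseteq V\cup A$, so $M$ splits disjointly as $M=M_1\cup M_2$ with $M_1:=M\cap V$ and $M_2:=M\setminus V\subseteq A$.

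The core step is to show that $M\in\mathrm{SM}(P\cup R)$ if and only if $M_1\in\mathrm{SM}(P)$ together with a condition $\Phi(M_1,M_2,R)$ that refers only to the sets $M_1$, $M_2$ and to $R$. Since $M\models P^M\cup R^M$ iff $M_1\models P^{M_1}$ and $M\models R^M$, and since all atoms of $P^{M_1}$ lie in $V$, a set $N\subsetneq M$ satisfies $N\models(P\cup R)^M$ exactly when $N\cap V\models P^{M_1}$ and $N\models R^M$. Using that $M_1$ must be a \emph{minimal} model of $P^{M_1}$, I would argue that any such $N$ is forced to have $N\cap V=M_1$ (here the disjointness $\hd(R)\cap V=\emptyset$ lets one re-attach $M_2$ to any candidate $V$-part without breaking $R^M$). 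Hence all admissible shrinking happens inside $M_2$, and $M$ is a minimal model of $(P\cup R)^M$ precisely when: $M_1$ is a minimal model of $P^{M_1}$ (i.e.\ $M_1\in\mathrm{SM}(P)$); $M\models R^M$; and there is no $N_2\subsetneq M_2$ with $M_1\cup N_2\models R^M$. This triple is exactly $\Phi(M_1,M_2,R)$, and it does not mention $P$ except through the set $M_1$.

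It then follows that $\mathrm{SM}(P\cup R)=\{M_1\cup M_2 \st M_1\in\mathrm{SM}(P),\ M_2\subseteq A,\ \Phi(M_1,M_2,R)\}$, with the analogous identity for $Q$. As $\mathrm{SM}(P)=\mathrm{SM}(Q)$ by hypothesis and $\Phi$ is independent of the underlying program, $\mathrm{SM}(P\cup R)=\mathrm{SM}(Q\cup R)$ for every $R\in\caHB(A,B)$, which is the required stable-equivalence. The step I expect to be delicate is the minimality analysis: one must verify carefully that a proper submodel of $(P\cup R)^M$ can never strictly shrink the $V$-part, which is exactly where the disjointness of $\hd(R)$ from $V$ and the minimality of $M_1$ for $P^{M_1}$ are both used, and that the residual obligation on $M_2$ is genuinely expressible in terms of $M_1$ and $R$ alone.
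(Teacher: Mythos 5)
Your proof is correct, but it takes a more self-contained route than the paper. The paper's entire argument is a two-line appeal to the splitting theorem of Lifschitz and Turner: since $\hd(R)\cap\at(P\cup Q)=\emptyset$, the set $\at(P\cup Q)$ is a splitting set for $P\cup R$, so every stable model of $P\cup R$ decomposes into a stable model of the bottom part (essentially $P$) and a stable model of the partially evaluated top part (determined by $R$ and the bottom-level model alone); the same holds for $Q\cup R$, and the conclusion follows. Your ``core step'' is precisely a hand proof of the instance of that theorem needed here: the identity $\mathrm{SM}(P\cup R)=\{M_1\cup M_2 \mid M_1\in\mathrm{SM}(P),\ M_2\subseteq A,\ \Phi(M_1,M_2,R)\}$ with $\Phi$ independent of $P$ is exactly the splitting decomposition, made explicit. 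What your version buys is that it makes visible the one point the paper leaves implicit (its phrase ``$M''$ is a stable model of $M''\cup R$'' is really about the partial evaluation of $R$ by $M'$), namely that the residual condition on $M_2$ genuinely mentions only $M_1$, $M_2$ and $R$. One small presentational remark: the two ingredients you flag are used in opposite directions of the equivalence --- the minimality of $M_1$ as a model of $P^{M_1}$ is what forces $N\cap V=M_1$ for any proper submodel $N$ of the reduct (so all shrinking happens inside $M_2$), whereas the re-attachment of $M_2$ to a candidate $N_1\subsetneq M_1$, which works because $\hd(R)\cap V=\emptyset$, is what shows in the first place that $M_1$ must be a minimal model of $P^{M_1}$ whenever $M$ is stable for $P\cup R$. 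Your sketch slightly conflates these roles, but both are present and the argument goes through.
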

\begin{proof}
Let $R\in \caHB(A,B)$. Since $\at(P\cup Q) \cap A=\emptyset$, we can apply
the splitting theorem~\cite{litu94} to $P\cup R$. It follows that $M$ is a stable model of 
$P\cup R$ if and only if $M=M'\cup M''$, where $M'$ is a stable model of 
$P$ and $M''$ is a stable model of $M''\cup R$. Similarly, $M$ is a stable 
model of $Q\cup R$ if and only if $M=M'\cup M''$, where $M'$ is a stable 
model of $Q$ and $M''$ is a stable model of $M''\cup R$. Thus, the assertion
follows.
\end{proof}

\smallskip
We now use this result to determine the lower bounds on the complexity of
problems $\stb_{d,c}(A,B)$ and $\stb_{d,d}(A,B)$.

\begin{theorem}\label{thm:disj:dc}
The problems $\stb_{d,c}(A,B)$
and $\stb_{d,d}(A,B)$
are hard for the class $\PiP{2}$.
\end{theorem}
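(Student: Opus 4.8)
The plan is to reduce from a $\PiP{2}$-complete QBF problem, exactly as in Theorem~\ref{thm:disj:cd}, but now exploiting Lemma~\ref{lemma:dc1} to shift the burden of hardness away from the context alphabet and onto plain stable-equivalence. Since $\stb_{d,c}(A,B)$ and $\stb_{d,d}(A,B)$ both restrict the \emph{heads} of context programs to the fixed finite set $A$, I would choose programs $P(\vph)$ and $Q(\vph)$ whose atoms avoid $A$ entirely, i.e.\ with $\at(P(\vph)\cup Q(\vph))\cap A=\emptyset$. Lemma~\ref{lemma:dc1} then tells us that for \emph{either} body alphabet ($B$ or $\bc$), stable-equivalence with respect to $\caHB(A,B')$ collapses to the condition that $P(\vph)$ and $Q(\vph)$ have the same stable models. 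This single observation handles both problems at once, because the body parameter $\varepsilon$ becomes irrelevant.

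\smallskip
The next step is to supply programs $P(\vph)$ and $Q(\vph)$, over atoms disjoint from $A$, such that $\forall Y\exists X\,\vph$ is true if and only if $P(\vph)$ and $Q(\vph)$ have the same stable models. This is the heart of the argument, but it is essentially a known encoding: the $\PiP{2}$-hardness of comparing stable models of disjunctive programs is classical, and the construction can be adapted from the one used for $\stb_{c,d}(A,B)$ in Theorem~\ref{thm:disj:cd}. Concretely, I would reuse the subprogram $R(\vph)$ together with the guessing rules $x\lpor x'$ for $P$ and the ``self-supporting'' variants $x\lpor x'\lar u$ for $Q$, so that the stable models of $P$ and $Q$ differ precisely on those $I\subseteq Y$ for which no $J\subseteq X$ satisfies $I\cup J\models\vph$. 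The only adjustment needed relative to Theorem~\ref{thm:disj:cd} is to ensure all atoms used ($X,X',Y,Y',a$) lie outside $A$, which is achieved by renaming, exactly as in the other hardness proofs.

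\smallskip
Having fixed such $P(\vph)$ and $Q(\vph)$ with $\at(P\cup Q)\cap A=\emptyset$, the proof closes in one move: by Lemma~\ref{lemma:dc1}, $P$ and $Q$ are stable-equivalent with respect to $\caHB(A,B)$ (respectively $\caHB(A,\bc)$) if and only if they have the same stable models, which by the construction holds if and only if $\forall Y\exists X\,\vph$ is true. Since deciding truth of such QBFs is $\PiP{2}$-complete, both $\stb_{d,d}(A,B)$ and $\stb_{d,c}(A,B)$ are $\PiP{2}$-hard.

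\smallskip
I expect the main obstacle to be verifying that the disjunctive program pair $(P(\vph),Q(\vph))$ genuinely separates the quantifier alternation at the level of \emph{stable} models rather than classical or supported models. In particular, one must check carefully that the extra stable models of $Q$ correspond exactly to assignments $I\subseteq Y$ witnessing falsity of the inner $\exists X\,\vph$, and that the minimality condition defining stable models (minimal models of the reduct) is what enforces this correspondence. Once the stable models of both programs are pinned down, the reduction to same-stable-models and the appeal to Lemma~\ref{lemma:dc1} are routine.
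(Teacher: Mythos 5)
Your proposal is correct, and it rests on exactly the same pivot as the paper's proof: Lemma~\ref{lemma:dc1}, which collapses stable-equivalence with respect to $\caHB(A,B')$ to ordinary stable-model equivalence once the compared programs use no atoms from $A$, thereby handling $\stb_{d,d}(A,B)$ and $\stb_{d,c}(A,B)$ simultaneously. The difference is in the source problem. The paper reduces from stable-model \emph{existence} for disjunctive programs, citing its known $\SigmaP{2}$-completeness, and pairs an arbitrary input program $P$ (with atoms renamed away from $A$) against the trivially inconsistent $Q=\{f;\ \lar f\}$; the whole hardness argument is then three lines. You instead re-encode the QBF $\forall Y\exists X\,\vph$ directly, reusing the $P(\vph)/Q(\vph)$ pair from Theorem~\ref{thm:disj:cd}; this works (one can check that the stable models of your $P(\vph)$ are exactly the interpretations $I\cup(Y\setminus I)'\cup X\cup X'\cup\{a\}$ for those $I$ admitting no $J$ with $I\cup J\models\vph$, while $Q(\vph)$ has no stable models at all), but it amounts to re-proving the classical Eiter--Gottlob hardness result inline, and the verification of the stable models of the reducts is precisely the ``main obstacle'' you flag. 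So both routes are sound; the paper's is more economical because it treats that classical result as a black box.
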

\begin{proof}
To be precise, we have to show that %the problems
$\stb_{d,c}(A,B)$
and $\stb_{d,d}(A,B)$ are $\PiP{2}$-hard, for every finite $A,B\subseteq\at$.

It is well known that the problem to decide whether a logic program $P$ has
a stable model is $\Sigma_2^P$-complete \cite{eite-gott-95}. We will reduce this problem to
the complement of $\stb_{d,c}(A,B)$ ($\stb_{d,d}(A,B)$, respectively). 
That will complete the proof.

Thus, let $P$ be a logic program. Without loss of generality, we can assume
that $\at(P)\cap A=\emptyset$ (if not, we can rename atoms in $P$, without 
affecting the existence of stable models). %Let $B=\emptyset$ and 
Let $f$ be
an atom not in $A$.
and define $Q=\{f,\ \leftarrow f\}$. Clearly, $\at(P\cup Q)\cap A=\emptyset$.
Moreover, $P$ and $Q$ do not have the same stable models if and only if $P$
has stable models. By Lemma \ref{lemma:dc1}, $P$ has stable models if and only 
if $P$ and $Q$ are not stable-equivalent relative to $\caHB(A,\bc)$. Similarly 
(as $B$ is immaterial for the stable-equivalence in that case), $P$ has 
stable models if and only if $P$ and $Q$ are not stable-equivalent relative to 
$\caHB(A,B)$. 
\end{proof}

\smallskip
We now explicitly list all cases, where we are able to give completeness results 
(membership results are from Theorem~\ref{thm:memb3}).

\begin{corollary}\label{cor:stbdd}
The problems 
$\stb_{d,d}(A,B)$,
$\stb_{d,d}(\cdot,B)$,
$\stb_{d,d}(A,\cdot)$ and
$\stb_{d,d}(\cdot,\cdot)$,
are $\PiP{2}$-complete. 
\end{corollary}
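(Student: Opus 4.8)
The plan is to obtain $\PiP{2}$-completeness for all four problems by combining the single membership result and the single hardness result already in hand, and then propagating each bound through the reduction diagram of Figure~\ref{fig1}. Concretely, the upper bound will come from Theorem~\ref{thm:memb3}, the lower bound from Theorem~\ref{thm:disj:dc}, and Figure~\ref{fig1} supplies exactly the monotonicity that lets one bound on one variant cover all four.

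For the upper bound, I would start from Theorem~\ref{thm:memb3}, which places $\stb_{d,d}(\cdot,\cdot)$ in $\PiP{2}$. Since $\stb_{d,d}(\cdot,\cdot)$ is the hardest of the four problems in the diagram of Figure~\ref{fig1} --- each of $\stb_{d,d}(A,B)$, $\stb_{d,d}(\cdot,B)$, and $\stb_{d,d}(A,\cdot)$ reduces in polynomial time to it, since turning a fixed parameter into part of the input only increases difficulty --- membership in $\PiP{2}$ transfers down to all three remaining problems.

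For the lower bound, I would invoke Theorem~\ref{thm:disj:dc}, which already gives $\PiP{2}$-hardness of $\stb_{d,d}(A,B)$. As $\stb_{d,d}(A,B)$ is the easiest of the four and reduces to each of the others along the arrows of Figure~\ref{fig1}, $\PiP{2}$-hardness propagates upward to $\stb_{d,d}(\cdot,B)$, $\stb_{d,d}(A,\cdot)$, and $\stb_{d,d}(\cdot,\cdot)$. Matching the two bounds then yields $\PiP{2}$-completeness for all four problems, exactly as was done for the complement-direct case in Corollary~\ref{cor:stbcd}.

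There is no genuine obstacle remaining at this step: all the technical effort resides in Theorem~\ref{thm:memb3} (whose proof leans on the structural Lemma~\ref{stin1e}) and in the QBF encoding behind Theorem~\ref{thm:disj:dc} (via Lemma~\ref{lemma:dc1}). The only point requiring a moment of care is to confirm the placement of these two problems in the hardness ordering of Figure~\ref{fig1} --- that $\stb_{d,d}(\cdot,\cdot)$ sits above, and $\stb_{d,d}(A,B)$ sits below, both intermediate problems $\stb_{d,d}(\cdot,B)$ and $\stb_{d,d}(A,\cdot)$. Once that placement is read off the diagram, the four completeness claims are immediate.
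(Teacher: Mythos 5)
Your proposal is correct and follows exactly the paper's (implicit) argument: membership of $\stb_{d,d}(\cdot,\cdot)$ in $\PiP{2}$ from Theorem~\ref{thm:memb3} propagated down the diagram of Figure~\ref{fig1}, combined with $\PiP{2}$-hardness of $\stb_{d,d}(A,B)$ from Theorem~\ref{thm:disj:dc} propagated up. This is precisely the scheme the paper announces at the start of Section~\ref{stabeq}, so nothing further is needed.
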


\begin{corollary}\label{cor:stbdc}
The problems 
$\stb_{d,c}(A,B)$,
$\stb_{d,c}(\cdot,B)$,
$\stb_{d,c}(A,\cdot)$ and
$\stb_{d,c}(\cdot,\cdot)$,
are $\PiP{2}$-complete. 
\end{corollary}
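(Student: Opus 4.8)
The plan is to pin down all four problems by matching the generic upper bound against the most specific lower bound, exploiting the hardness-ordering depicted in Figure~\ref{fig1}. The four problems $\stb_{d,c}(A,B)$, $\stb_{d,c}(\cdot,B)$, $\stb_{d,c}(A,\cdot)$ and $\stb_{d,c}(\cdot,\cdot)$ all lie on the single chain running from the bottom element $\stb_{d,c}(A,B)$ to the top element $\stb_{d,c}(\cdot,\cdot)$ in that diagram. Hence it suffices to show that the top element is in $\PiP{2}$ and that the bottom element is $\PiP{2}$-hard, and then to let these two bounds propagate along the arrows.

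For the upper bound I would simply invoke Theorem~\ref{thm:memb3}, which asserts that $\stb_{\delta,\varepsilon}(\cdot,\cdot)$ lies in $\PiP{2}$ for every $\delta,\varepsilon\in\{c,d\}$; instantiating $\delta=d$, $\varepsilon=c$ gives $\stb_{d,c}(\cdot,\cdot)\in\PiP{2}$. Since each of the other three problems reduces in polynomial time to $\stb_{d,c}(\cdot,\cdot)$ (these are exactly the arrows of Figure~\ref{fig1} obtained by replacing a fixed set $A$ or $B$ by the free parameter $\cdot$), membership in $\PiP{2}$ carries over to all four problems.

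For the lower bound I would invoke Theorem~\ref{thm:disj:dc}, which establishes $\PiP{2}$-hardness of $\stb_{d,c}(A,B)$ for every finite $A,B\subseteq\at$. Because $\stb_{d,c}(A,B)$ reduces in polynomial time to each of $\stb_{d,c}(\cdot,B)$, $\stb_{d,c}(A,\cdot)$ and $\stb_{d,c}(\cdot,\cdot)$ (again reading off Figure~\ref{fig1}), $\PiP{2}$-hardness propagates upward to all four problems. Combining the two halves yields $\PiP{2}$-completeness in every case. There is essentially no obstacle: the substantive work already resides in Theorems~\ref{thm:memb3} and~\ref{thm:disj:dc}, and the only point worth checking is that the four listed problems genuinely sit between the two matched endpoints in the reduction diagram, so that the upper and lower bounds apply uniformly to each of them.
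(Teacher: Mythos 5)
Your proposal is correct and is precisely the argument the paper intends: the upper bound comes from Theorem~\ref{thm:memb3} applied to $\stb_{d,c}(\cdot,\cdot)$, the lower bound from Theorem~\ref{thm:disj:dc} for $\stb_{d,c}(A,B)$, and both propagate along the reductions of Figure~\ref{fig1} to all four problems sandwiched between them. (The only cosmetic slip is calling the four problems a ``single chain''; in Figure~\ref{fig1} they form a diamond with $\stb_{d,c}(\cdot,B)$ and $\stb_{d,c}(A,\cdot)$ incomparable, but this is immaterial since each still lies between the two matched endpoints.)
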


Finally, we show $\Pi_2^P$-hardness of problems $\stb_{c,c}(A,\cdot)$
and $\stb_{c,c}(\cdot,B)$.

\begin{theorem}\label{thm:disj:cc}
The problems
$\stb_{c,c}(A,\cdot)$
and
$\stb_{c,c}(\cdot,B)$.
are
$\PiP{2}$-hard.
\end{theorem}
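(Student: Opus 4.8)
The plan is to treat the two problems by two separate reductions, exploiting the fact that in each case exactly one of the two alphabet parameters is part of the input and can therefore be chosen freely. The guiding principle is the one used throughout this section: arrange for the \emph{effective} head alphabet $\ac$ and body alphabet $\bc$ to interact with $\at(P\cup Q)$ in a controlled way, so that one of the structural lemmas already proved becomes applicable. The asymmetry between the two problems ($B$ fixed versus $A$ fixed) forces two genuinely different arguments, and choosing which lemma to aim for in each case is the crux.

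For $\stb_{c,c}(A,\cdot)$, where $A$ is fixed but $B$ is part of the input, I would reuse the programs $P(\vph),Q(\vph)$ built from a QBF $\forall Y\exists X\vph$ in the proof of Theorem~\ref{thm:disj:cd}, renaming the atoms $X,X',Y,Y',a$ so that they avoid the fixed set $A$. The one new ingredient is to \emph{set the input body parameter} to $B=\at(P\cup Q)$. Then the effective body alphabet $\bc$ is disjoint from $\at(P\cup Q)$, while the effective head alphabet $\ac$ contains $\at(P\cup Q)$. These are exactly the hypotheses ``$\at(P)\subseteq$ head alphabet'' and ``$\at(P)\cap$ body alphabet $=\emptyset$'' under which Lemmas~\ref{lemma:ue}, \ref{lemma:ue:a} and \ref{lemma:ue:b} hold. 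Since the characterization in Lemma~\ref{lemma:ue:b} of the pairs $(X,Y)$ with $Y\subseteq\at(P)$ does not mention the body alphabet at all, the entire model/reduct analysis of Theorem~\ref{thm:disj:cd} carries over verbatim with $\bc$ in place of $B$, yielding that $\forall Y\exists X\vph$ is true if and only if $\SE^{\bc}_{\ac}(P)=\SE^{\bc}_{\ac}(Q)$, i.e.\ (by Theorem~\ref{thm:stable}) if and only if $P$ and $Q$ are stable-equivalent relative to $\caHB(\ac,\bc)$.

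For $\stb_{c,c}(\cdot,B)$, where $B$ is fixed, the enlargement trick above is unavailable, since we cannot force $\at(P\cup Q)\subseteq B$. Instead I would use the freedom in $A$ together with a different lemma. Given a disjunctive program $P$, put $Q=\{f\lar;\ \lar f\}$ (which has no stable models) and choose the input head parameter $A\supseteq\at(P\cup Q)$; then the effective head alphabet $\ac$ is disjoint from $\at(P\cup Q)$. This is precisely the hypothesis of Lemma~\ref{lemma:dc1} (applied with head alphabet $\ac$ and body alphabet $\bc$), which gives that $P$ and $Q$ are stable-equivalent relative to $\caHB(\ac,\bc)$ if and only if they have the same stable models. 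As $Q$ has none, this holds if and only if $P$ has no stable model. The map $P\mapsto(P,Q,A)$ is thus a polynomial reduction from the $\PiP{2}$-complete problem of deciding that a disjunctive program has \emph{no} stable model \cite{eite-gott-95}, establishing $\PiP{2}$-hardness.

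The main obstacle is conceptual rather than computational: recognizing that the two halves of the statement require opposite moves. When both $\ac$ and $\bc$ cover $\at(P\cup Q)$, the relation $\SE^{\bc}_{\ac}$ degenerates to ordinary strong-equivalence SE-models, which would only yield coNP-hardness; so for the $\PiP{2}$ lower bound it is essential that in the first case the body alphabet \emph{excludes} the program atoms (achieved by enlarging $B$, keeping the SE-model machinery effective) and in the second case the head alphabet \emph{excludes} them (achieved by enlarging $A$, enabling splitting). Once the correct lemma is identified for each case, the remaining verifications are routine: both constructions are clearly polynomial, and they rely only on facts already established, namely Theorem~\ref{thm:disj:cd}, Lemmas~\ref{lemma:ue}--\ref{lemma:ue:b}, and Lemma~\ref{lemma:dc1}.
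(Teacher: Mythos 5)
Your proposal is correct and follows essentially the same route as the paper: for $\stb_{c,c}(A,\cdot)$ the paper likewise reuses $P(\vph),Q(\vph)$ from Theorem~\ref{thm:disj:cd} and sets $B=\at(P)$ so that the Lemma~\ref{lemma:ue}--\ref{lemma:ue:b} machinery applies with $\bc$ in place of $B$, and for $\stb_{c,c}(\cdot,B)$ it likewise sets the head parameter to the program's atom set so that $\ac$ is disjoint from $\at(P\cup Q)$ and the splitting argument of Lemma~\ref{lemma:dc1} (as in Theorem~\ref{thm:disj:dc}) reduces the question to existence of stable models. Your choice of $A\supseteq\at(P\cup Q)$ in the second reduction is in fact slightly more careful than the paper's $A=\at(P)$, since it also covers the auxiliary atom $f$.
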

\begin{proof}
We first show that the problem $\stb_{c,c}(A,\cdot)$ is $\PiP{2}$-hard,
for every finite $A\subseteq \at$. Let $\forall Y\exists X \varphi$ be 
a QBF, where $\varphi$ is a CNF formula over $X\cup Y$. As in the proof
of Theorem \ref{thm:disj:cd}, without loss of generality we can assume
that every clause in $\varphi$ contains a literal $x$ or $\neg x$,
for some $x\in X$, and that $A\cap (X\cup Y)= \emptyset$ (if not, variables
in $\varphi$ can be renamed).
  
Let $P(\varphi)$ and $Q(\vph)$ be the programs used in the proof of 
Theorem~\ref{thm:disj:cd}, where we choose primed variables so that 
$A\cap (X'\cup Y')=\emptyset$. We define $B=\at(P)$. We have that 
$\at(P)\subseteq \ac$ and $\at(P)\cap \bc=\emptyset$.

We recall that the argument used in the proof of Theorem
\ref{thm:disj:cd} to show that $\forall Y\exists X \varphi$ is true if and
only if $P(\vph)$ and $Q(\vph)$ are stable-equivalent with respect to
$\caHB(\ac,B)$ does not depend on the finiteness of $B$ but only on the
fact that $B\cap\at(P)=\emptyset$. Thus, the same argument shows that
$\forall Y\exists X \varphi$ is true if and only if $P(\vph)$ and $Q(\vph)$
are stable-equivalent with respect to $\caHB(\ac,\bc)$. It follows that
$\stb_{c,c}(A,\cdot)$ is $\Pi_2^P$-hard.

Next, we show that the problem $\stb_{c,c}(\cdot,B)$ is $\PiP{2}$-hard,
for every finite $B\subseteq \at$. We reason as in the proof of 
Theorem~\ref{thm:disj:dc}. That is, we construct a reduction from the
problem to decide whether a logic program has no stable models. Specifically,
let $P$ be a logic program. We define $A=\at(P)$. Clearly, we have $\at(P)
\cap \ac =\emptyset$. We recall the argument used in Theorem~\ref{thm:disj:dc} to
show that $P$ has stable models if and only if $P$ and $Q=\{f,\;\leftarrow
f\}$ are not stable-equivalent with respect to $\caHB(A,B)$ does not depend on 
the finiteness of $A$ nor on $B$. Thus, it follows that $P$ has stable models 
if and only if $P$ and $Q=\{f,\;\leftarrow f\}$ are not stable-equivalent with
respect to $\caHB(\ac,\bc)$ and the $\Pi_2^P$-hardness of $\stb_{c,c}(\cdot,B)$
follows. 
\end{proof}

\smallskip
We put the things together using 
Theorem~\ref{thm:norm}  for the coNP-hardness and
Theorem~\ref{thm:disj:cc} for the $\PiP{2}$-hardness.
The matching upper bounds are from Theorem~\ref{thm:memb3}.

\begin{corollary}\label{cor:stbcc}
The problem $\stb_{c,c}(A,B)$ is coNP-complete. 
The problems
$\stb_{c,c}(\cdot,B)$,
$\stb_{c,c}(A,\cdot)$ and
$\stb_{c,c}(\cdot,\cdot)$,
are $\PiP{2}$-complete. 
\end{corollary}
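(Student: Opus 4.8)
The plan is to assemble the corollary entirely from results already proved, using the membership bounds of Theorem~\ref{thm:memb3}, the lower bounds of Theorems~\ref{thm:norm} and~\ref{thm:disj:cc}, and the polynomial-time reducibilities depicted in Figure~\ref{fig1}. No new construction is needed; the task is to match upper and lower bounds for each of the four problems, paying attention to the direction in which hardness and membership propagate.

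First I would settle the coNP-completeness of $\stb_{c,c}(A,B)$. Membership in coNP is stated directly in Theorem~\ref{thm:memb3}; it rests on Lemma~\ref{lem:smemb2}, where fixing \emph{both} $A$ and $B$ reduces the test for $(X,Y)\in\SE^\bc_\ac(P)$ to a polynomial-time task. For the matching lower bound, Theorem~\ref{thm:norm} gives coNP-hardness of $\stb^n_{c,c}(A,B)$. Since the reduction there produces the normal programs $P(\vph)$ and $Q$ from the proof of Theorem~\ref{h-revsupp}, and every normal program is a disjunctive program, the same instances witness coNP-hardness of the disjunctive problem $\stb_{c,c}(A,B)$. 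Combining the two yields coNP-completeness.

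Next I would treat the three $\PiP{2}$-complete cases. For the upper bounds, Theorem~\ref{thm:memb3} places $\stb_{c,c}(\cdot,\cdot)$ in $\PiP{2}$; by the reductions of Figure~\ref{fig1}, each of $\stb_{c,c}(A,\cdot)$ and $\stb_{c,c}(\cdot,B)$ reduces in polynomial time to $\stb_{c,c}(\cdot,\cdot)$, so all three problems lie in $\PiP{2}$. For the lower bounds, Theorem~\ref{thm:disj:cc} supplies $\PiP{2}$-hardness of $\stb_{c,c}(A,\cdot)$ and $\stb_{c,c}(\cdot,B)$ outright. To obtain $\PiP{2}$-hardness of $\stb_{c,c}(\cdot,\cdot)$, I would again invoke Figure~\ref{fig1}: since, say, $\stb_{c,c}(A,\cdot)$ reduces to $\stb_{c,c}(\cdot,\cdot)$, hardness transfers upward along the arrow. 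Each of the three problems then has matching bounds.

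There is no genuine obstacle here, as the substance resides in the cited theorems; the only point requiring care is the contrast between the coNP-complete case $(A,B)$ and the $\PiP{2}$-complete cases $(A,\cdot)$, $(\cdot,B)$, $(\cdot,\cdot)$. It is essential that it is the fixing of \emph{both} alphabets that collapses the complexity to coNP: Lemma~\ref{lem:smemb2} applies only when $A$ \emph{and} $B$ are fixed, so as soon as one of them becomes part of the input the polynomial membership test is lost, and Theorem~\ref{thm:disj:cc} furnishes the matching jump to $\PiP{2}$-hardness. Keeping track of which direction each arrow in Figure~\ref{fig1} transmits membership versus hardness is therefore the one step one must get right.
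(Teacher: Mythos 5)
Your proposal is correct and follows exactly the paper's own argument: the paper assembles the corollary from Theorem~\ref{thm:norm} (coNP-hardness, via normal instances that are a fortiori disjunctive), Theorem~\ref{thm:disj:cc} ($\PiP{2}$-hardness of $\stb_{c,c}(A,\cdot)$ and $\stb_{c,c}(\cdot,B)$, propagated to $\stb_{c,c}(\cdot,\cdot)$ via Figure~\ref{fig1}), and Theorem~\ref{thm:memb3} for the matching upper bounds. Your added remark on why fixing both $A$ and $B$ is what collapses the complexity (Lemma~\ref{lem:smemb2}) is accurate and consistent with the paper's discussion.
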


\section{Discussion}

We studied the complexity of deciding relativized 
hyperequivalence of programs under the semantics of stable, 
supported and supported minimal models. We focused on problems
$\sem_{\delta,\epsilon}(\alpha,\beta)$, where at least one of 
$\delta$ and $\epsilon$ equals $c$, that is, at least one of the 
alphabets for the context problems is determined as the complement of 
the corresponding set $A$ or $B$. As we noted, such problems arise 
naturally in the context of modular design of logic programs, yet they
have received essentially no attention so far.

\begin{table}%[ht]
\caption{Complexity of $\sem_{\delta,\varepsilon}(\alpha,\beta)$; all entries are completeness results.}\label{tab}
\label{tab1}
\begin{minipage}{\textwidth}
%\begin{tabular}{||c|c|c|c||c|c|c|c||}
\begin{tabular}{cccccccc}
\hline
\hline
%\multicolumn{4}{||r||}{$\sem_{\delta,\varepsilon}(\alpha,\beta)$} & 
%$\spp$ &
%$\sppm$ & 
%$\stb$ & 
%$\stb^n$ \\
$\delta$ & $\varepsilon$ & $\alpha$ & $\beta$  &
%& & & & \\
$\spp$ &
$\sppm$ & 
$\stb$ & 
$\stb^n$ \\
\hline 
$d$ & $d$ &&& coNP & $\PiP{2}$ & $\PiP{2}$ & coNP \\
%\hline 
\noalign{\vspace {.5cm}}
$d$ & $c$ & %$\cdot$ or $A$ 
& $\cdot$ & coNP & $\PiP{2}$ & $\PiP{2}$ & coNP \\
$d$ & $c$ & 
%$\cdot$ or $A$ 
& B & coNP & coNP &  $\PiP{2}$ & coNP \\
%\hline 
\noalign{\vspace {.5cm}}
$c$ & $c$ & $\cdot$ or $A\neq\emptyset$ & $\cdot$ & coNP & $\PiP{2}$ & $\PiP{2}$ & coNP \\
%$c$ & $c$ &  $A\neq\emptyset$ & $\cdot$ & coNP & $\PiP{2}$ & $\PiP{2}$ & coNP \\
$c$&  $c$ & $\emptyset$ & $\cdot$ & coNP & coNP & $\PiP{2}$ & coNP \\
$c$ & $c$ & $\cdot$ & $B$ & coNP & coNP & $\PiP{2}$ & coNP \\
$c$ & $c$ & $A$ & $B$ & coNP & coNP & coNP & coNP \\
%\hline 
\noalign{\vspace {.5cm}}
$c$&  $d$ & $\cdot$ or $A\neq \emptyset$  &  & coNP & $\PiP{2}$ & $\PiP{2}$ & coNP \\
%$c$&  $d$ &  $A\neq \emptyset$ & & coNP & $\PiP{2}$ & $\PiP{2}$ & coNP \\
$c$&  $d$ & $\emptyset$ & & coNP & coNP & $\PiP{2}$ & coNP \\
%$c$&  $d$ & $A$ & $B$& coNP & coNP & x & x \\
\hline
\hline
\end{tabular}
%\vspace{-2\baselineskip}
\end{minipage}
\end{table}

Table \ref{tab1} summarizes the results (for the sake of completeness
we also include the complexity of direct-direct problems). It shows that the
%Our results are summarized in Table \ref{tab1}. They show that the
problems concerning supp-equivalence (no normality restriction), and
stable-equivalence for normal programs are all coNP-complete 
(cf.\ Corollaries~\ref{cor:supp} and~\ref{cor74}, respectively).
 %(as are
%the corresponding direct-direct problems, studied in \cite{tw08a} and
%here). 
The situation is more diversified for suppmin-equivalence and
stable-equivalence (no normality restriction) with some problems being
coNP- and others $\PiP{2}$-complete. For suppmin-equivalence lower
complexity requires that $B$ be a part of problem specification, or
that $A$ be a part of problem specification and be set to $\emptyset$.
The results for direct-direct problems were known earlier \cite{tw08a}, 
the results for the direct-complement problems are by Corollary~\ref{cor:suppmindc},
for the complement-complement problems results are by Corollary~\ref{cor:suppmincc}, and
for the complement-direct problems results are by Corollary~\ref{cor:suppmincd}.
For stable-equivalence, the lower complexity only holds for the
complement-complement problem with both $A$ and $B$ fixed as part of
the problem specification.
The results for direct-direct (direct-complement, complement-complement, 
complement-direct, respectively) problems are by Corollary~\ref{cor:stbdd} 
(\ref{cor:stbdc}, \ref{cor:stbcc}, \ref{cor:stbcd}, respectively) in this
paper. 
We also note that the complexity of problems
for stable-equivalence is always at least that for suppmin-equivalence.

Our research opens questions worthy of further investigations. First,
we believe that results presented here may turn out important for building
``intelligent'' programming environments supporting development of logic 
programs. For instance, a programmer might want to know the effect of 
changes she just made to a program (perhaps already developed earlier) 
that represents a module of a larger project. One way to formalize that 
effect is to define it as the maximal class of contexts of the form 
$\caHB(A',B')$ with respect to which the original and the revised versions 
of the program are equivalent (say under the stable-model semantics). 
The sets $A'$ and $B'$ appearing in the specification of such a class of 
contexts will be of the form $A^c$ and $B^c$, for some finite sets $A$
and $B$. Finding the appropriate sets $A$ and $B$ would provide useful 
information to the programmer. Our results on the complexity of the 
complement-complement version of the hyperequivalence problem and their
proofs may yield insights into the complexity of finding such sets $A$
and $B$, and suggest algorithms.

Second, there are other versions of hyperequivalence that need to be
investigated. For instance, while
stable-equivalence when only parts of models are compared (projections
on a prespecified set of atoms) was studied
\cite{EiterTW05,Oetsch07}, no similar results are available for supp- and 
suppmin-equivalence. Also the complexity of the corresponding 
complement-direct, direct-complement and complement-com\-ple\-ment problems 
for the three semantics in that setting has yet to be established.

\section*{Acknowledgments}
%\vspace*{-0.1in}
This work was partially
supported by the NSF grant IIS-0325063, the KSEF grant KSEF-1036-RDE-008,
and by the Austrian Science Fund~(FWF) under grants P18019 and P20704.

\section*{Appendix}

We present here proofs of some technical results we needed in the paper.
We first prove Lemma \ref{lem:new}. We start with two auxiliary results.

\begin{lemma}
\label{in1}
Let $P$ be a program and $A,B\subseteq\at$. Let $y\in X$ be such
that $y\notin \At(P)\cup A$. Then $(X,Y)
\in\Mod_\ac^B(P)$ if and only if $(\Xy,\Yy)\in\Mod_\ac^B(P)$.
\end{lemma}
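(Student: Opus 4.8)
The plan is to unfold the five defining conditions for membership in $\Mod_\ac^B(P)$ and to verify that each one holds for the pair $(X,Y)$ if and only if it holds for the pair $(\Xy,\Yy)$. Throughout I work under the standing convention that such pairs satisfy $X\subseteq Y$ (this is forced by condition~(2) and recorded in Lemma~\ref{in0}), so that the hypothesis $y\in X$ gives $y\in Y$ as well. Two facts drive the whole argument. First, since $y\notin\At(P)$ the atom $y$ occurs in no rule of $P$; hence for every set $W$ we have $W\models P$ iff $W\setminus\{y\}\models P$, we have $T_P(W)=T_P(W\setminus\{y\})$, and $y\notin T_P(W)$ (as $y\notin\hd(P)$). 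Second, since $y\notin A$ we have $y\in\ac$, and therefore $y\in\ac\cup B$ while $y\notin A$. I would record these observations once and then reuse them for each condition.

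Conditions (1), (2) and (5) are the routine ones. For (1), $Y\in\Mod_\ac(P)$ iff $\Yy\in\Mod_\ac(P)$ follows by the same reasoning as in Lemma~\ref{newLemma1}: models and $T_P$ are unaffected by $y$, and since $y\in\ac$ and $y\notin T_P(Y)$, the set $Y\setminus T_P(Y)$ stays inside $\ac$ precisely when $\Yy\setminus T_P(\Yy)$ does. For (2), because $y\in\ac\cup B$ the element $y$ is removed symmetrically from both $X$ and $Y|_{\ac\cup B}$, so $X\subseteq Y|_{\ac\cup B}$ iff $\Xy\subseteq\Yy|_{\ac\cup B}$. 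For (5), I would first check that the premise transfers, i.e.\ $X|_B=Y|_B$ iff $\Xy|_B=\Yy|_B$ (splitting on whether $y\in B$; the common element $y$ cancels), and then that the conclusion transfers, using $y\notin T_P(Y)$ and $y\in X$ to see that $Y\setminus T_P(Y)\subseteq X$ iff $\Yy\setminus T_P(\Yy)\subseteq\Xy$.

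The crux, and the step I expect to require the most care, is the equivalence of conditions (3) and (4), since these quantify over all proper subsets $Z\subset Y$. The guiding observation is that every witness $Z$ to the \emph{failure} of (3) or (4) for $(X,Y)$ must contain $y$: for (3) because $y\in Y|_{\ac\cup B}=Z|_{\ac\cup B}$, and for (4) because $y\in X|_\ac\subseteq Z|_\ac$ (here I use $y\in X$ and $y\in\ac$). I would then set up the correspondence $Z\mapsto Z\setminus\{y\}$ and its inverse $Z'\mapsto Z'\cup\{y\}$ between the would-be witnesses for $(X,Y)$ and those for $(\Xy,\Yy)$, and check that it preserves everything in sight: properness of the inclusion in $Y$ versus $\Yy$ (the element of $Y$ missing from $Z$ can always be taken different from $y$), the equality $Z|_{\ac\cup B}=Y|_{\ac\cup B}$ versus $Z'|_{\ac\cup B}=\Yy|_{\ac\cup B}$ for (3), the conditions $Z|_B=X|_B$ and $Z|_\ac\supseteq X|_\ac$ versus their primed counterparts for (4) (again splitting on $y\in B$ for the $|_B$ part, while $y\in\ac$ handles the $|_\ac$ part), and finally $Z\models P$ iff $Z\setminus\{y\}\models P$.

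The main obstacle is thus purely the bookkeeping in this last step: making sure the bijection between witnesses respects strict inclusion and every restriction equality/inclusion simultaneously, with the only genuine case distinction being whether or not $y\in B$. Once the five equivalences are in hand, the biconditional $(X,Y)\in\Mod_\ac^B(P)$ iff $(\Xy,\Yy)\in\Mod_\ac^B(P)$ follows immediately by conjoining them.
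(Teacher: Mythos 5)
Your proposal is correct and takes essentially the same route as the paper's proof: both arguments verify the five defining conditions one by one, relying on the facts that $y\notin\At(P)$ leaves satisfaction of $P$ and the value of $T_P$ unchanged and that $y\in\ac$, and both handle the universally quantified conditions (3) and (4) via the correspondence $Z\leftrightarrow Z\cup\{y\}$ between potential witnesses (the paper writes the two implications out separately rather than as a single bijection, and, like you, it implicitly assumes $X\subseteq Y$ so that $y\in Y$).
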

\begin{proof}
($\Rightarrow$) Since $Y\in\Mod_\ac(P)$, $Y\models P$ and $Y
\setminus T_P(Y)\subseteq \ac$. We have $y\notin \At(P)$. Thus, $\Yy
\models P$ and $T_P(Y)=T_P(\Yy)$. Since $\Yy\subseteq Y$, $(\Yy)\setminus
T_P(\Yy)\subseteq\ac$. It follows that $\Yy\in\Mod_\ac(P)$. Thus, the
condition (1) for $(\Xy,\Yy)\in\Mod_\ac^B(P)$ holds. The condition (2) for 
$(\Xy,\Yy)\in\Mod_\ac^B(P)$ is evident.

Let $Z\subset\Yy$ be such that $Z|_{\ac\cup B}=(\Yy)|_{\ac\cup B}$. Let
$Z'=Z\cup\{y\}$. We have $y\in X$ and so, $y\in Y$. Hence, $Z'\subset Y$.
Since $y\notin A$, $y\in\ac$.
Thus, $Z'|_{\ac\cup B}=Y|_{\ac\cup B}$. It follows that $Z'\not\models
P$ and, consequently, $Z\not\models P$ (as $y\notin \at(P)$).
Thus, the condition (3) for $(\Xy,\Yy)\in\Mod_\ac^B(P)$ holds. 

Next, let $Z\subset\Yy$ be such that $Z|_{B}=(\Xy)|_{B}$ and 
$Z|_{\ac}\supseteq (\Xy)|_{\ac}$. As before, let $Z'=Z\cup\{y\}$. 
Since $y\in X$ and $y\in Y$ (see above), $Z'\subset Y$, $Z'|_{B}=X|_{B}$
and $Z'|_{\ac}
\supseteq X|_{\ac}$. Thus, $Z'\not\models P$. Since $y\notin\At(P)$,
$Z\not\models P$ and the condition (4) for $(\Xy,\Yy)\in\Mod_\ac^B(P)$
holds. 

Finally, let $(\Xy)|_B=(\Yy)|_B$. Clearly, it follows that $X|_B=
Y|_B$. Thus, $Y\setminus T_P(Y)\subseteq X$. Since $y\notin\at(P)$, 
$T_P(Y)= T_P(\Yy)$. It follows that $(\Yy)\setminus T_P(\Yy)\subseteq
\Xy$. Consequently, the condition (5) for $(\Xy,$ $\Yy)\in\Mod_\ac^B(P)$
is satisfied, as well.

\smallskip
\noindent
($\Leftarrow$) By the assumption, we have $(\Xy,\Yy)\in\Mod^B_\ac(P)$. Thus,
$\Yy\in\Mod_\ac(P)$ and, consequently, $\Yy$ is a model of $P$.
Since $y\notin\at(P)$, $Y$ is a model of $P$. We also have $(\Yy)
\setminus T_P(\Yy)\subseteq\ac$. Since $y\notin\at(P)$, $T_P(\Yy)
=T_P(Y)$. Thus, as $y\in\ac$, $Y\setminus T_P(Y)\subseteq\ac$. That
is, the condition (1) for $(X,Y)\in\Mod_\ac^B(P)$ holds. The 
condition (2) follows from $y\in\ac$ and $\Xy \subseteq 
(\Yy)|_{\ac\cup B}$. 

Let $Z\subset Y$ be such that $Z|_{\ac\cup B}=Y|_{\ac\cup B}$. It
follows that $y\in Z$ (we recall that $y\in X\subseteq Y$ and $y\in\ac$). Let 
$Z'=Z\setminus\{y\}$. We have $Z'\subset\Yy$ and $Z'|_{\ac\cup B}=
(\Yy)|_{\ac\cup B}$. Thus, $Z'\not\models P$ and, consequently, $Z\not
\models P$. It follows that the condition (3) for $(X,Y)\in
\Mod_\ac^B(P)$ holds.

Let $Z\subset Y$ be such that $Z|_{B}=X|_B$ and $Z|_{\ac}\supseteq 
X|_{\ac}$. Since $y\in X$ and $y\in\ac$, $y\in Z$. Let $Z'=Z\setminus
\{y\}$. It follows that $Z'\subset\Yy$, $Z'|_B=(\Xy)|_B$, and 
$Z'|_\ac\supseteq(\Xy)|_\ac$. Hence, $Z'\not\models P$ and so, $Z\not
\models P$. In other words, the condition (4) for $(X,Y)\in
\Mod_\ac^B(P)$, holds. 

Finally, let $X|_B=Y|_B$. Clearly, $(\Xy)|_B=(\Yy)|_B$ and so, 
$(\Yy)\setminus T_P(\Yy)\subseteq \Xy$. Since $T_P(\Yy)=T_P(Y)$,
we obtain $Y\setminus T_P(Y)\subseteq X$. Thus, (5) for $(X,Y)\in
\Mod_\ac^B(P)$, holds.  
\end{proof}

\begin{lemma}
\label{in3}
Let $P$ be a program, $A,B\subseteq\at$.
If $X|_{B}\subset Y|_{B}$, $y\in(Y\setminus X)\setminus(\At(P)\cup
A)$, and $(\Yy)|_{B}\not=X|_{B}$, then $(X,Y)\in\Mod_\ac^{B}
(P)$ if and only if $(X,\Yy)\in\Mod_\ac^{B}(P)$.
\end{lemma}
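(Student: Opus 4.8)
The plan is to verify the five defining conditions of $\Mod_\ac^B(P)$ one at a time, in the spirit of the proof of Lemma~\ref{in1}, exploiting three elementary consequences of the hypothesis $y\in(Y\setminus X)\setminus(\At(P)\cup A)$. First, since $y\notin\At(P)$, the atom $y$ is irrelevant to satisfaction of $P$ and to the operator $T_P$, so that $S\models P$ iff $S\cup\{y\}\models P$ for any $S$, and $T_P(Y)=T_P(\Yy)$ (in particular $y\notin T_P(Y)$). Second, $y\notin A$ gives $y\in\ac$, so $y$ is ``visible'' under the projection $|_{\ac\cup B}$. Third, $y\in Y\setminus X$ puts $y$ in $Y$ but outside $X$. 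Writing $\bar Y=\Yy$, the proof moves between a subset $Z\subseteq Y$ and its counterpart $Z\cup\{y\}$ or $Z\setminus\{y\}$.

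Conditions (1), (2), and (5) are the routine ones. For (1) I would use $Y\models P\iff\bar Y\models P$ and the fact that $Y\setminus T_P(Y)$ and $\bar Y\setminus T_P(\bar Y)$ differ only by the element $y\in\ac$, so $Y\in\Mod_\ac(P)\iff\bar Y\in\Mod_\ac(P)$. For (2), since $y\notin X$ and $y\in\ac\cup B$, the inclusions $X\subseteq Y|_{\ac\cup B}$ and $X\subseteq\bar Y|_{\ac\cup B}$ are equivalent. Condition (5) is \emph{vacuous on both sides}: its premise $X|_B=Y|_B$ fails because $X|_B\subset Y|_B$ is proper, and its premise $X|_B=\bar Y|_B$ fails because the hypothesis supplies $(\Yy)|_B\neq X|_B$.

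The content sits in (3) and (4), where the correspondence $Z\mapsto Z\cup\{y\}$ (resp.\ $Z\mapsto Z\setminus\{y\}$) does the work. For (3), given a witness $Z\subset\bar Y$ with $Z|_{\ac\cup B}=\bar Y|_{\ac\cup B}$, I pass to $Z'=Z\cup\{y\}$; since $y\in\ac$ this gives $Z'\subset Y$ and $Z'|_{\ac\cup B}=Y|_{\ac\cup B}$, so condition (3) for $(X,Y)$ yields $Z'\not\models P$, whence $Z\not\models P$ because $y\notin\At(P)$. The reverse direction is symmetric: any $Z\subset Y$ with $Z|_{\ac\cup B}=Y|_{\ac\cup B}$ must contain $y$ (as $y\in Y|_{\ac\cup B}$), so $Z\setminus\{y\}$ is the right witness in $\bar Y$.

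The main obstacle is the reverse direction of condition (4): assuming (4) for $(X,\bar Y)$, I must establish it for $(X,Y)$ at an arbitrary $Z\subset Y$ with $Z|_B=X|_B$ and $Z|_\ac\supseteq X|_\ac$. If $y\in Z$, I take $Z'=Z\setminus\{y\}$ and check that the constraints persist, using $y\notin X$: deleting $y$ removes nothing from $X|_B$ or $X|_A$, so $Z'|_B=X|_B$ and $Z'|_\ac\supseteq X|_\ac$; moreover $Z'\subsetneq\bar Y$ (else $Z=Y$), so (4) for $(X,\bar Y)$ gives $Z'\not\models P$ and hence $Z\not\models P$. If $y\notin Z$, then $Z\subseteq\bar Y$, and the delicate point is that I may lack \emph{proper} containment — but $Z=\bar Y$ is impossible, since it would force $Z|_B=\bar Y|_B=(\Yy)|_B$, contradicting $(\Yy)|_B\neq X|_B$ together with $Z|_B=X|_B$. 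This is exactly where that hypothesis is indispensable; with $Z\subsetneq\bar Y$ secured, (4) for $(X,\bar Y)$ applies. The forward direction of (4) is immediate, as every $Z\subset\bar Y$ is already a proper subset of $Y$.
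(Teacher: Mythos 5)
Your proof is correct and follows essentially the same condition-by-condition route as the paper: conditions (1)--(3) are handled exactly as in Lemma~\ref{in1} via the $Z\mapsto Z\cup\{y\}$ / $Z\setminus\{y\}$ correspondence, condition (5) is vacuous on both sides, and the hypothesis $(\Yy)|_B\neq X|_B$ is used precisely where the paper uses it, to rule out $Z=\Yy$ in the backward direction of (4). If anything, your explicit case split on whether $y\in Z$ in that backward direction is more careful than the paper's one-line inference ``$Z\neq\Yy$, thus $Z\subset\Yy$,'' which tacitly assumes $y\notin Z$; your reduction of the case $y\in Z$ to $Z\setminus\{y\}$ closes that small gap.
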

%Proof: 
\begin{proof}
($\Rightarrow$) The arguments for the conditions (1), (2) and 
(3) for $(X,\Yy)\in\Mod_\ac^B(P)$ are essentially the same as in 
Lemma \ref{in1} (although the argument for the condition (2) requires also
the assumption that $y\notin X$).

Next, let $Z\subset\Yy$ be such that $Z|_{B}=X|_{B}$ and $Z|_{\ac}
\supseteq X|_{\ac}$. Then $Z\subset Y$ and so, $Z\not\models P$. Thus,
the condition (4) for $(X,\Yy)\in\Mod_\ac^B(P)$ holds.

Finally, $(\Yy)|_B \not=X|_B$; the condition (5) for 
$(X,\Yy)\in \Mod_\ac^B(P)$ is thus trivially true.
%Finally, by our assumption, $X|_B\subset Y|_B$. Since $(\Yy)|_B
%\not=X|_B$, $X|_B\subset(\Yy)|_B$. Thus, the condition (5) for 
%$(X,\Yy)\in \Mod_\ac^B(P)$ is trivially true.

\smallskip
\noindent
($\Leftarrow$) As above, the arguments for the conditions (1), (2) and
(3) for $(X,Y)\in\Mod_\ac^B(P)$ are the same as in Lemma \ref{in1}.
%Since $\Yy$ is a model of $P$, so is $Y$ (as $y\notin\At(P)$).
%Moreover, since $y\notin \at(P)$, $T_P(\Yy)=T_P(Y)$. We have $(\Yy)
%\setminus T_P(\Yy)\subseteq\ac$ and $y\in\ac$. Thus, it follows that
%$Y\setminus T_P(Y)\subseteq\ac$. Thus, the condition (1) for $(X,Y)
%\in\Mod_\ac^B(P)$ holds. The condition (2) is evident.
%
%Let $Z\subset Y$ be such that $Z|_{\ac\cup B}=Y|_{\ac\cup B}$. It
%follows that $y\in Z$ (we recall that $y\in Y$ and $y\in\ac$). Let 
%$Z'=Z\setminus \{y\}$. We have $Z\setminus\{y\}\subset\Yy$ and 
%$Z'|_{\ac\cup B}= (\Yy)|_{\ac\cup B}$. Thus, $Z'\not\models P$ and,
%consequently, $Z\not\models P$. Thus, the condition (3) for $(X,Y)\in
%\Mod_\ac^B(P)$ is satisfied.

Let $Z\subset Y$ be such that $Z|_{B}=X|_{B}$ and $Z|_{\ac}\supseteq
X|_{\ac}$. Since $(\Yy)|_{B}\not=X|_{B}$, $Z\not=\Yy$. Thus, $Z
\subset\Yy$ and so, $Z\not\models P$. That is, the condition (4) for 
$(X,Y)\in\Mod_\ac^B(P)$, holds.
Finally, since $X|_B\subset Y|_B$, the condition (5) for 
$(X,Y)\in\Mod_\ac^B(P)$, holds, as well. %\hfill$\Box$
\end{proof}

\smallskip
We are now ready to prove Lemma \ref{lem:new}.

\medskip
\noindent
\textit{Lemma \ref{lem:new}}\\
\textit{
Let $P,Q$ be programs and $A,B\subseteq\at$.
\begin{enumerate}
\item 
If $(X,Y)\in\Mod_\ac^B(P)\setminus\Mod_\ac^B(Q)$ then
there is $(X',Y')\in\Mod_\ac^B(P)\setminus\Mod_\ac^B(Q)$ such that 
$Y'\subseteq\At(P\cup Q)\cup A$.
\item 
If $(X,Y)\in\Mod_\ac^B(P)$ and $T_P(Y)|_B\not=T_Q(Y)|_B$,
then there is $(X',Y')\in\Mod_\ac^B(P)$ such that $T_P(Y')|_B\not=
T_Q(Y')|_B$ and $Y'\subseteq\At(P\cup Q)\cup A$.
\end{enumerate}
}
\begin{proof}
(1) Let $(X,Y)\in\Mod_\ac^B(P)\setminus\Mod_\ac^B(Q)$ and 
let $y\in X$ be such that $y\notin \At(P\cup Q)\cup A$. Then, by
Lemma \ref{in1}, $(\Xy,\Yy)\in\Mod_\ac^B(P)\setminus\Mod_\ac^B(Q)$.
By repeating this process, we arrive at a pair $(X'',Y'')\in
\Mod_\ac^B(P)\setminus\Mod_\ac^B(Q)$ such that $X''\subseteq
\At(P\cup Q)\cup A$.

If $X''|_B=Y''|_B$, then $Y''\setminus T_P(Y'')\subseteq X''$. Thus,
$Y''\subseteq T_P(Y'')\cup X''\subseteq\At(P\cup Q)\cup A$.
Thus, let us consider the other possibility that $X''|_B\subset 
Y''|_B$ (indeed, as $X''\subseteq Y''|_{\ac\cup B}\subseteq Y''$, 
there are no other possibilities). Let $y\in(Y''\setminus X'')\setminus
(\At(P\cup Q)\cup A)$ be such that $(Y''\setminus\{y\})|_B\not=
X''|_B$. By Lemma \ref{in3}, $(X'',Y''\setminus\{y\})\in
\Mod_\ac^B(P)\setminus\Mod_\ac^B(Q)$. By repeating this process, we
arrive at a pair $(X',Y')\in\Mod_\ac^B(P)\setminus\Mod_\ac^B(Q)$ 
such that for every $y\in(Y'\setminus X')\setminus(\At(P\cup Q)
\cup A)$, $(Y'\setminus\{y\})|_B=X'|_B$. Since $X'=X''$, $X'\subseteq
\At(P\cup Q)\cup A$.

We also note that for every $y\notin X'$, $(Y'\setminus\{y\})\supseteq 
X'$ (as $Y'\supseteq X'$) and so, $(Y'\setminus\{y\})
|_\ac\supseteq X'|_\ac$.  
We will now show that $Y'\subseteq\At(P\cup Q)\cup A$. To this end,
let us assume that there is $y\in Y'$ such that $y\notin\At(P\cup Q)\cup
A$. Since $X'\subseteq\At(P\cup Q)\cup A$, $y\notin X'$.
Thus, $y\in (Y'\setminus X')\setminus(\At(P\cup Q)\cup A)$. It 
follows that $(Y'\setminus\{y\})|_B=X'|_B$ and $(Y'\setminus\{y
\})|_\ac\supseteq X'|_\ac$. Since $Y'\setminus\{y\}\subset Y'$ and
$(X',Y')\in\Mod_\ac^B(P)$, $Y'\setminus\{y\}\not\models P$. On the
other hand, $Y'\models P$ and, since $y\notin\At(P)$, $Y'\setminus\{y\}
\models P$, a contradiction. 

\smallskip
\noindent
(2) It is easy to see that if we apply the construction described in (1)
to $(X,Y)$ we obtain $(X',Y')$ such that $Y'\subseteq\at(P\cup Q)\cup 
A$ and $T_P(Y')|_B\not=T_Q(Y')|_B$. Indeed, in every step of the 
construction, we eliminate an element $y$ such that $y\notin\At(P\cup 
Q)$, which has no effect on the values of $T_P$ and $T_Q$.
\end{proof}

\medskip
\noindent
\textit{Lemma \ref{cor:usuppmin:2}}\\
\textit{
Let $P,Q$ be programs and $B\subseteq \at$.
If $\Mod_\at^B(P)\not=\Mod_\at^B(Q)$, then there is
$Y\subseteq\at(P\cup Q)$
such that $Y$ is a model of exactly one of $P$ and $Q$, or there is
$a\in Y$ such that $(Y\setminus\{a\},Y)$
belongs to exactly one of $\Mod_\at^B(P)$ and $\Mod_\at^B(Q)$.
}

\smallskip
\noindent
\begin{proof} Let us assume that $P$ and $Q$ have the same models
(otherwise, there is $Y\subseteq\At(P\cup Q)$ that is a model of 
exactly one of $P$ and $Q$, and the assertion follows). 
Without loss of generality we can assume that there is 
$(X,Y)\in\Mod_\at^B(P)\setminus\Mod_\at^B(Q)$.
Moreover, by Lemma \ref{lem:new}, we can assume that 
$Y\subseteq\at(P\cup Q)$ (recall $\at^c=\emptyset$).
It follows that $(X,Y)$ satisfies the conditions (1)-(5) for $(X,Y)
\in\Mod_\at^B(P)$. Since $P$ and $Q$ have the same models, $(X,Y)$
satisfies the conditions (1)-(4) for $(X,Y)\in\Mod_\at^B(Q)$. Hence,
$(X,Y)$ violates the condition (5) for $(X,Y)\in\Mod_\at^B(Q)$, that
is, $X|_B=Y|_B$ and $Y\setminus T_Q(Y)\not\subseteq X$ hold.
In particular, there is $a\in (Y\setminus T_Q(Y))\setminus X$. We will 
show that $(Y\setminus\{a\},Y)\in\Mod_\at^B(P)$ and $(Y\setminus\{a\},Y)
\notin\Mod_\at^B(Q)$.

Since $(X,Y)\in \Mod_\at^B(P)$, $Y$ is a model of $P$ and so, 
$Y\in\Mod_\at(P)$. Next, obviously, $Y\setminus \{a\}\subseteq Y$.
Thus, the conditions (1) and (2) for $(Y\setminus\{a\},Y)\in\Mod_\at^B(P)$
hold. The condition (3) is trivially true.

Further, let $Z\subset Y$ be such that $Z|_B=(Y\setminus\{a\})|_B$ and
$Z\supseteq Y\setminus \{a\}$. 
Then $Z=Y\setminus \{a\}$. We have $Y|_B=X|_B$, $a\in Y$, and $a\notin
X$. Thus, $a\notin B$. It follows that $(Y\setminus\{a\})|_B=X|_B$ 
and $X\subseteq Y\setminus \{a\}$. Since $Y\setminus\{a\}\subset Y$ and
$(X,Y)\in\Mod_\at^B(P)$, $Y\setminus\{a\}\not\models P$,
that is, $Z\not\models P$. Thus, the condition (4) for
$(Y\setminus\{a\},Y)\in\Mod_\at^B(P)$ holds.

Since $a\notin B$, $(Y\setminus\{a\})|_B=Y|_B$. Thus, we also have to 
verify the condition (5). We have $Y\setminus T_P(Y)\subseteq X$ (we
recall that $Y|_B=X|_B$) and so, $a\notin Y\setminus T_P(Y)$. Consequently,
$Y\setminus T_P(Y)\subseteq Y\setminus\{a\}$. Hence, the condition (5)
holds and $(Y\setminus\{a\},Y)\in\Mod_\at^B(P)$.
On the other hand, $a\in Y\setminus T_Q(Y)$ and $a\notin Y\setminus
\{a\}$. Thus, the condition (5) for $(Y\setminus\{a\},Y)\in\Mod_\at^B(Q)$
does not hold and so, $(Y\setminus\{a\},Y)\notin\Mod_\at^B(Q)$.
\end{proof}

\smallskip
Next, we present proofs of the technical results needed in Section
\ref{stabeq}: Lemmas \ref{stin1e} and \ref{lemma:ue}. First, we establish
some auxiliary results. We start with conditions providing conditions 
restricting $X$ and $Y$ given that $(X,Y)\in \SE_{A}^{B}(P)$.

\begin{lemma}
\label{stin0}
Let $P$ be a program and $A,B\subseteq\at$.
%The following implications hold:
%\begin{enumerate}
%\item
If $(X,Y)\in \SE_A^B(P)$
then $X\subseteq Y\subseteq\at(P)\cup A$.
%\item If $(X,Y)\in
%\SE_A^\bc(P)$ then $X\subseteq Y\subseteq\at(P)\cup A$.
%\end{enumerate}
\end{lemma}
\begin{proof}
Let $(X,Y)\in \SE_A^B(P)$. The inclusion $X\subseteq
Y$ follows from the condition (2). To prove $Y\subseteq\at(P)\cup A$,
let us assume to the contrary that $Y\setminus(\at(P)\cup A)\not=\emptyset$.
Let $y\in Y\setminus (\at(P)\cup A)$.
We have $Y\models P$ and thus $Y\models P^Y$.  Since
$y\notin \at(P)$, $y\notin \at(P^Y)$. Thus, $Y\setminus \{y\} \models  P^Y$.
Since $y\notin A$, taking $Z=\Yy$ shows that $(X,Y)$ violates the
condition (3) for $(X,Y)\in \SE_A^B(P)$, a contradiction.
\end{proof}

\smallskip
The next two lemmas show that some atoms are immaterial for the
membership of a pair $(X,Y)$ in $\SE_A^B(P)$.

\begin{lemma}
\label{stin1a}
Let $P$ be a program, $A,B,X,Y\subseteq\at$, $y\in (X\cap Y)\setminus\at(P)$,
and $y\in A$. Then $(X,Y)\in\SE_A^B(P)$  if and only if $(\Xy,\Yy) \in
\SE_A^B(P)$.
\end{lemma}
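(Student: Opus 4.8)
The plan is to show that each of the five defining conditions for membership in $\SE_A^B(P)$ holds for $(X,Y)$ if and only if the corresponding condition holds for $(\Xy,\Yy)$; since $\SE_A^B(P)$ is the conjunction of conditions (1)--(5), the equivalence then follows. Two elementary observations drive everything. First, since $y\notin\at(P)$, the atom $y$ occurs in no rule of $P$, so $P^Y=P^{\Yy}$ and $y\notin\at(P^Y)$; hence for every set $Z$ we have $Z\models P^Y$ if and only if $Z\cup\{y\}\models P^Y$ if and only if $\Zy\models P^Y$, so adding or deleting $y$ is invisible to the reduct. Second, since $y\in A$ (hence $y\in A\cup B$) and $y\in X\cap Y$, the atom $y$ lies in $X|_A$, $Y|_A$, $X|_{A\cup B}$ and $Y|_{A\cup B}$, while it is absent from the corresponding projections of $\Xy$ and $\Yy$.

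First I would dispatch conditions (1) and (2). Condition (1) is immediate from the first observation, as $Y\models P$ if and only if $\Yy\models P$. For (2), note that $X=Y$ if and only if $\Xy=\Yy$ (both $X$ and $Y$ contain $y$), and the relations $X\subseteq Y|_{A\cup B}$ and $X|_A\subset Y|_A$ transfer in both directions because $y\in A$ is added to, or removed from, both sides simultaneously, which also keeps proper inclusions proper.

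The core of the argument is the correspondence between the candidate sets $Z$ tested in conditions (3)--(5). The key device is the map $Z\mapsto Z\cup\{y\}$, with inverse $Z\mapsto\Zy$, which restricts to a bijection between the relevant $Z\subseteq\Yy$ and the relevant $Z\subseteq Y$. This map preserves proper inclusion: if $Z\subset\Yy$ then $Z\cup\{y\}\subset Y$, and if $Z\subset Y$ with $y\in Z$ then $\Zy\subset\Yy$, since a witnessing element of the set difference is never $y$. For condition (3), any $Z$ with $Z|_A=Y|_A$ must contain $y$, and $Z|_A=Y|_A$ if and only if $\Zy|_A=\Yy|_A$; combined with the reduct observation, the statement ``$Z\not\models P^Y$ for all such $Z$'' transfers both ways. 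Condition (5) is handled identically: a witness $Z$ for $(X,Y)$ satisfies $y\in X|_{A\cup B}=Z|_{A\cup B}$, hence $y\in Z$, so removing $y$ yields a witness for $(\Xy,\Yy)$, and adding $y$ back reverses the step.

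The main obstacle is condition (4), since its premise is a disjunction mixing proper and non-proper inclusions on the $A$- and $B$-projections, and the hypothesis says nothing about whether $y\in B$. The saving observation is that any $Z$ satisfying the premise for $X$ has $X|_A\subseteq Z|_A$ with $y\in X|_A$, so $y\in Z$ and the removal map applies; symmetrically, the addition map sends a $Z$ satisfying the premise for $\Xy$ to one satisfying it for $X$. I would then check, splitting on the two disjuncts and, within each, on whether $y\in B$, that $Z$ satisfies the premise for $X$ if and only if $\Zy$ satisfies it for $\Xy$. Each check is routine: since $y\in A$ it is added to or removed from both $A$-projections, and either $y\in Z\cap X$ belongs to both $B$-projections (when $y\in B$) or $y$ is absent from both (when $y\notin B$), so the $B$-side inclusions, proper or not, are preserved in every case. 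The reduct observation then completes this case exactly as before, finishing the equivalence.
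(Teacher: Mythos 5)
Your proposal is correct and follows essentially the same route as the paper's own proof: a condition-by-condition equivalence for (1)--(5), driven by the observations that $y\notin\at(P)$ makes $y$ invisible to $P$ and its reducts, and that $y\in X\cap Y\cap A$ forces $y$ into every relevant candidate set $Z$, so that $Z\mapsto Z\cup\{y\}$ and $Z\mapsto\Zy$ give the needed correspondence. The only difference is cosmetic: you make explicit the case split on $y\in B$ for condition (4), which the paper dismisses as evident.
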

\begin{proof}
%By Lemma \ref{stin0}, $y\in Y$ and $y\in A$.
We will show that each of 
the conditions (1) - (5) for $(X,Y)\in\SE_A^B(P)$ is equivalent to its
counterpart for $(\Xy,\Yy) \in\SE_A^B(P)$.

The case of the condition (1) is clear. Since $y\notin\at(P)$, 
$Y\models P$ if and only if $\Yy\models P$. It is also evident that
$X=Y$ if and only if $\Xy=\Yy$, $X\subseteq Y|_{A\cup B}$ if and only
if $\Xy\subseteq (\Yy)|_{A\cup B}$, and $X|_A\subset Y|_A$ if and only
if $(\Xy)|_A\subset (\Yy)|_A$. Thus, the corresponding conditions (2) are also
equivalent.

Let us assume the condition (3) for $(X,Y)\in\SE_A^B(P)$. Let $Z\subset
\Yy$ be such that $Z|_A=(\Yy)|_A$. Let $Z'=Z\cup \{y\}$. Then $Z'\subset Y$
and $Z'|_A=Y|_A$ (as $y\in Y$). By the condition (3) for $(X,Y)\in\SE_A^B(P)$,
$Z'\not\models P^Y$. Since $y\notin\at(P)$, $Z\not\models P^\Yy$, and so,
the condition (3) for $(\Xy,\Yy) \in\SE_A^B(P)$ follows. Conversely,
let us assume the condition (3) for $(\Xy,\Yy)\in\SE_A^B(P)$ and let
$Z\subset Y$ be such that $Z|_A=Y|_A$. It follows that $y\in Z$. We set
$Z'=Z\setminus\{y\}$. Clearly, $Z'\subset \Yy$ and $Z'|_A=(\Yy)|_A$. Thus,
$Z'\not\models P^\Yy$. As $y\notin\at(P)$, $Z\not\models P^Y$ and,
so, the condition (3) for $(X,Y)\in\SE_A^B(P)$ follows.

Next, let us assume the condition (4) for $(X,Y)\in\SE_A^B(P)$. Let
$Z\subset \Yy$ be such that $Z|_B\subset (\Xy)|_B$ and $Z|_A\supseteq (\Xy)|_A$, 
or $Z|_B\subseteq (\Xy)|_B$ and $Z|_A\supset (\Xy)|_A$. Let $Z'=Z\cup\{y\}$.
We have $Z'\subset Y$. Moreover,
it is evident that $Z'|_B\subset X|_B$ and $Z'|_A\supseteq X|_A$,
or $Z'|_B\subseteq X|_B$ and $Z'|_A\supset X|_A$. Thus, $Z'\not\models P^Y$
and so, $Z\not\models P^\Yy$. Similarly, let the condition (4) for 
$(\Xy,\Yy)\in\SE_A^B(P)$ hold. Let $Z\subset Y$ be such that
$Z|_B\subset X|_B$ and $Z|_A\supseteq X|_A$, or $Z|_B\subseteq X|_B$ and 
$Z|_A\supset X|_A$. Since $y\in X$ and $y\in A$, $y\in Z$. We define 
$Z'=Z\setminus \{y\}$ and note that $Z'\subset\Yy$. Moreover, as $y\in X$ and
$y\in Y$, $Z'|_B\subset (\Xy)|_B$ and $Z'|_A\supseteq (\Xy)|_A$, or $Z'|_B
\subseteq (\Xy)|_B$ and $Z'|_A\supset (\Xy)|_A$. Thus, $Z'\not\models P^\Yy$
and so, $Z\not\models P^Y$.
 
Finally, a similar argument works also for the condition (5). Let
the condition (5) for $(X,Y)\in\SE_A^B(P)$ hold. Thus, there is $Z
\subseteq Y$ such that $X|_{A\cup B}=Z|_{A\cup B}$ and $Z\models P^Y$.
Let $Z'=Z\setminus\{y\}$. Since $y\in X$ and $y\in A$, $y\in Z$. Thus,
$Z'\subseteq \Yy$ and $(\Xy)|_{A\cup B}=Z'|_{A\cup B}$. Moreover, since
$Z\models P^Y$, $Z'\models P^\Yy$. Conversely, let the condition (5) for 
$(\Xy,\Yy)\in\SE_A^B(P)$ hold. Then, there is $Z\subseteq\Yy$ such that
$Z|_{A\cup B}=(\Xy)|_{A\cup B}$ and $Z\models P^\Yy$. Let $Z'=Z\cup\{y\}$.
Then $Z'\subseteq Y$, $Z'|_{A\cup B}=X|_{A\cup B}$ and $Z'\models P^Y$. 
\end{proof}

\begin{lemma}
\label{stin1b}
Let $P$ be a program, $A,B,X,Y\subseteq\at$ and
$y\in (Y\setminus (X\cup \at(P)))\cap A$. 
If $|(Y\setminus (X\cup \at(P)))\cap A|>2$, then
%If there are two different elements $y,y'\in Y\setminus\at(P)$ such that
%$y,y'\in B$ or $y,y'\notin B$, 
$(X,Y)\in \SE_A^B(P)$ if and only 
if $(X,\Yy) \in\SE_A^B(P)$.
\end{lemma}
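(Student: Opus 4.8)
The plan is to mirror the proof of Lemma~\ref{stin1a}, establishing that each of the five defining conditions for $(X,Y)\in\SE_A^B(P)$ is equivalent to its counterpart for $(X,\Yy)\in\SE_A^B(P)$. Throughout I would exploit the two structural facts about $y$: since $y\notin\at(P)$ we have $P^Y=P^\Yy$, and moreover for any set $Z$ the relation $Z\models P^Y$ depends only on $Z\cap\at(P)$, so toggling $y$ (or any atom outside $\at(P)$) in or out of $Z$ never affects whether $Z$ models the reduct; and since $y\in A$, the atom $y$ lies in every $A$- and $(A\cup B)$-projection, so deleting it from $Y$ removes exactly one element from $Y|_A$ and from $Y|_{A\cup B}$. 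I would also record at the outset the role of the cardinality hypothesis: writing $S=(Y\setminus(X\cup\at(P)))\cap A$, the assumption $|S|>2$ means $S\setminus\{y\}$ still has at least two elements, all lying in $(A\setminus\at(P))\cap(Y\setminus X)$, hence in $(\Yy)|_A\setminus X|_A$.

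Conditions (1) and (5) are immediate: (1) because $y\notin\at(P)$, and (5) because any witness $Z$ with $Z|_{A\cup B}=X|_{A\cup B}$ cannot contain $y$ (as $y\in A\subseteq A\cup B$ while $y\notin X$), so the same $Z\subseteq\Yy$ serves as a witness in both directions. Condition (2) is where the cardinality first enters: since $|S\setminus\{y\}|\geq 2$ we have $X\neq\Yy$, so condition~(2) for $(X,\Yy)$ reduces to the conjunction $X\subseteq(\Yy)|_{A\cup B}$ and $X|_A\subset(\Yy)|_A$; I would check that deleting $y$ preserves both inclusions, the strictness of the second surviving because $(\Yy)|_A\setminus X|_A\supseteq S\setminus\{y\}\neq\emptyset$, and conversely that the projections for $\Yy$ imply those for $Y$. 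Condition (3) I would handle by the bijection $Z\mapsto Z\cup\{y\}$ (respectively $Z\mapsto Z\setminus\{y\}$) between the relevant subsets of $\Yy$ and of $Y$, exactly as in Lemma~\ref{stin1a}: here $y\in A$ guarantees this correspondence respects the constraint $Z|_A=Y|_A$, and $y\notin\at(P)$ guarantees it respects $\models P^Y$.

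The main obstacle is condition~(4), and this is where the full strength of $|S|>2$ is needed. The forward implication is routine: a set $Z\subsetneq\Yy$ violating~(4) for $(X,\Yy)$ is automatically a set $Z\subsetneq Y$ violating~(4) for $(X,Y)$, since the premise refers only to $X$ and $Z\subsetneq Y$ holds a fortiori. The difficulty is the converse: given $Z\subsetneq Y$ satisfying the premise of~(4), I must produce a proper subset $\tilde Z$ of $\Yy$ that still satisfies the premise and has $\tilde Z\cap\at(P)=Z\cap\at(P)$, so that $Z\models P^Y$ holds if and only if $\tilde Z\models P^\Yy$. Simply deleting $y$ from $Z$ can collapse the strict inclusion in the $A$-projection, and the degenerate case $Z=\Yy$ can arise; so instead I would substitute $y$ by a spare element $w\in S\setminus\{y\}$, taking $\tilde Z=(Z\setminus\{y\})\cup\{w\}$ (or $\tilde Z=\Yy\setminus\{w\}$ in the degenerate case). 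Because $w\in A\setminus\at(P)$ and $w\in Y\setminus X$, this leaves $\tilde Z\cap\at(P)$ unchanged and restores $X|_A\subsetneq\tilde Z|_A$, while having at least two candidates $w$ (guaranteed precisely by $|S|>2$) lets me simultaneously avoid the element witnessing properness of $\tilde Z\subsetneq\Yy$. Assembling the five equivalences then yields the claim; I expect the bookkeeping in case~(4)---tracking which of the two premise disjuncts survives the substitution and verifying that the $B$-projection constraint $\tilde Z|_B\subseteq X|_B$ is preserved (distinguishing $y\in B$ from $y\notin B$)---to be the only genuinely delicate part.
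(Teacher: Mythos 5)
Your overall plan coincides with the paper's proof: both establish, condition by condition, that each of the five requirements for $(X,Y)\in\SE_A^B(P)$ is equivalent to its counterpart for $(X,\Yy)$, and your treatment of conditions (1), (2), (3), (5) and of the easy half of (4) matches the paper's. The divergence is in the converse direction of condition (4). The paper splits on $Z\subset\Yy$ versus $Z=\Yy$ (handling the latter by deleting a spare $y'$ and using the second spare $y''$ to retain strictness of $Z'|_A\supset X|_A$), thereby tacitly assuming $y\notin Z$. You correctly notice that a witness $Z\subset Y$ of the premise of (4) may contain $y$ (this can happen exactly when $y\notin B$), and you propose the repair $\tilde Z=(Z\setminus\{y\})\cup\{w\}$ for a spare $w$ in $S=(Y\setminus(X\cup\at(P)))\cap A$. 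This is where your argument breaks: since $w\notin X$, if $w\in B$ then $w\in\tilde Z|_B\setminus X|_B$, so $\tilde Z|_B\not\subseteq X|_B$ and $\tilde Z$ no longer satisfies the premise of (4). Nothing in the hypotheses prevents every element of $S\setminus\{y\}$ from lying in $B$ while $y$ itself does not, so an admissible $w$ need not exist. The ``delicate bookkeeping'' you defer is not about whether $y\in B$ but about whether the replacement atom $w$ is in $B$, and it cannot be completed.

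Indeed, in that configuration the biconditional itself fails, so no choice of $\tilde Z$ can work. Take $P=\emptyset$, $X=\emptyset$, $Y=A=\{y,y_1,y_2\}$ and $B=\{y_1,y_2\}$; then $(Y\setminus(X\cup\at(P)))\cap A=Y$ has three elements, so all hypotheses of the lemma hold. Every set models $P^Y=P^{\Yy}=\emptyset$; conditions (1)--(3) and (5) hold for both pairs, and condition (4) holds vacuously for $(X,\Yy)=(\emptyset,\{y_1,y_2\})$ because any nonempty $Z\subseteq\{y_1,y_2\}$ violates $Z|_B\subseteq X|_B=\emptyset$; hence $(X,\Yy)\in\SE_A^B(P)$. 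But $Z=\{y\}$ satisfies $Z|_B=\emptyset\subseteq X|_B$ and $Z|_A=\{y\}\supset X|_A$ and models $P^Y$, so condition (4) fails for $(X,Y)$ and $(X,Y)\notin\SE_A^B(P)$. So the case you isolated is a genuine counterexample to the statement as printed --- the paper's own argument passes over it by asserting that $Z\subset Y$ with $Z\not\subset\Yy$ forces $Z=\Yy$, which presupposes $y\notin Z$. Your instinct that this was the dangerous step was sound, but the substitution you propose cannot close it, and no argument can without strengthening the hypotheses (for instance, requiring the spare elements to lie outside $B$, or restricting to the configurations actually arising in the proof of Lemma~\ref{stin1e}).
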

\begin{proof}
Since $|(Y\setminus(X\cup \at(P)))\cap A|>2$, there are $y',y''\in 
(Y\setminus (X\cup \at(P))) \cap A$ such
that $y,y',y''$ are all distinct. 
%By Lemma \ref{stin0}, $y,y',y''\in A$.
As before, we will show that each of
the conditions (1) - (5) for $(X,Y)\in\SE_A^B(P)$ is equivalent to its
counterpart for $(X,\Yy) \in\SE_A^B(P)$.

The case of the condition (1) is evident.
By our assumptions, neither $X=Y$ nor $X=\Yy$. Moreover, 
$X\subseteq Y|_{A\cup B}$ if and only if $X\subseteq (\Yy)|_{A\cup
B}$ and $X|_A\subset Y|_A$ if and only if $X|_A\subset (\Yy)|_A$
(since $y,y'\in Y$ and $y,y'\in A$). Thus, the 
corresponding versions of the condition (2) are also equivalent.
The case of the condition (3) can be argued in the same way as it was 
in Lemma \ref{stin1a}.

Let us assume the condition (4) for $(X,Y)\in\SE_A^B(P)$. Let
$Z\subset \Yy$ be such that 
$Z|_B\subset X|_B$ and $Z|_A\supseteq X|_A$, or 
$Z|_B\subseteq X|_B$ and $Z|_A\supset X|_A$. Clearly, $Z\subset Y$. 
Consequently, by the condition (4) for $(X,Y)\in\SE_A^B(P)$,
$Z\not\models P^Y$ and so, $Z\not\models P^\Yy$. Thus
the condition (4) for $(X,\Yy)\in\SE_A^B(P)$ holds. 

Conversely, let the condition (4) for $(X,\Yy)\in\SE_A^B(P)$ hold.
Let $Z\subset Y$ be such that 
$Z|_B\subset X|_B$ and $Z|_A\supseteq X|_A$, or
$Z|_B\subseteq X|_B$ and $Z|_A\supset X|_A$. 
If $Z\subset \Yy$, then $Z\not\models P^\Yy$ (as the condition (4) for 
$(X,\Yy)\in\SE_A^B(P)$ holds). Thus, $Z\not\models P^Y$.
Otherwise, i.e.\ for $Z=Y\setminus\{y\}$,
we have $y',y''\in Z$. Let $Z'=Z\setminus\{y,y'\}$. It 
follows that $Z'\subset \Yy$ and $Z'|_A\supset X|_A$ (the former, as
$y'\in \Yy\setminus Z'$; the later, as $y''\in Z'|_A\setminus X|_A$).
Thus, $Z'\subset \Yy$, $Z'|_B\subseteq X|_B$ and $Z'|_A\supset 
X|_A$. Consequently, $Z'\not\models P^\Yy$ (again, as the condition 
(4) for $(X,\Yy)\in\SE_A^B(P)$ holds). Thus, also in that case, 
$Z\not\models P^Y$. It follows that the condition (4) for $(X,Y)\in
\SE_A^B(P)$ holds.

Finally, for the condition (5) we reason as follows. Let
the condition (5) for $(X,Y)\in\SE_A^B(P)$ hold. Thus, there is $Z
\subseteq Y$ such that $X|_{A\cup B}=Z|_{A\cup B}$ and $Z\models P^Y$.
Clearly, $y\notin Z$ (as $y\notin X$ and $y\in A$). Thus, $Z\subseteq
\Yy$ and so $Z\models P^Y$ follows.
Conversely, let the condition (5) for $(X,\Yy)\in\SE_A^B(P)$ hold.
Then, there is $Z\subseteq \Yy$ such that $Z|_{A\cup B}=X|_{A\cup
B}$ and $Z\models P^Y$. Clearly, we also have $Z\subseteq Y$ and so,
the condition (5) for $(X,Y)\in\SE_A^B(P)$ follows.
\end{proof}

\smallskip
Finally, we note that the membership of a pair $(X,Y)$, where 
$X\subseteq\at(P)$, in $\SE_\ac^B(P)$ does not depend on specific
elements in $Y\setminus \at(P)$ but only on their number.

\begin{lemma}
\label{stin1c}
Let $P$ be a program, $A,B\subseteq\at$, $X,Y\subseteq\at(P)$, and
$Y',Y''\subseteq A\setminus\at(P)$. If $|Y'|=|Y''|$ then
$(X,Y\cup Y')\in \SE_A^B(P)$ if and only
if $(X,Y\cup Y'') \in\SE_A^B(P)$.
\end{lemma}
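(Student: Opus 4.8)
The plan is to prove the equivalence by transporting the witnessing and counterexample sets $Z$ that appear in the defining conditions (1)--(5) of $\SE_A^B(P)$ along a correspondence between $Y\cup Y'$ and $Y\cup Y''$, exploiting that every atom of $Y'\cup Y''$ lies in $A$, lies outside $\at(P)$, and (since $X\subseteq\at(P)$) lies outside $X$. The first observation I would record is that atoms outside $\at(P)$ are invisible to $P$ and to its reduct: since $Y',Y''\subseteq\at\setminus\at(P)$ meet no negative body, $P^{Y\cup Y'}=P^{Y\cup Y''}=P^Y$, and a set's being a model of $P$ (resp.\ of $P^Y$) is unaffected by adding or deleting such atoms. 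This immediately settles condition (1), and condition (2) follows because, whenever $Y'$ is nonempty, both $Y\cup Y'$ and $Y\cup Y''$ contribute atoms of $A\setminus X$, so $X|_A\subset(Y\cup Y')|_A$ holds iff $X|_A\subset(Y\cup Y'')|_A$ holds, while $X\subseteq\at(P)$ is untouched (the case $Y'=\emptyset$ is trivial, as then $Y''=\emptyset$).

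Second, I would reduce the general statement to a single-atom exchange. Writing $Y'=C\cup\{y'_1,\dots,y'_k\}$ and $Y''=C\cup\{y''_1,\dots,y''_k\}$ with $C=Y'\cap Y''$ and all listed atoms distinct, I would replace $y'_1$ by $y''_1$, then $y'_2$ by $y''_2$, and so on; each step changes the second component only by swapping one atom of $A\setminus\at(P)$ for another, and chaining the resulting equivalences yields the claim. Thus it suffices to treat the case $Y''=(Y'\setminus\{y'\})\cup\{y''\}$ with $y',y''\in A\setminus\at(P)$ and $y''\notin Y'$.

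For the atomic step I would check that each of conditions (3), (4), (5) holds for $(X,Y\cup Y')$ if and only if it holds for $(X,Y\cup Y'')$, using the map $\sigma$ sending $Z$ to $(Z\setminus\{y'\})\cup\{y''\}$ when $y'\in Z$ and fixing $Z$ otherwise. Because $y',y''\notin\at(P)$, $\sigma$ preserves the relation $Z\models P^Y$ and the $A$-restriction of $Z$ up to the bookkeeping of the swapped atom, so it carries the premise of condition (3) to itself; condition (5) is even easier, since any witness $Z$ with $Z|_{A\cup B}=X|_{A\cup B}$ must avoid $y'$ and $y''$ (they belong to $A\setminus X$) and hence is common to both pairs.

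The main obstacle is condition (4): its premise constrains $Z|_B$, so when the exchanged atoms differ in their membership in $B$ the naive map $\sigma$ need not respect the premise, exactly as in the converse direction of Lemma~\ref{stin1b}. I would resolve this by mirroring the spare-atom argument of that lemma: for a set $Z$ realizing the premise of (4) on the side where $\sigma$ fails and containing the swapped atom, I would modify $Z$ on the extra atoms of $Y'\cup Y''$ (which exist and match up because $|Y'|=|Y''|$) to obtain a set $Z'$ that still realizes the premise but lies strictly below the other top set, apply condition (4) there, and transfer the conclusion back using that all the manipulated atoms lie outside $\at(P)$. Verifying that these modifications keep the $B$- and $A$-restrictions within the bounds required by the premise of (4) is the delicate point, and is precisely where the hypotheses $Y',Y''\subseteq A\setminus\at(P)$ and $X,Y\subseteq\at(P)$ are used in full.
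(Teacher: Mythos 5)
Your handling of conditions (1), (2), (3) and (5) is correct, and your instinct that condition (4) is the one real difficulty is exactly right: the paper's own proof is the single sentence that the five conditions are ``clearly'' equivalent for the two pairs, so it never confronts the point you isolate. Unfortunately, the spare-atom repair you sketch for (4) cannot be carried out. In the atomic exchange with $|Y'|=|Y''|=1$ there are no spare atoms at all, and the deeper problem is that the equivalence of the two instances of condition (4) genuinely fails when the exchanged atoms differ in their membership in $B$: an atom of $Y'$ that lies in $B$ can never occur in a set $Z$ satisfying the premise of (4), since it would put an element of $B\setminus X$ into $Z|_B$ and destroy $Z|_B\subseteq X|_B$, whereas an atom of $Y''\setminus B$ can occur in such a $Z$, and adding it promotes $Z|_A\supseteq X|_A$ to $Z|_A\supset X|_A$. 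Hence the $Y''$-side of (4) imposes strictly more constraints than the $Y'$-side, and no transport map can compensate.

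Concretely, take $P=\{c\lar a;\; d\lar b\}$, $A=\{a,b,y',y''\}$, $B=\{c,d,y'\}$, $X=\{a,c\}$, $Y=\{a,b,c,d\}$, $Y'=\{y'\}$, $Y''=\{y''\}$; all hypotheses of the lemma hold. Since $P$ is positive, $P^{Y\cup Y'}=P^{Y\cup Y''}=P$. For $(X,Y\cup Y')$ the only sets meeting the premise of (4) are $\{a\}$, $\{a,b\}$ and $\{a,b,c\}$, none of which is a model of $P$, and conditions (1), (2), (3), (5) are easily checked, so $(X,Y\cup Y')\in\SE_A^B(P)$. But $Z=\{a,c,y''\}\subset Y\cup Y''$ satisfies $Z|_B=\{c\}\subseteq X|_B$, $Z|_A=\{a,y''\}\supset\{a\}=X|_A$ and $Z\models P$, so condition (4) fails and $(X,Y\cup Y'')\notin\SE_A^B(P)$. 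The statement as given is therefore false, and the gap in your argument is not closable. What is true --- and what your $B$-respecting transport does establish --- is the variant with the strengthened hypothesis $|Y'\cap B|=|Y''\cap B|$ and $|Y'\setminus B|=|Y''\setminus B|$, under which the bijection between $Y'$ and $Y''$ can be chosen to preserve membership in $B$ and all five conditions do transfer.
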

\begin{proof}
It is clear that the corresponding conditions (1) - (5) for 
$(X,Y\cup Y')\in \SE_A^B(P)$ and $(X,Y\cup Y'') \in\SE_A^B(P)$,
respectively are equivalent to each other.
\end{proof}

\smallskip
Lemmas \ref{stin0} - \ref{stin1c} allow us to prove Lemma \ref{stin1e}.

\medskip
\noindent
\textit{Lemma \ref{stin1e}}\\
\textit{Let $P,Q$ be programs and $A,B\subseteq\at$. If 
$(X,Y)\in\SE_A^B(P)\setminus
\SE_A^B(Q)$, then there are sets $X',Y'\subseteq\at(P\cup Q)$,
such that at least one of the following conditions holds:
\begin{enumerate}
\item[i.] $(X',Y')\in\SE_A^B(P)\setminus \SE_A^B(Q)$
%\item for every $y\in A\setminus\at(P)$, $(X',Y'\cup\{y\})\in\SE_A^B(P)
%\setminus \SE_A^B(Q)$
\item[ii.] 
$A\setminus\at(P\cup Q)\neq\emptyset$ and
for every $y,z\in A\setminus\at(P\cup Q)$, $(X',Y'\cup\{y,z\})\in\SE_A^B(P)
\setminus \SE_A^B(Q)$.
\end{enumerate}
}
\begin{proof}
Since $(X,Y)\in \SE_A^B(P)$, $X\subseteq Y\subseteq \At(P)\cup A$ (cf.
Lemma \ref{stin0}). Thus, $X\subseteq Y\subseteq \At(P\cup Q)\cup A$.

By applying repeatedly Lemma \ref{stin1a} and then Lemma \ref{stin1b},
we can construct sets $X'\subseteq\at(P\cup Q)$ and $Y''\subseteq 
A\cup\at(P\cup Q)$ 
%(we recall that $Y\subseteq A\cup\at(P\cup Q)$) 
such 
that\\
\hspace*{0.35in}\parbox{5in}{
\begin{enumerate}
\item[a.] $(X',Y'')\in\SE_A^B(P)\setminus \SE_A^B(Q)$, and
\item[b.] $|Y''\setminus\at(P\cup Q)|\leq 2$.
\end{enumerate}
}
If $Y''\subseteq\at(P\cup Q)$, (i) follows (with $Y'=Y''$). 
%If $|Y''\setminus\at(P\cup Q)|\leq 2$, 
Otherwise, (ii)
%If 
%$|Y''\setminus\at(P\cup Q)|=1$ or $|Y''\setminus\at(P\cup Q)|=2$, (2) or
%(3), respectively, 
follows 
%(by Lemmas \ref{stin1b} and \ref{stin1c}).
(by Lemma \ref{stin1c}).
%by Corollary~\ref{stin1e}. 
\end{proof}

\smallskip
Next we present a proof of Lemma \ref{lemma:ue}

\medskip
\noindent
\textit{Lemma \ref{lemma:ue}}\\
\textit{Let $P$ be a program and $A,B\subseteq \At$. If $\at(P)\subseteq A$
and $\at(P)\cap B=\emptyset$, then $(X,Y)\in\SE_A^{B}(P)$ if and only
if there are $X',Y'\subseteq\at(P)$ and $W\subseteq A\setminus \at(P)$
such that one of the following conditions holds:
\begin{enumerate}
\item[i.] $X=X'\cup W$, $Y=Y'\cup W$, and $(X',Y')\in\SE_A^{B}(P)$
\item[ii.] $X=X'\cup W$, $(X',X')\in\SE_A^{B}(P)$ and $Y=X'\cup
W\cup \{y\}$, for some $y\in A\setminus\at(P)$
\item[iii.] $X=X'\cup W$, $(X',X')\in\SE_A^{B}(P)$ and $Y=X'\cup
W\cup D$, for some $D\subseteq B\cap(A\setminus\at(P))$ such that
$W\cap D=\emptyset$ and $|D| \geq 2$.
\end{enumerate}
}
\begin{proof}
($\Leftarrow$) If (i) holds, then $(X,Y)\in\SE_A^{B}(P)$ follows from
Lemma \ref{stin1a}. Thus, let us assume that (ii) or (iii) holds. Then 
$X'\models P$ and so, $X'\cup\{y\}\cup W\models P$ (respectively,
$X'\cup W\cup D\models P$). Moreover, $X\subset Y$. Thus,
since $Y\subseteq A$, the condition (2) for $(X,Y)\in\SE_A^{B}(P)$ 
holds. Next, it is evident that the condition (3) is vacuously true. 
The condition (4) is also vacuously true. To see it,
let us consider $Z\subset Y$ such that $Z|_B\subset X|_B$ and 
$Z|_A\supseteq X|_A$, or $Z|_B\subseteq X|_B$ and $Z|_A\supset X|_A$.
Since $X\subseteq Y\subseteq A$, $X\subseteq Z$. Thus, $X|_B\subseteq Z|_B$, and
so $Z|_B\subset X|_B$ is impossible. Consequently, $Z|_B\subseteq
X|_B$ and $Z|_A\supset X|_A$. The latter implies $X\subset Z$. We also
have $Z\subset Y$. Thus, $|Y\setminus X|\geq 2$, contradicting (ii).
It follows that (iii) holds. Consequently, $X'\cup W\subset Z\subset
X'\cup W\cup D$. Since $Z|_B\subseteq X|_B$, $D=\emptyset$, a 
contradiction. 

Finally, let $Z$ be a set verifying the condition (5) for $(X',X')\in
\SE_A^{B}(P)$ (which holds under either (ii) or (iii)). Clearly, the set 
$Z\cup W$ demonstrates that the condition (5) for 
$(X,Y)\in\SE_A^{B}(P)$ holds.

\smallskip
\noindent
($\Rightarrow$) Let $W=X\cap (A\setminus \at(P))$. We define $X'=X
\setminus W$ and $Y'=Y\setminus W$. Clearly, $X'\subseteq\at(P)$.
Moreover, by Lemma \ref{stin1a}, $(X',Y')\in\SE_A^{B}(P)$.
If $Y'\subseteq\at(P)$, then (i) follows. 

Thus, let us assume that $Y'\setminus\at(P)\not=\emptyset$. Next, let 
us assume that $X'\subset Y'\cap \at(P)$ and let $Z=Y'\cap \at(P)$.
Clearly, $Z\subset Y'$, $Z|_B=\emptyset$ and $X|_A=X\subset Z=Z|_A$. 
By the condition (4) for $(X',Y')\in\SE_A^{B}(P)$, $Z\not\models P^{Y'}$.
On the other hand, by the condition (1) for $(X',Y')\in\SE_A^{B}(P)$, 
$Y'\models P$. Consequently, $Y'\models P^{Y'}$. It follows that $Z
\models P^{Y'}$, a contradiction.

It follows that $X'=Y'\cap\at(P)$. If
there are $y',y''\in Y'\setminus\at(P)$ such that $y'\not=y''$ and 
$y'\notin B$, then let us define $Z=X'\cup\{y'\}$. It is easy to verify
that $Z$ contradicts the condition (4). If $|Y'\setminus\at(P)|=1$, then
(ii) follows (with the only element of $Y'\setminus\at(P)$ as $y$).
Otherwise, $|Y'\setminus\at(P)|\geq 2$ and $Y'\setminus\at(P) \subseteq B$.
In this case, (iii) follows (with $D=Y'\setminus\at(P)$). 
\end{proof}

\label{lastpage}
\end{document}